\documentclass{article}
\PassOptionsToPackage{numbers, compress}{natbib}
\usepackage[preprint]{neurips_2026}

\usepackage{wrapfig}
\usepackage{graphicx}
\usepackage[utf8]{inputenc}
\usepackage[T1]{fontenc}
\usepackage{hyperref}
\usepackage{url}
\usepackage{booktabs}
\usepackage{amsfonts}
\usepackage{amsmath}
\usepackage{amssymb}
\usepackage{amsthm}
\usepackage{nicefrac}
\usepackage{microtype}
\usepackage{xcolor}
\usepackage{graphicx}
\usepackage{bm}
\usepackage{enumitem}    % for [(i)]-style enumerate
\usepackage{multirow}

\newtheorem{proposition}{Proposition}
\newtheorem{lemma}{Lemma}

\newtheorem{remark}{Remark}

\newcommand{\R}{\mathbb{R}}

\newcommand{\simplex}{\Delta}                      % use \simplex^{N-1} etc.
\newcommand{\Tstar}{\mathbf{T}^\star}

\newcommand{\Hb}{\mathbf{H}}

\newcommand{\bPhi}{\boldsymbol{\Phi}}

\title{Beyond Similarity: Temporal Operator Attention \\ for Time Series Analysis}

\author{
\makebox[\textwidth][c]{
  Jevon Twitty\textsuperscript{*1},
  Vinh Pham\textsuperscript{*1},
  Nitiwith Rotchanarak\textsuperscript{1},
  Viresh Pati\textsuperscript{1},
  Yubin Kim\textsuperscript{1},
  Shihao Yang \textsuperscript{\ddag1},
  Jiecheng Lu\footnotemark[2]\textsuperscript{\dag1}
} \\
\makebox[\textwidth][c]{
  Georgia Institute of Technology\textsuperscript{1}
} \\
\makebox[\textwidth][c]{
  \texttt{jlu414@gatech.edu\textsuperscript{\dag},
  shihao.yang@isye.gatech.edu\textsuperscript{\ddag}}
}
}

\begin{document}

\maketitle

\begingroup
\renewcommand\thefootnote{*}
\footnotetext{Both authors contributed equally to the paper.}
\renewcommand\thefootnote{\dag\ddag}
\footnotetext{Corresponding Authors}
\endgroup

\begin{abstract}
A persistent paradox in time-series forecasting is that structurally simple MLP and linear models often outperform high-capacity Transformers. We argue that this gap arises from a mismatch in the sequence-modeling primitive: while many time-series dynamics are governed by global temporal operators (e.g., filtering and harmonic structure), standard attention forms each output as a convex combination of inputs. This restricts its ability to represent signed and oscillatory transformations that are fundamental to temporal signal processing. We formalize this limitation as a \emph{simplex-constrained mixing bottleneck} in softmax attention, which becomes especially restrictive for operator-driven time-series tasks. To address this, we propose \textbf{Temporal Operator Attention (TOA)}, a framework that augments attention with explicit, learnable sequence-space operators, enabling direct signed mixing across time while preserving input-dependent adaptivity. To make dense $N \times N$ operators practical, we introduce \emph{Stochastic Operator Regularization}, a high-variance dropout mechanism that stabilizes training and prevents trivial memorization. Across forecasting, anomaly detection, and classification benchmarks, TOA consistently improves performance when integrated into standard backbones such as PatchTST and iTransformer, with particularly strong gains in reconstruction-heavy tasks. These results suggest that explicit operator learning is a key ingredient for effective time-series modeling. The code implementation is available at this \href{https://anonymous.4open.science/r/temporal-operator-attention-37DB/README.md}{link}.
\end{abstract}

\section{Introduction}

Transformers have become a dominant architecture for sequence modeling since \citet{vaswani2017attention}. In time-series forecasting, this paradigm has inspired a large family of specialized architectures~\citep{zhou2021informer, wu2021autoformer, zhou2022fedformerfrequencyenhanceddecomposed, nie2023timeseriesworth64, liu2024itransformer, qiu2025duetdualclusteringenhanced}. Yet despite this progress, a persistent and somewhat paradoxical empirical result remains: simple linear and MLP-based models often match or outperform Transformer-based forecasters on long-horizon benchmarks~\citep{zeng2022transformerseffectivetimeseries, chen2023tsmixerallmlparchitecturetime}.

We argue that this gap arises from a mismatch in the underlying sequence-modeling primitive. Standard self-attention forms each output as a normalized convex combination of input tokens~\citep{vaswani2017attention}. This enables flexible, input-dependent routing, but it does not directly parameterize signed transformations over the sequence axis. In contrast, many time-series dynamics are more naturally expressed as global temporal operators---including filtering, harmonic continuation, residualization, and demixing---which often require signed or zero-sum interactions fundamental to classical signal processing~\citep{10.5555/294797}. This distinction is also consistent with the broader empirical success of direct sequence-mixing architectures in time-series modeling~\citep{zeng2022transformerseffectivetimeseries, chen2023tsmixerallmlparchitecturetime, nie2023timeseriesworth64, gu2022efficientlymodelinglongsequences}.

This structural limitation prevents softmax attention from directly realizing signed, oscillatory, or zero-sum sequence operators at the primitive level. We formalize this limitation as a \emph{simplex-constrained mixing bottleneck}. Our claim is not that deep multi-head Transformers are universally incapable of approximating such behavior, but rather that the softmax attention primitive imposes an unfavorable inductive bias for operator-driven temporal tasks. In this respect, our perspective is complementary to prior analyses showing that pure attention layers exhibit a bias toward token uniformity and rank degeneration in the absence of residual and MLP pathways~\citep{dong2023attentionneedpureattention}.

To address this mismatch, we propose \textbf{Temporal Operator Attention (TOA)}, a modification of the attention mechanism that augments similarity-based routing with explicit, learnable sequence-space operators. Rather than relying solely on nonnegative similarity matching, TOA introduces dense signed transformations over the temporal axis, enabling direct modeling of operator-driven dynamics while preserving input-dependent adaptivity. At a high level, this view is also compatible with operator-theoretic perspectives on representation learning, where complex dynamics are modeled through linear structure in an appropriate latent space~\citep{Lusch_2018}.

A key challenge in learning dense $N \times N$ operators is overfitting. We address this through \textbf{Stochastic Operator Regularization}, a high-variance dropout mechanism applied directly to the learnable sequence offsets while preserving the residual identity path. This stabilizes training and encourages the model to learn robust structural filters rather than brittle instance-specific patterns.

We evaluate TOA across forecasting, anomaly detection, and classification benchmarks, integrating it into standard backbones such as PatchTST~\citep{nie2023timeseriesworth64}, iTransformer~\citep{liu2024itransformer}, and DUET~\citep{qiu2025duetdualclusteringenhanced}. Across these settings, TOA consistently improves performance, with particularly strong gains in reconstruction-heavy tasks. These results support the view that explicit operator learning is a key ingredient for effective time-series modeling.

\textbf{Contributions.}
We (1) identify a simplex-constrained mixing bottleneck in softmax attention for operator-driven time-series modeling, (2) connect this limitation to the absence of explicit sequence-space operator parameterization, (3) propose TOA to bridge similarity-based routing with explicit dense temporal operators, and (4) introduce Stochastic Operator Regularization to enable stable learning of dense operators, yielding consistent gains across multiple time-series tasks.

\begin{figure}[t]
    \centering
    % Adjust width as needed (e.g., 1.0\linewidth)
    \includegraphics[width=0.95\linewidth]{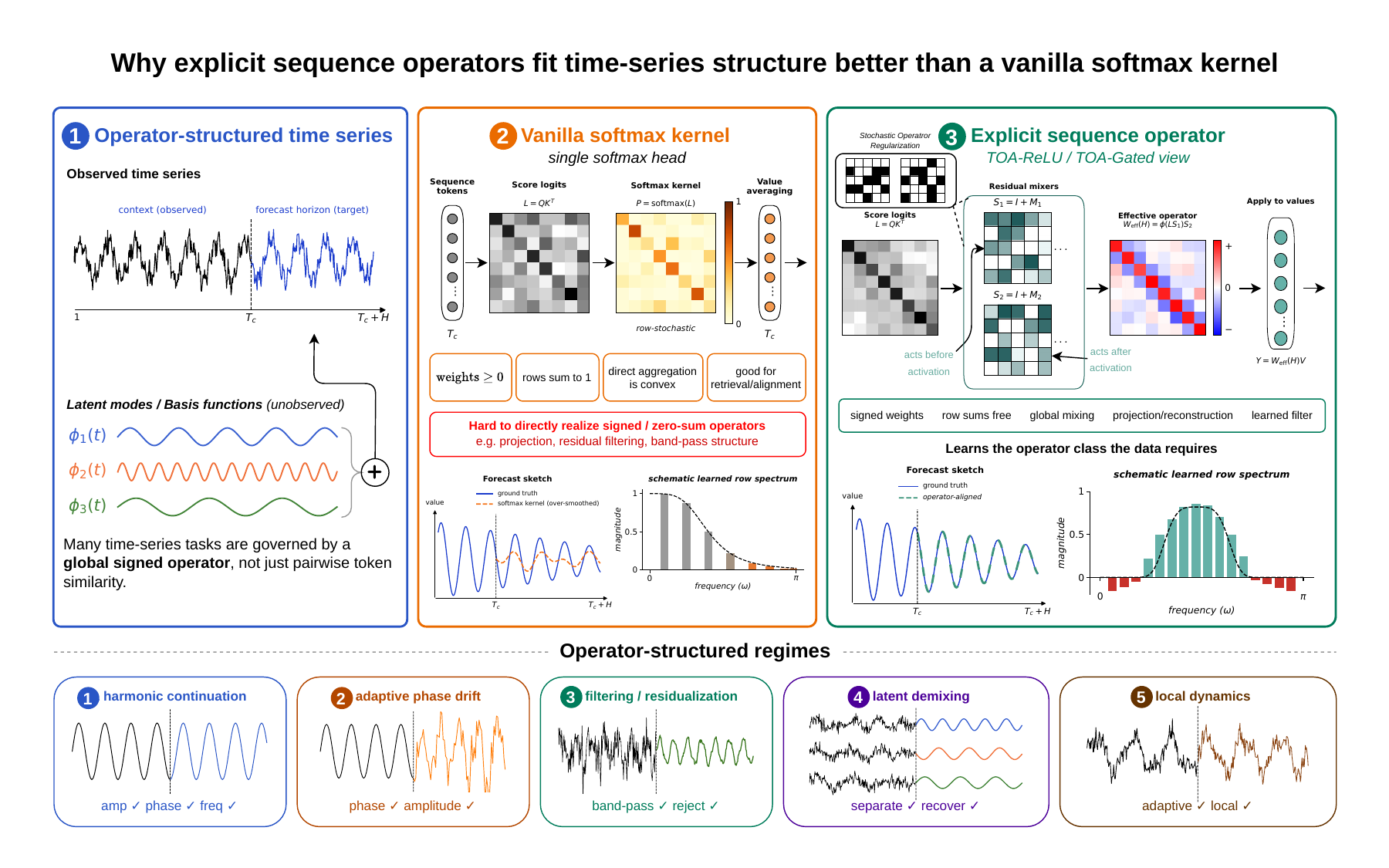}
    \caption{(a) Standard attention relies on simplex-constrained similarity matching, forcing the sequence mixer to act as an implicit low-pass filter that struggles with complex temporal shifts. (b) TOA replaces this implicit routing with an explicit, signed sequence operator. This unlocks negative weights, allowing the model to perform precise band-pass filtering and robust harmonic demixing.}
    \label{fig:architecture}
    \vspace{-15pt}
\end{figure}

\section{Related Works}

Recent work has explored stronger sequence-mixing primitives for time series from several directions. MLP-based models such as TSMixer~\citep{chen2023tsmixerallmlparchitecturetime} and DLinear~\citep{zeng2022transformerseffectivetimeseries} show that explicit mixing over the sequence axis can provide a strong inductive bias, but these operators are typically static and input-independent. To introduce input-dependent adaptivity, attention can be viewed as a form of dynamic weight generation, as emphasized by fast-weight perspectives and recent methods such as HyperMLP~\citep{lu2026hypermlpintegratedperspectivesequence} and related input-conditioned parameterizations, though these often rely on low-rank or otherwise constrained mixing structure. A separate line of work modifies Transformers for time-series forecasting through efficiency improvements, decomposition, or frequency-aware design, as in Informer~\citep{zhou2021informer}, Autoformer~\citep{wu2021autoformer}, FEDformer~\citep{zhou2022fedformerfrequencyenhanceddecomposed}, LogTrans~\citep{li2019enhancing}, and ARM~\citep{lu2026armrefiningmultivariateforecasting}, while still preserving the core similarity-based aggregation mechanism. Our approach is complementary to these lines: TOA targets the sequence-mixing primitive itself by combining input-dependent routing with explicit dense, signed temporal operators, enabling global interactions that are not directly available under standard simplex-constrained attention. A detailed and extended part for related works can be found in Appendix~\ref{extended related works}.

\section{Methodology}
\label{sec:method}

\subsection{Notation and Proof Strategy}
\label{subsec:notation}

We consider sequence-modeling primitives that appear in time-series tasks (forecasting, classification, and anomaly detection) and analyze the sequence-mixing operator shared across their architectures. 

\textbf{Notation.}
Let $\mathbf{x} \in \mathbb{R}^L$ denote the scalar observed sequence and $\mathbf{X} \in \mathbb{R}^{L \times C}$ its multivariate counterpart with $C$ channels. All formal statements are given for the univariate case; multivariate extensions follow by channel composition. After any fixed tokenization, let $\mathbf{H} = [\mathbf{h}_1, \ldots, \mathbf{h}_N]^\top \in \mathbb{R}^{N \times d}$ denote the token matrix, with $N$ positions and embedding dimension $d$. We consider three tokenization schemes used by standard backbones:
\emph{point tokenization} ($N = L$, each $\mathbf{h}_t$ encodes a time step);
\emph{patch tokenization} (PatchTST style: for non-overlapping patches of length $p$, $N = \lfloor L/p \rfloor$, each $\mathbf{h}_i$ encodes a window of $p$ consecutive time steps);
and \emph{inverted tokenization} (iTransformer style: $N = C$, each $\mathbf{h}_c$ encodes the full time series for variate $c$).
We use \emph{operator} for a linear map $\mathbb{R}^N \to \mathbb{R}^N$ that aggregates value tokens, \emph{kernel} for the entry-wise representation $a_{m,n}$ of such an operator, and \emph{predictor} for the end-to-end task map $\mathbf{T}^\star$. Write $\Delta^{N-1} = \{\mathbf{p} \in \mathbb{R}^N_{\geq 0} : \mathbf{1}^\top\mathbf{p} = 1\}$ for the standard $(N-1)$-simplex. We say a property holds \emph{generically} for a parameter $\theta$ ranging over $\mathbb{R}^k$ if it holds outside a Lebesgue-measure-zero subset of $\mathbb{R}^k$. Detailed theoretical derivations regarding the limits of softmax attention across these tokenization schemes can be found in Appendix~\ref{tokenization}.

\textbf{Proof strategy and scope.}
All formal limitation results are stated for a single-layer, single-head setting with scalar values $v_n = h_n^{(1)}$ (the first coordinate of $\mathbf{h}_n$); this convention isolates the primitive-level structural effect cleanly without obscuring it under the value projection $\mathbf{W}_V$. The results do \emph{not} assert universal impossibility for deep multi-head stacks: multi-head softmax can in principle implement signed combinations through $\mathbf{W}_O$, and multi-layer stacks can re-combine signed operators via subsequent MLPs. The empirical motivation for the dense primitive is therefore inductive bias in the finite-data regime, not pure expressivity. 

\subsection{Softmax Attention as Similarity Transport}
\label{subsec:softmax}

For a single self-attention head with projections $\mathbf{W}_Q, \mathbf{W}_K \in \mathbb{R}^{d_h \times d}$ and scalar values $v_n = h_n^{(1)}$, the score matrix
$$ \mathbf{A}(\mathbf{H}) = \mathbf{H}\mathbf{W}_Q^\top \mathbf{W}_K \mathbf{H}^\top \in \mathbb{R}^{N \times N} $$
has entries $\mathbf{A}_{m,n} = \langle \mathbf{W}_Q\mathbf{h}_m,\,\mathbf{W}_K\mathbf{h}_n\rangle$, and is row-normalized via softmax to yield the kernel
$$ a_{m,n}(\mathbf{H}) = \frac{\exp(\mathbf{A}_{m,n})}{\sum_{j=1}^{N} \exp(\mathbf{A}_{m,j})}, \qquad o_m(\mathbf{H}) = \sum_{n=1}^{N} a_{m,n}(\mathbf{H})\, v_n. $$
Each output $o_m(\mathbf{H})$ is therefore a convex combination of value tokens, with mixing weights determined by query-key similarity --- a \emph{similarity transport} interpretation in which information flows along similarity-weighted, mass-preserving paths. The defining structural property is that each row of the kernel lies in the simplex --- $a_{m,n}(\mathbf{H}) \geq 0$ and $\sum_{n=1}^{N} a_{m,n}(\mathbf{H}) = 1$ --- \emph{unconditionally}, regardless of $\mathbf{W}_Q, \mathbf{W}_K$ or input $\mathbf{H}$. This constraint later drives the impossibility result of \S\ref{subsec:impossibility}.
\subsection{Attention as a Dynamic Sequence MLP: Completing the Parameterization}
\label{subsec:mlp_perspective}

A complementary view of self-attention frames it not as similarity routing but as a \emph{dynamically parameterized} sequence-mixing layer operating directly over the sequence dimension. In a static MLP spatial mixer, the output sequence is generated by $\mathbf{o} = \mathbf{W}_{\mathrm{mix}} \mathbf{v}$, where $\mathbf{W}_{\mathrm{mix}} \in \mathbb{R}^{N \times N}$ is a free, learned weight matrix that mixes information across temporal positions. Softmax attention realizes this view through a specific factorization in which the mixing weights are generated dynamically from the input itself:
$$ \mathbf{W}_{\mathrm{mix}}^{\mathrm{soft}}(\mathbf{H}) = \mathrm{softmax}\bigl(\mathbf{H}\mathbf{W}_Q^\top \mathbf{W}_K \mathbf{H}^\top\bigr). $$

This factorized parameterization, however, is structurally incomplete. It restricts every row of $\mathbf{W}_{\mathrm{mix}}^{\mathrm{soft}}(\mathbf{H})$ to the probability simplex for every input $\mathbf{H}$. Moreover, although the post-softmax kernel is generically full-rank for any fixed $\mathbf{H}$, the score matrix $\mathbf{H}\mathbf{W}_Q^\top \mathbf{W}_K \mathbf{H}^\top$ has rank at most $d_h$, so the dependence of $\mathbf{W}_{\mathrm{mix}}^{\mathrm{soft}}$ on $\mathbf{H}$ is mediated entirely by a rank-$d_h$ object. Because $d_h \ll N$, this severely constrains the directions in which the dynamic mixer can vary as $\mathbf{H}$ changes.

To complete the parameterization in this factorized space, we require unconstrained, signed sequence operators that supply the static degrees of freedom the softmax bottleneck removes. This structural approach is directly motivated by recent architectural advances such as HyperMLP~\citep{lu2026hypermlpintegratedperspectivesequence}, which successfully reframed language-model attention as a dynamic MLP. However, to effectively capture time-series dynamics, we must significantly diverge from that framework. 

Specifically, \textbf{Temporal Operator Attention (TOA)} distinguishes itself by: (1) focusing explicitly on the continuous, oscillatory geometry of time-series tasks; (2) utilizing unconstrained, full dense mixing matrices rather than causal DPLR structures; and (3) employing Stochastic Operator Regularization to stabilize the learning of these dense operators, which would otherwise catastrophically overfit in typical time-series forecasting settings. 

We complete the parameterization by introducing learnable static matrices $\mathbf{S}_1, \mathbf{S}_2 \in \mathbb{R}^{N \times N}$ that sandwich the attention activation on the right of the score matrix:
$$ \mathbf{W}_{\mathrm{mix}}^{\mathrm{TOA}}(\mathbf{H}) = \sigma\bigl(\mathbf{H}\mathbf{W}_Q^\top \mathbf{W}_K \mathbf{H}^\top\,\mathbf{S}_1\bigr)\,\mathbf{S}_2, $$
with $\sigma$ a non-saturating, non-normalizing activation (ReLU in TOA-ReLU; a softplus--ReLU product in TOA-Gated). This dual projection formally completes the spatial mixing: $\mathbf{S}_1$ redistributes each query's similarity profile across key positions before the activation, and $\mathbf{S}_2$ applies a signed, full-rank linear map to the gated similarities before value aggregation. The activation $\sigma$ breaks the row-normalization constraint of softmax, and $\mathbf{S}_2$ then supplies the signed degrees of freedom required to leave the nonnegative orthant entirely --- jointly recovering the operators of \S\ref{subsec:regimes}. This yields a hybrid operator combining adaptive context matching with unconstrained, signed mixing.
\subsection{Temporal Operator Attention (TOA)}
\label{subsec:method-toa}

To resolve the incomplete parameterization of softmax attention, we replace the strict row-normalization with learnable dense sequence-mixing operators inserted around the attention activation. Let $\mathbf{M}_1^{(h)}, \mathbf{M}_2^{(h)} \in \mathbb{R}^{N \times N}$ be learnable matrices for head $h$, and define the residual operators
$$ \mathbf{S}_1^{(h)} = \mathbf{I}_N + \mathbf{M}_1^{(h)}, \qquad \mathbf{S}_2^{(h)} = \mathbf{I}_N + \mathbf{M}_2^{(h)}, $$
with entries of $\mathbf{M}_i^{(h)}$ initialized i.i.d.\ from $\mathcal{N}(0, \sigma_M^2)$ at a small scale ($\sigma_M = 10^{-3}$). At initialization $\mathbf{S}_i^{(h)} \approx \mathbf{I}_N$, so signal transport begins from the identity operator and the mixers depart from it as gradients shape $\mathbf{M}_i^{(h)}$ toward task-specific structural filtering.

\textbf{Stochastic operator regularization.}
A primary challenge in applying dense $N \times N$ sequence mixers to time-series tasks is their high propensity to overfit. We address this with a per-forward-pass stochastic regularizer applied directly to the offset $\mathbf{M}_i^{(h)}$, leaving the residual identity $\mathbf{I}_N$ untouched. At each forward pass, we sample a single drop rate $p \sim U[0, 1)$ and an independent Bernoulli mask $\mathbf{B} \in \{0,1\}^{N \times N}$ with entries $B_{jk} \stackrel{\mathrm{iid}}{\sim} \mathrm{Bern}(1 - p)$, and replace the offset with its inverted-dropout image
$$ \tilde{\mathbf{M}}_i^{(h)} = \frac{1}{1 - p}\bigl(\mathbf{M}_i^{(h)} \odot \mathbf{B}\bigr), \qquad \tilde{\mathbf{S}}_i^{(h)} = \mathbf{I}_N + \tilde{\mathbf{M}}_i^{(h)}. $$
Inverted scaling preserves the offset in expectation, $\mathbb{E}[\tilde{\mathbf{M}}_i^{(h)} \mid \mathbf{M}_i^{(h)}, p] = \mathbf{M}_i^{(h)}$, while the per-entry conditional variance $\mathrm{Var}(\tilde{M}_{i,jk}^{(h)} \mid M_{i,jk}^{(h)}, p) = \tfrac{p}{1-p}\,(M_{i,jk}^{(h)})^2$ diverges as $p \to 1^-$. This unboundedness is deliberate: it acts as an implicit regularizer that discourages over-reliance on $\mathbf{M}_i^{(h)}$, since with high probability over $p$ the offset is severely attenuated or scaled to large magnitude. Crucially, because the residual identity is added \emph{outside} the dropout, when $\tilde{\mathbf{M}}_i^{(h)}$ collapses ($p \to 1^-$) the operator reduces to $\tilde{\mathbf{S}}_i^{(h)} \to \mathbf{I}_N$, recovering an identity-mixed fallback that supplies stable signal transport throughout training.

\textbf{Structural Variants and Activation Transfer.}
Having established the pre- and post-activation mixing matrices, we must define the core activation function. We draw direct inspiration from recent findings in language modeling (e.g., HyperMLP~\citep{lu2026hypermlpintegratedperspectivesequence}), which demonstrated that replacing softmax with static ReLU projections or dynamic gating mechanisms significantly enhances expressive capacity. We adapt these two structural choices from their original causal, autoregressive DPLR setting to our dense, bidirectional time-series regime. To rigorously isolate the benefits of unconstrained operators from the choice of activation, we instantiate three variants of TOA (TOA-Softmax, TOA-ReLU, TOA-Gated). Let $d_v$ denote the projected value dimension per head. For a given head $h$, define the pre-activation score matrix as $\mathbf{A}^{(h)} = \mathbf{H}\mathbf{W}_Q^{(h)\top}\mathbf{W}_K^{(h)}\mathbf{H}^\top \in \mathbb{R}^{N \times N}$.

\textbf{Forward pass (TOA-Softmax).}
The most direct modification retains the standard row-normalizing softmax but sandwiches it with our dense sequence operators:
$$ \mathbf{O}^{(h)} = \mathrm{softmax}\bigl(\mathbf{A}^{(h)}\,\mathbf{S}_1^{(h)}\bigr)\,\mathbf{S}_2^{(h)}\;\mathbf{H}\mathbf{W}_V^{(h)\top} \;\in\; \mathbb{R}^{N \times d_v}. $$
This variant serves as a strict structural ablation to determine whether simply adding degrees of freedom via $\mathbf{S}_1$ and $\mathbf{S}_2$ is sufficient, or if the simplex constraint itself must be fundamentally broken.

\textbf{Forward pass (TOA-ReLU).}
Adapting the static regime from prior work, this variant replaces the normalizing softmax with a simple ReLU activation. This variant operates in a \emph{static} regime because the sequence mixing relies on a single, fixed structural pathway per head, applying a constant non-linearity to the score matrix:
$$ \mathbf{O}^{(h)} = \underbrace{\mathrm{ReLU}\!\bigl(\mathbf{A}^{(h)}\,\mathbf{S}_1^{(h)}\bigr)}_{\text{nonlinear score with pre-activation mixing}} \;\mathbf{S}_2^{(h)}\;\underbrace{\mathbf{H}\,\mathbf{W}_V^{(h)\top}}_{\text{values}} \;\in\; \mathbb{R}^{N \times d_v}. $$
By removing the row-normalization constraint, TOA-ReLU allows $\mathbf{S}_2^{(h)}$ to project the nonnegative similarities into a fully unconstrained, signed space, completing the dual parameterization of \S\ref{subsec:mlp_perspective}. 

\textbf{Forward pass (TOA-Gated).}
While TOA-ReLU provides signed mixing, it cannot dynamically alter its structural operator based on the input sequence's macroscopic properties. To adapt dynamic regime selection to our setting, TOA-Gated doubles the QK head count to $2H$ and partitions them into a left group and a right group. For head $h \in \{1,\ldots,H\}$, define
$$ \mathbf{L}^{(h)} = \mathbf{H}\mathbf{W}_{Q,L}^{(h)\top}\mathbf{W}_{K,L}^{(h)}\mathbf{H}^\top, \qquad \mathbf{R}^{(h)} = \mathbf{H}\mathbf{W}_{Q,R}^{(h)\top}\mathbf{W}_{K,R}^{(h)}\mathbf{H}^\top, $$
with independent QK projections per group. The pre-activation mixer $\mathbf{S}_1$ carries independent parameters $\mathbf{S}_1^{(h,L)}, \mathbf{S}_1^{(h,R)}$ across the two groups; the post-activation mixer $\mathbf{S}_2^{(h)}$ uses $H$ heads matching the value path. The per-head pipeline is:
$$ \mathbf{O}^{(h)} = \Bigl(\mathrm{softplus}\!\bigl(\mathbf{R}^{(h)}\,\mathbf{S}_1^{(h,R)}\bigr) \,\odot\, \mathrm{ReLU}\!\bigl(\mathbf{L}^{(h)}\,\mathbf{S}_1^{(h,L)}\bigr)\Bigr)\;\mathbf{S}_2^{(h)}\;\mathbf{H}\,\mathbf{W}_V^{(h)\top}. $$
The left branch supplies a nonnegative similarity profile via ReLU; the right branch supplies a strictly positive sample-dependent gate via softplus. Their elementwise product forms a dynamically gated mixing kernel, upon which $\mathbf{S}_2^{(h)}$ applies a signed, full-rank linear map before final aggregation.

\textbf{Multi-Head Aggregation and Output Projection.}
Following standard attention paradigms, the sequence representations from all $H$ heads are concatenated along the feature dimension and linearly projected to recover the original embedding dimension $d$. The final output of the TOA sequence-mixing layer is:
$$ \mathbf{O} = \mathrm{Concat}\bigl(\mathbf{O}^{(1)}, \dots, \mathbf{O}^{(H)}\bigr)\mathbf{W}_O, $$
where $\mathbf{W}_O \in \mathbb{R}^{H d_v \times d}$ is the learnable output projection matrix. This projection recombines the specialized multi-head features into a unified token space.

\begin{figure}[ht]
    \centering
    % Adjust width as needed (e.g., 1.0\linewidth)
    \includegraphics[width=\linewidth]{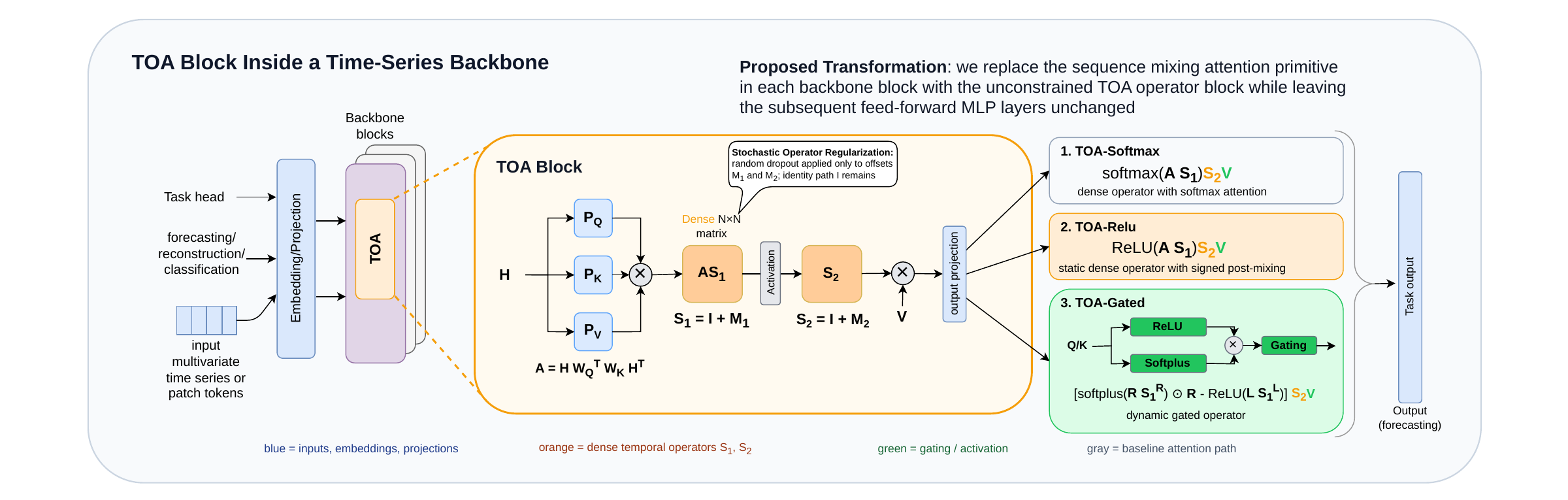}
    \caption{\textbf{The TOA Block}. TOA replaces the standard attention primitive in time-series backbones while leaving the MLP layers unchanged. The internal block architecture (zoomed) illustrates the use of dense, unconstrained sequence operators ($\mathbf{S}_1, \mathbf{S}_2$) to achieve explicit, mixed-sign temporal routing, stabilized via Stochastic Operator Regularization on the learned offsets.}
    \label{fig:model-architecture}
\end{figure}

\subsection{A General Impossibility for Softmax Attention}
\label{subsec:impossibility}

\begin{proposition}[Simplex-constrained mixing limitation]
\label{prop:simplex}
Consider a single-head self-attention layer with scalar values $v_n = h_n^{(1)}$. Let $\mathbf{T}^\star \in \mathbb{R}^{N \times N}$ satisfy $\mathbf{T}^\star_{m,:} \notin \Delta^{N-1}$ for some row $m$. Then for any choice of $\mathbf{W}_Q, \mathbf{W}_K \in \mathbb{R}^{d_h \times d}$, there is no parameter setting such that
\begin{equation}
\label{eq:hypothesis}
    o_m(\mathbf{H}) = \mathbf{T}^\star_{m,:}\,\mathbf{v}(\mathbf{H}) \qquad \text{for all } \mathbf{H} \in \mathbb{R}^{N \times d},
\end{equation}
where $\mathbf{v}(\mathbf{H}) = (h_1^{(1)}, \ldots, h_N^{(1)})^\top$.
\end{proposition}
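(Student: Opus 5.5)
The plan is to test the hypothesised identity \eqref{eq:hypothesis} on a small family of inputs $\mathbf{H}$ whose first coordinates form standard basis vectors (or a constant vector). On these inputs we can read off the softmax kernel row $a_{m,:}(\mathbf{H})$ directly and compare it with $\mathbf{T}^\star_{m,:}$. The only structural fact we use is the one from \S\ref{subsec:softmax}: for every $\mathbf{H}$ and every $\mathbf{W}_Q,\mathbf{W}_K$, the row $a_{m,:}(\mathbf{H})$ lies in $\Delta^{N-1}$. Since $\mathbf{T}^\star_{m,:}\notin\Delta^{N-1}$, at least one of two cases holds: (i) some entry $T^\star_{m,k}<0$, or (ii) all entries are nonnegative but $\mathbf{1}^\top\mathbf{T}^\star_{m,:}\neq 1$. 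We handle the two cases separately.

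\emph{Case (ii), row sum.} Pick $\mathbf{H}$ with $h^{(1)}_n=1$ for all $n$; the remaining coordinates are arbitrary, say zero. Then $\mathbf{v}(\mathbf{H})=\mathbf{1}$, so $o_m(\mathbf{H})=\sum_n a_{m,n}(\mathbf{H})=1$. The right-hand side of \eqref{eq:hypothesis} is $\mathbf{T}^\star_{m,:}\mathbf{1}\neq1$, which is a contradiction. Note that this argument never actually uses nonnegativity of $\mathbf{T}^\star_{m,:}$, so it covers every row whose sum differs from one.

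\emph{Case (i), negative entry.} Take $\mathbf{H}=\mathbf{e}_k\mathbf{u}^\top$ with $\mathbf{u}\in\R^d$ having $u^{(1)}=1$, so that $\mathbf{v}(\mathbf{H})=\mathbf{e}_k$. Then $o_m(\mathbf{H})=a_{m,k}(\mathbf{H})>0$, because softmax entries are strictly positive. The target value is $T^\star_{m,k}<0$, so \eqref{eq:hypothesis} fails at this $\mathbf{H}$. Alternatively, one can scale $\mathbf{v}$ by $\lambda>0$: the kernel then depends on $\lambda$, but the sign argument survives unchanged. In either case no setting of $\mathbf{W}_Q,\mathbf{W}_K$ can realise $\mathbf{T}^\star_{m,:}$ for all inputs. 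The only free parameters of the layer in this scalar-value setting are $\mathbf{W}_Q$ and $\mathbf{W}_K$, so the quantifier ``any parameter setting'' adds nothing beyond them.

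The main subtlety is less a technical obstacle than making sure the counterexample inputs are legitimate. We need $\mathbf{H}$ to range over all of $\R^{N\times d}$, and we must not assume the kernel is independent of $\mathbf{H}$; the argument above does neither. We should also remark that exact zero entries together with row sum one form the boundary of the simplex. Such rows lie in $\Delta^{N-1}$ and so are excluded by hypothesis, even though softmax cannot attain them exactly either. The strict-positivity observation therefore gives a slightly stronger statement, which can be noted in passing.
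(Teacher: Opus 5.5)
Your proof is correct, but it takes a different route from the paper.

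The paper argues by a small-scale limit. Replacing $\mathbf{H}$ by $\alpha\mathbf{H}$ makes the values scale like $\alpha$ while the logits scale like $\alpha^2$. After dividing by $\alpha$ and letting $\alpha\to0^+$, the kernel row collapses to $\mathbf{1}^\top/N$. Surjectivity of $\mathbf{H}\mapsto\mathbf{v}(\mathbf{H})$ then forces $\mathbf{T}^\star_{m,:}=\mathbf{1}^\top/N\in\Delta^{N-1}$, a contradiction.

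You instead evaluate the identity at two explicit witnesses:
\begin{itemize}
\item $\mathbf{v}(\mathbf{H})=\mathbf{1}$ detects a row sum different from one, since $\sum_n a_{m,n}=1$;
\item $\mathbf{v}(\mathbf{H})=\mathbf{e}_k$ detects a negative entry, since $a_{m,k}>0$.
\end{itemize}
These two cases exhaust the complement of $\Delta^{N-1}$.

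Your argument is more elementary. It needs no limits, no continuity of softmax at zero, and no quadratic homogeneity of the scores; it uses only the fact that the kernel row lies in the simplex. It therefore carries over unchanged to any row-stochastic kernel, such as cross-attention or scores with additive biases or positional terms. That is exactly the robustness the paper asserts when it says the argument depends ``only on the row-stochastic property.'' Your witnesses also depend only on $\mathbf{T}^\star_{m,:}$, so a single falsifying input works simultaneously for every $\mathbf{W}_Q,\mathbf{W}_K$.

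The paper's route, in exchange, proves more. The only row realizable for all $\mathbf{H}$ is the uniform row $\mathbf{1}^\top/N$, so non-uniform rows in the interior of the simplex are excluded too. This is stronger than your boundary remark. Your own witness yields it with one more line: for $\mathbf{H}=\mathbf{e}_k\mathbf{u}^\top$ and $k\neq m$, row $m$ of $\mathbf{H}$ is zero. Hence the $m$-th score row vanishes and $o_m(\mathbf{H})=1/N$, forcing $T^\star_{m,k}=1/N$. Your row-sum test then gives $T^\star_{m,m}=1/N$.
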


\textbf{Proof Sketch.}
Suppose toward contradiction that $o_m(\mathbf{H}) = \mathbf{T}^\star_{m,:}\,\mathbf{v}(\mathbf{H})$ for all $\mathbf{H}$. Substituting $\alpha\mathbf{H}$ and dividing by $\alpha > 0$, the values cancel from both sides while the softmax sees logits scaling as $\alpha^2$. As $\alpha \to 0^+$ the kernel collapses to the uniform row $\mathbf{1}^\top/N$, forcing $\tfrac{1}{N}\mathbf{1}^\top\mathbf{v}(\mathbf{H}) = \mathbf{T}^\star_{m,:}\,\mathbf{v}(\mathbf{H})$ for every $\mathbf{H}$. Surjectivity of $\mathbf{H}\mapsto\mathbf{v}(\mathbf{H})$ onto $\mathbb{R}^N$ then forces $\mathbf{T}^\star_{m,:} = \mathbf{1}^\top/N \in \Delta^{N-1}$, contradicting the premise. Full proof in Appendix~\ref{app:proofs}.
\subsection{Time-Series Regimes Requiring Non-Simplex Operators}
\label{subsec:regimes}

We identify five canonical time-series regimes where the optimal sequence operator $\mathbf{T}^\star$ inherently violates the simplex constraint. We show that while softmax attention is mathematically forbidden from reaching these optima, the proposed \textbf{TOA} framework realizes them exactly. Detailed Lemmas with Propositions are provided in Appendix~\ref{app:cases-bd}.

\noindent\textbf{Case A: Seasonal and Harmonic Continuation.}
As proven in Lemma~\ref{lem:app-caseA-failure}, because sampled sinusoids are orthogonal to the DC component~\citep{10.5555/294797}, the optimal operator requires strictly zero-sum rows ($\mathbf{T}^\star \mathbf{1} = \mathbf{0}$), a property unattainable by softmax. Conversely, Proposition~\ref{prop:app-caseA-realization} shows that the unconstrained mixer in \textbf{TOA} natively recovers these kernels.

\noindent\textbf{Case B: Filtering and Residualization.}
Lemma~\ref{lem:app-caseB-failure} proves that high-pass filtering requires negative off-diagonal entries and zero row sums, placing it entirely outside $\Delta^{N-1}$. As shown in Proposition~\ref{prop:app-caseB-realization}, \textbf{TOA} realizes these via the "negative mass" in $\mathbf{M}_2$.

\noindent\textbf{Case C: Multivariate Latent Factor Demixing.}
Lemma~\ref{lem:app-caseC} establishes that for generic mixing matrices $\mathbf{A}$, the MMSE projection $\mathbf{P}_{\mathbf{A}}$ fails simplex membership on three counts: non-unit row sums, non-binary diagonals, and negative entries. Proposition~\ref{prop:app-caseC-realization} proves that \textbf{TOA} can represent these channel-wise projections.

\noindent\textbf{Case D: Phase Drift and Time Warping.}
Lemma~\ref{lem:app-caseD-failure} proves that softmax is unable to adapt to this dynamic family of operators across signal scales. Proposition~\ref{prop:app-caseD-realization} shows that \textbf{TOA-Gated} successfully routes the input to specialized, sign-changing sub-operators.

\noindent\textbf{Case E: Multi-Phase Local Feature Dynamics.}
Lemma~\ref{lem:app-caseE-failure} proves that the optimal rank-2 quadrature projection onto local atoms vanishes under the $\mathbf{1}$-vector. As established in Proposition~\ref{prop:app-caseE-realization}, the localized gating and unconstrained mixer of \textbf{TOA-Gated} allow the architecture to trigger specialized signed filters per transient.
\section{Synthetic Evaluation}
\subsection{Synthetic Tasks: Multi-Regime Harmonic Demixing}
\label{subsubsec:synthetic-demixing}

To validate the theoretical boundaries established in Section~\ref{sec:method}, we evaluate the models' ability to recover oscillatory dynamics from heavily noised, non-stationary signals. 

\begin{wrapfigure}{r}{0.6\textwidth}
    \centering
    \vspace{-15pt}
    
    % --- Column Headers ---
    \begin{minipage}{0.32\linewidth}
        \centering
        \scriptsize\textbf{TOA-Gated}
    \end{minipage}\hfill
    \begin{minipage}{0.32\linewidth}
        \centering
        \scriptsize\textbf{TOA-ReLU}
    \end{minipage}\hfill
    \begin{minipage}{0.32\linewidth}
        \centering
        \scriptsize\textbf{Softmax Attention}
    \end{minipage}

    \vspace{0.5em}

    % --- Row 1 ---
    \begin{minipage}[t]{0.3\linewidth}
        \centering
        \includegraphics[width=\linewidth]{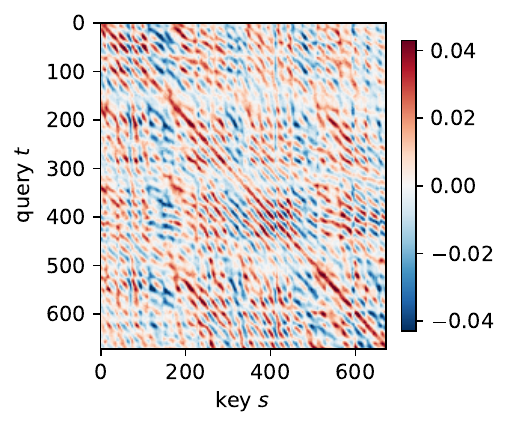}
    \end{minipage}\hfill
    \begin{minipage}[t]{0.3\linewidth}
        \centering
        \includegraphics[width=\linewidth]{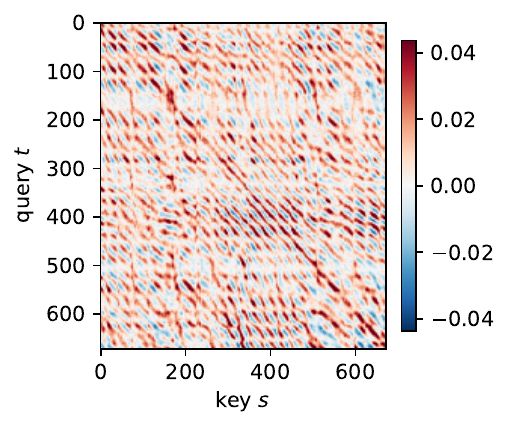}
    \end{minipage}\hfill
    \begin{minipage}[t]{0.3\linewidth}
        \centering
        \includegraphics[width=\linewidth]{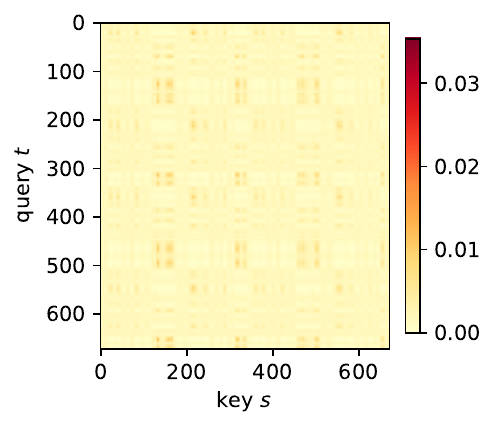}
    \end{minipage}

    \vspace{0.5em}

    % --- Row 2 ---
    \begin{minipage}[t]{0.3\linewidth}
        \centering
        \includegraphics[width=\linewidth]{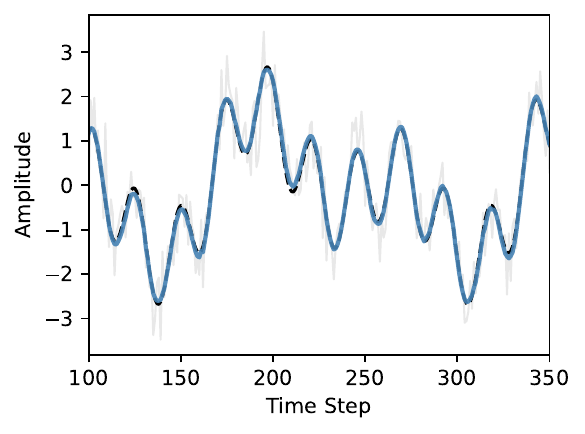}
    \end{minipage}\hfill
    \begin{minipage}[t]{0.3\linewidth}
        \centering
        \includegraphics[width=\linewidth]{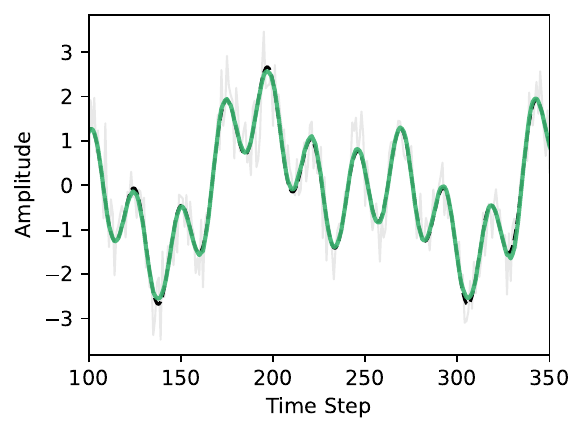}
    \end{minipage}\hfill
    \begin{minipage}[t]{0.3\linewidth}
        \centering
        \includegraphics[width=\linewidth]{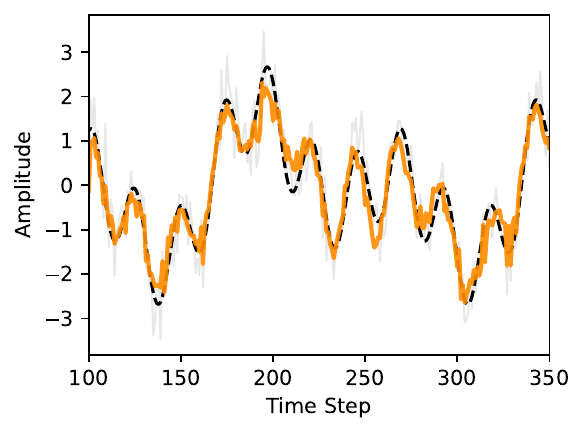}
    \end{minipage}

        % --- Row 3 ---
    \begin{minipage}[t]{0.3\linewidth}
        \centering
        \includegraphics[width=\linewidth]{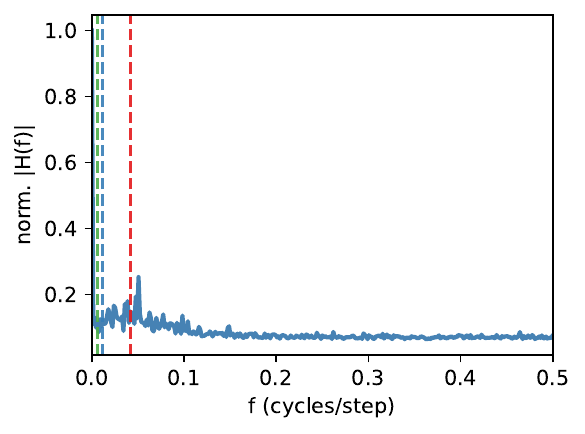}
    \end{minipage}\hfill
    \begin{minipage}[t]{0.3\linewidth}
        \centering
        \includegraphics[width=\linewidth]{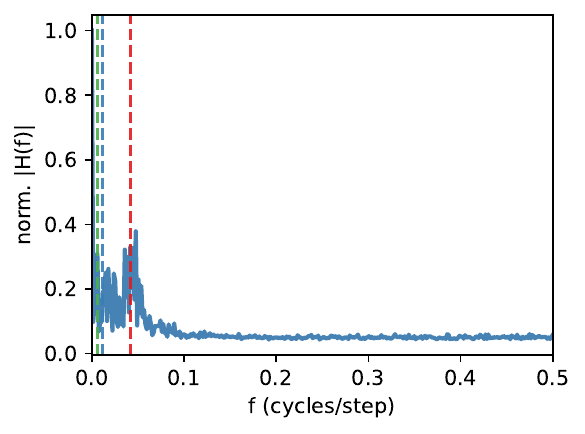}
    \end{minipage}\hfill
    \begin{minipage}[t]{0.3\linewidth}
        \centering
        \includegraphics[width=\linewidth]{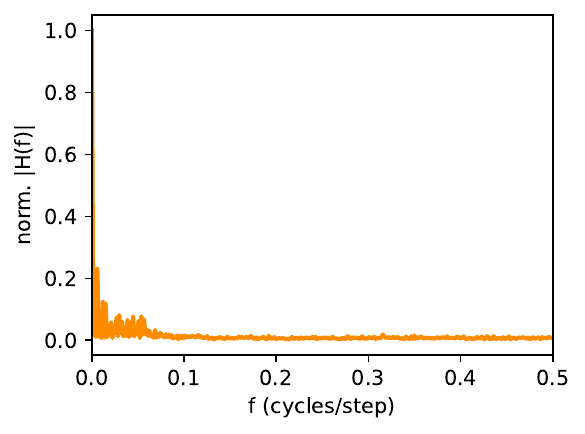}
    \end{minipage}

    \caption{\textbf{Validation of the Simplex Constraint.} Rows show learned sequence operators (top), signal reconstructions (middle), and spectral frequency responses (bottom). TOA-Gated (left) successfully synthesizes mixed-sign weights (blue/red) to subtract massive background interference. Softmax Attention (right) is trapped as a low-pass filter that fails to isolate the target.}
    \label{fig:all_plots_3x3}
    \vspace{-10pt} 
\end{wrapfigure}

\textbf{Setup and Baselines.} We generate a noisy harmonic superposition $x_t = \sum_{k=1}^{3} a_k\cos\!\bigl(\omega_k\tau_z(t) + \phi\bigr) + \varepsilon_t$, with phase drift $\phi \sim \mathrm{Uniform}[0, 2\pi)$ and high-variance noise $\varepsilon_t \sim \mathcal{N}(0, 0.5^2)$. A latent variable $z \in \{0, 1, 2\}$ controls the time-warping regime $\tau_z(t)$: stationary ($\tau_0(t) = t$), periodic vibrato ($\tau_1(t) = t + 20\sin(4\pi t / L)$), or quadratic chirp ($\tau_2(t) = t + 40(t/L)^2$). We use periods $P \in \{24, 84, 168\}$ with length $L = 672$.

To rigorously isolate architectural efficiency, we impose a severe capacity bottleneck of 2 layers and $H=2$ heads. We compare \textbf{TOA-Gated} against \textbf{Softmax Attention} and \textbf{TOA-ReLU}. 

\textbf{Results and Analysis.} Quantitative metrics confirm our theoretical claims. \textbf{TOA-Gated} achieves the lowest MSE (0.0028), outperforming TOA-ReLU (0.0035) and decisively beating Softmax Attention (0.0972).

The learned operator visualizations (Figure~\ref{fig:all_plots_3x3}) confirm our structural claims. Constrained by the probability simplex (Proposition~\ref{prop:simplex}), \textbf{Softmax Attention} collapses into low-pass vertical stripes. While \textbf{TOA-ReLU} learns negative weights via $\mathbf{M}_2$, its static capacity forces destructive interference across warping regimes. Conversely, \textbf{TOA-Gated} dynamically routes inputs to regime-specific sub-operators, yielding cleanly separated filters and demonstrating the necessity of unconstrained, sample-adaptive sequence mixing.
\section{Experimental Evaluation}
\label{sec:experiments}
To empirically validate Temporal Operator Attention (TOA), we conduct experiments across four core tasks: long-term forecasting, short-term forecasting, anomaly detection, and classification. 

\begin{table}[htbp]
    \centering
    \vspace{-10pt} % Tightens space above caption
    \caption{Summary of experimental tasks, backbones, and datasets.}
    \label{tab:exp_summary}
    \vspace{-5pt} % Tightens space below caption
    \resizebox{\linewidth}{!}{%
    \begin{tabular}{@{} l l p{6cm} l @{}}
        \toprule
        \textbf{Task} & \textbf{Backbones} & \textbf{Datasets} & \textbf{Metrics} \\
        \midrule
        LT Forecasting & DUET, PatchTST, iTransformer & Weather, ETT (4 subsets), Electricity, Solar, etc. & MSE, MAE \\
        ST Forecasting & PatchTST, iTransformer & PEMS (4 subsets) & MAE, RMSE \\
        Anomaly Det. & PatchTST, iTransformer, Transformer & PSM, MSL, SMAP, SMD, SWAT & F1-Score \\
        Classification & PatchTST & 28 UEA Multivariate Datasets & Accuracy \\
        \bottomrule
    \end{tabular}%
    }
\end{table}

\subsection{Long- and Short-Term Time Series Forecasting}
\label{subsec:exp_forecasting}

\textbf{Setup and Datasets.} 
We evaluate TOA by integrating it into state-of-the-art backbones. For long-term forecasting, we utilize DUET, PatchTST, and iTransformer across nine datasets. 

\begin{wraptable}{r}{0.55\columnwidth} % Shrunk to 0.55 so text can wrap
    \centering
    \vspace{-15pt} % Adjusted to pull up into the paragraph safely
    \scriptsize 
    \setlength{\tabcolsep}{3pt} 
    \caption{\textbf{PEMS Results} with forecasting horizons $L_{p} \in \{12,24,48,96\}$. Average test MSE. Best in \textbf{bold}, second \underline{underlined}. See Table~\ref{tab:full_results_pems} for full results. }
    \label{mainresult_pems}
    \resizebox{\linewidth}{!}{
    \begin{tabular}{l|cccc|cccc}
        \toprule
        & \multicolumn{4}{c|}{\textbf{PatchTST Backbone}} & \multicolumn{4}{c}{\textbf{iTransformer Backbone}} \\
        \cmidrule(lr){2-5} \cmidrule(l){6-9}
        \textbf{Model} & \textbf{Softmax} & \textbf{Gated} & \textbf{ReLU} & \textbf{TOA-S} & \textbf{Softmax} & \textbf{Gated} & \textbf{ReLU} & \textbf{TOA-S} \\
        \midrule
        PEMS03 & 0.091 & \textbf{0.087} & \underline{0.090} & \textbf{0.087} & \underline{0.150} & \textbf{0.135} & \textbf{0.135} & \underline{0.150} \\
        PEMS04 & \underline{0.094} & \textbf{0.090} & \textbf{0.090} & \textbf{0.090} & 0.115 & \textbf{0.106} & \textbf{0.106} & \underline{0.109} \\
        PEMS07 & \underline{0.067} & 0.069 & \textbf{0.064} & \textbf{0.064} & 0.158 & \underline{0.093} & \textbf{0.092} & 0.138 \\
        PEMS08 & 0.118 & \textbf{0.106} & 0.108 & \underline{0.107} & 0.182 & \underline{0.156} & \textbf{0.154} & 0.183 \\
        \midrule
        AvgRank & 3.75 & \underline{1.75} & 2.00 & \textbf{1.25} & 3.50 & \underline{1.50} & \textbf{1.00} & 3.25 \\
        \#Top1 & 0 & \textbf{3} & \underline{2} & \textbf{3} & 0 & \underline{2} & \textbf{4} & 0 \\
        \bottomrule
    \end{tabular}
    }
    \vspace{-10pt}
\end{wraptable}
To verify that TOA captures localized Dynamics, we extend evaluation to short-term forecasting on PEMS traffic datasets. These datasets on both long- and short-term span a wide array of application areas, each defined by intricate temporal structures and a diverse spectrum of inter-channel dependencies. Detailed experimental setup and detailed dataset descriptions can be found in Appendix~\ref{Exp-details}.

\textbf{Results.} As detailed in Table~\ref{mainresult_pems} and Table~\ref{mainresult_updated}, integrating our gated sequence-mixing operator yields consistent gains. Crucially, by bypassing the normalization bottleneck of standard attention, TOA successfully captures rapid, high-frequency transients that similarity-based mechanisms typically over-smooth. Full experimental results for forecasting tasks can be found in Appendix~\ref{full table MST} at Table~\ref{tab:full_results} and Table~\ref{tab:full_results_pems}.

\begin{table*}[tb]
\caption{Summary of Multivariate TSF Results with forecasting horizons $L_{p} \in \{96,192,336,720\}$. Averaged test set MSE are reported. Best results are in \textbf{bold} and second best are \underline{underlined} (compared within each backbone group). See Table~\ref{tab:full_results} for full results.}
\label{mainresult_updated}
\centering
\begin{scriptsize}
\setlength{\tabcolsep}{1pt}
\resizebox{\textwidth}{!}{
\begin{tabular}{l|cccc|cccc|cccc}
\toprule
\multirow{2}{*}{Model} & \multicolumn{4}{c|}{DUET} & \multicolumn{4}{c|}{PatchTST} & \multicolumn{4}{c}{iTransformer} \\
\cmidrule{2-13}
 & \parbox{1.35cm}{\centering Softmax-Attention} & \parbox{1.35cm}{\centering TOA-Gated} & \parbox{1.35cm}{\centering TOA-ReLU} & \parbox{1.35cm}{\centering TOA-Softmax} & \parbox{1.35cm}{\centering Softmax-Attention} & \parbox{1.35cm}{\centering TOA-Gated} & \parbox{1.35cm}{\centering TOA-ReLU} & \parbox{1.35cm}{\centering TOA-Softmax} & \parbox{1.35cm}{\centering Softmax-Attention} & \parbox{1.35cm}{\centering TOA-Gated} & \parbox{1.35cm}{\centering TOA-ReLU} & \parbox{1.35cm}{\centering TOA-Softmax} \\
\midrule
Weather & 0.250 & \underline{0.243} & \textbf{0.239} & 0.247 & \underline{0.230} & \underline{0.230} & \underline{0.230} & \textbf{0.229} & 0.259 & 0.255 & \underline{0.254} & \textbf{0.252} \\
Solar & 0.231 & 0.231 & \textbf{0.217} & \underline{0.230} & 0.198 & 0.200 & \underline{0.190} & \textbf{0.187} & \textbf{0.234} & 0.242 & 0.240 & \underline{0.238} \\
ECL & 0.177 & \textbf{0.170} & \textbf{0.170} & \underline{0.176} & 0.163 & \underline{0.162} & 0.163 & \textbf{0.160} & 0.180 & 0.176 & \underline{0.175} & \textbf{0.167} \\
ETT & 0.399 & 0.400 & \underline{0.396} & \textbf{0.393} & \underline{0.345} & \underline{0.345} & \underline{0.345} & \textbf{0.342} & 0.385 & \underline{0.380} & \textbf{0.379} & 0.382 \\
Exchange & 0.266 & 0.264 & \textbf{0.258} & \underline{0.262} & 0.673 & \underline{0.567} & \textbf{0.548} & 0.669 & 0.363 & \textbf{0.360} & \underline{0.361} & 0.362 \\
Traffic & 0.466 & \underline{0.464} & 0.470 & \textbf{0.461} & 0.400 & \textbf{0.397} & 0.400 & \underline{0.399} & \underline{0.422} & 0.423 & \underline{0.422} & \textbf{0.420} \\
\midrule
AvgRank & 3.50 & 2.50 & \textbf{1.67} & \underline{2.00} & 2.83 & \underline{2.17} & \underline{2.17} & \textbf{1.50} & 3.17 & 2.83 & \underline{2.00} & \textbf{1.83} \\
\#Top1 & 0 & 1 & \textbf{4} & \underline{2} & 0 & \underline{1} & \underline{1} & \textbf{4} & \underline{1} & \underline{1} & \underline{1} & \textbf{3} \\
\bottomrule
\end{tabular}
}
\end{scriptsize}
\end{table*}

\subsection{Time Series Classification}
\label{subsec:exp_classification}

\begin{wraptable}{r}{0.48\textwidth}
    \centering
    \vspace{-15pt} % Adjusted so it aligns perfectly with the paragraph start
    \footnotesize 
    \setlength{\tabcolsep}{3pt} 
    \caption{\textbf{UEA Results} summary on 28 datasets across 4 different variants. W/T/L vs. Softmax. See Table~\ref{mainresult_classification} for full results.}
    \label{classification_summary}
    \begin{tabular}{@{}lccc@{}}
        \toprule
        \textbf{Model} & \textbf{Acc.} & \textbf{Rank$\downarrow$} & \textbf{W/T/L} \\ \midrule
        Softmax (Base) & 65.62\% & 4.77 & --- \\
        TOA-S (Abl.)   & 64.80\% & 5.11 & 9/2/17 \\ \midrule
        \textbf{TOA-ReLU}  & 69.13\% & \textbf{2.18} & \textbf{26/1/1} \\ 
        \textbf{TOA-Gated} & \textbf{69.16\%} & \underline{2.25} & 24/1/3 \\ \bottomrule
    \end{tabular}
    \vspace{-10pt} 
\end{wraptable}
\textbf{Setup.} To assess representation learning on highly heterogeneous signals, we evaluate TOA on 28 diverse multivariate classification datasets from the UEA archive. These span multiple domains, including high-frequency sensor data and sparse medical diagnostics. Detailed experimental setup can be found in Appendix~\ref{Exp-details}.

\textbf{Results.}
As shown in Table~\ref{classification_summary}, TOA-Gated and TOA-ReLU significantly outperform the Softmax baseline. By utilizing dynamic routing, TOA functions as a sample-adaptive feature extractor capable of identifying discriminant temporal patterns regardless of the data profile. Notably, the TOA-Softmax ablation fails to improve performance, validating that breaking the simplex constraint is necessary for optimal temporal routing. Full experimental results for all 28 datasets for classification analysis can be found in Appendix~\ref{full table MST} at Table~\ref{mainresult_classification}.

\subsection{Time Series Anomaly Detection}
\label{subsec:exp_anomaly}

\textbf{Setup and Datasets.} 
We evaluate the capacity of our model to distinguish between normal dynamics and anomalous deviations using five standard benchmarks: PSM, MSL, SMAP, SMD, and SWAT. Detection performance is measured via the F1-score of reconstructed sequences. Detailed experimental setup can be found in Appendix~\ref{Exp-details}.

\textbf{Results.} 
Table~\ref{mainresult_anomaly} shows that TOA-integrated backbones achieve consistent F1-score improvements across all five anomaly detection benchmarks. By replacing row-normalized similarity transport with explicit, unconstrained sequence operators, our method eliminates the spectral dampening that often obscures transient events in standard attention. This allows for more robust modeling of normal dynamics and produces sharper reconstruction errors for sparse anomalies. TOA-Gated further optimizes this process by dynamically routing out noise, delivering precise and reliable detection across diverse domains. 

\begin{table*}[tb]
\caption{Summary of Anomaly Detection Results. F1-scores (or Accuracy) are reported. Best results are in \textbf{bold} and second best are \underline{underlined} (compared within each backbone group).}
\label{mainresult_anomaly}
\centering
\begin{scriptsize}
\setlength{\tabcolsep}{1pt}
\resizebox{\textwidth}{!}{
\begin{tabular}{l|ccccc|ccccc|ccccc}
\toprule
\multirow{2}{*}{Dataset} & \multicolumn{5}{c|}{iTransformer} & \multicolumn{5}{c|}{PatchTST} & \multicolumn{5}{c}{Transformer} \\
\cmidrule{2-16}
 & \parbox{1.35cm}{\centering Softmax} & \parbox{1.35cm}{\centering TOA-Gated w/o SOR} & \parbox{1.35cm}{\centering TOA-Gated} & \parbox{1.35cm}{\centering TOA-ReLU} & \parbox{1.35cm}{\centering TOA-Softmax} & \parbox{1.35cm}{\centering Softmax} & \parbox{1.35cm}{\centering TOA-Gated w/o SOR} & \parbox{1.35cm}{\centering TOA-Gated} & \parbox{1.35cm}{\centering TOA-ReLU} & \parbox{1.35cm}{\centering TOA-Softmax} & \parbox{1.35cm}{\centering Softmax} & \parbox{1.35cm}{\centering TOA-Gated w/o SOR} & \parbox{1.35cm}{\centering TOA-Gated} & \parbox{1.35cm}{\centering TOA-ReLU} & \parbox{1.35cm}{\centering TOA-Softmax} \\
\midrule
PSM & \underline{0.948} & \textbf{0.949} & \textbf{0.949} & \textbf{0.949} & \underline{0.948} & \underline{0.962} & \underline{0.962} & \underline{0.962} & \textbf{0.968} & \underline{0.962} & 0.906 & \underline{0.907} & \textbf{0.908} & \underline{0.907} & \underline{0.907} \\
MSL & 0.725 & \underline{0.726} & \underline{0.726} & 0.725 & \textbf{0.744} & 0.788 & \textbf{0.794} & \textbf{0.794} & \underline{0.790} & 0.788 & \underline{0.809} & \textbf{0.810} & \textbf{0.810} & \underline{0.809} & \textbf{0.810} \\
SMAP & 0.668 & \textbf{0.670} & \textbf{0.670} & \underline{0.669} & 0.668 & \underline{0.689} & \underline{0.689} & 0.688 & \textbf{0.690} & 0.688 & \textbf{0.737} & 0.733 & \underline{0.734} & 0.733 & \underline{0.734} \\
SMD & \textbf{0.824} & \underline{0.816} & \textbf{0.824} & \textbf{0.824} & \textbf{0.824} & 0.845 & \underline{0.846} & \underline{0.846} & \underline{0.846} & \textbf{0.848} & \underline{0.712} & \underline{0.712} & \underline{0.712} & \underline{0.712} & \textbf{0.713} \\
SWAT & \textbf{0.927} & \textbf{0.927} & \textbf{0.927} & \textbf{0.927} & \underline{0.926} & 0.853 & \underline{0.871} & \textbf{0.897} & 0.860 & 0.853 & 0.803 & \underline{0.815} & 0.814 & \textbf{0.820} & 0.804 \\
\midrule
AvgRank & 2.80 & \underline{2.00} & \textbf{1.20} & \underline{2.00} & 3.00 & 3.40 & \textbf{1.80} & \underline{2.00} & \underline{2.00} & 3.00 & 3.40 & 2.20 & \textbf{1.80} & 2.60 & \underline{2.00} \\
\#Top1 & 2 & \underline{3} & \textbf{4} & \underline{3} & 2 & 0 & \underline{1} & \textbf{2} & \textbf{2} & \underline{1} & \underline{1} & \underline{1} & \textbf{2} & \underline{1} & \textbf{2} \\
\bottomrule
\end{tabular}
}
\end{scriptsize}
\end{table*}
\section{Ablation Studies}

\begin{wraptable}{r}{0.55\columnwidth}
    \centering
    \vspace{-15pt}
    \scriptsize
    \setlength{\tabcolsep}{4pt}
    \caption{\textbf{SOR Ablation.} Test MSE on the PatchTST backbone averaged across $L_{p} \in \{96,192,336,720\}$. Lower MSE is in \textbf{bold} for each architecture variant. See full table in Table~\ref{tab:ablation_results}.}
    \label{tab:ablation_sor}
    \begin{tabular}{l|cc|cc}
        \toprule
        & \multicolumn{2}{c|}{\textbf{TOA-ReLU}} & \multicolumn{2}{c}{\textbf{TOA-Gated}} \\
        \cmidrule(lr){2-3} \cmidrule(l){4-5}
        \textbf{Dataset} & \textbf{w/o SOR} & \textbf{+ SOR} & \textbf{w/o SOR} & \textbf{+ SOR} \\
        \midrule
        Weather & 0.239 & \textbf{0.228} & 0.238 & \textbf{0.230} \\
        ETTm2   & 0.273 & \textbf{0.261} & 0.271 & \textbf{0.259} \\
        ETTm1   & \textbf{0.353} & 0.355 & \textbf{0.352} & 0.355 \\
        ETTh2   & 0.381 & \textbf{0.354} & 0.377 & \textbf{0.360} \\
        ETTh1   & \textbf{0.421} & \textbf{0.421} & 0.423 & \textbf{0.421} \\
        \bottomrule
    \end{tabular}
    \vspace{-10pt}
\end{wraptable}

\textbf{Why Stochastic Operator Regularization?} 
Introducing explicit, unconstrained sequence operators grants the architecture the necessary expressive power to perform true phase-cancellation and band-pass filtering. However, deploying dense $N \times N$ matrices inherently increases the risk of overfitting, particularly on noisy, finite-horizon time series. We conduct an ablation study using the PatchTST backbone to isolate the impact of Stochastic Operator Regularization (SOR). We test both the static and gated variants of Temporal Operator Attention (TOA) to verify that applying high-variance magnitude noise during training prevents the expanded capacity from memorizing spurious high-frequency noise.

\textbf{Ablation Results.} 
Table~\ref{tab:ablation_sor} shows that SOR yields substantial gains, prominently on ETTh2 for TOA-ReLU (0.381 $\rightarrow$ \textbf{0.354}) and ETTm2 for TOA-Gated (0.271 $\rightarrow$ \textbf{0.259}). Despite a marginal regression on ETTm1 (+0.003), the overall trend confirms our hypothesis: unconstrained explicit operators are prone to overfitting on noisy data. By regularizing the operator offset, SOR acts as a vital structural constraint that ensures the learned sequence mixers remain generalizable. Full experimental results for ablation studies can be found in Appendix~\ref{full table MST} at Table~\ref{tab:ablation_results}.
% \section{Discussion}

% The strongest version of this paper avoids broad claims like ``attention is bad for time series.'' The sharper, defensible claim is narrower: (1) attention is a similarity-based transport mechanism; (2) some time-series problems are governed by explicit signed operators over the sequence axis; (3) when this structural mismatch is active --- across at least the five regimes identified in \S\ref{subsec:regimes} --- dense sequence mixing is the correct primitive. This framing also explains why linear and mixer-style models have often been unexpectedly strong in forecasting --- their advantage may not come from simplicity alone, but from more directly parameterizing the operator class many time-series tasks require.

% \textbf{Limitations.} The structural hypothesis may not hold equally across datasets. Highly noisy or weakly periodic data may not benefit much from dense operator learning, and in some cases the more rigid bias of attention or simpler linear models may regularize better. This is not a weakness of the argument --- it is precisely what a structural paper should predict. Proposition~\ref{prop:simplex} is a single-layer single-head statement and does not rule out approximation by deep multi-head stacks; deep softmax stacks empirically do approximate signed operators, at the cost of parameters, depth, and brittleness. Our claim is about the primitive-level inductive bias, not asymptotic expressivity.

\section{Conclusion}
In this work, we identified that standard softmax attention suffers from a simplex-constrained bottleneck, preventing it from modeling the signed and oscillatory dynamics fundamental to time series. To resolve this, we introduced Temporal Operator Attention (TOA), which augments similarity routing with explicit, unconstrained sequence operators. To stabilize these matrices and prevent overfitting, we developed Stochastic Operator Regularization (SOR). Extensive evaluations across forecasting, anomaly detection, and classification confirm that explicitly parameterizing sequence operators provides a critical inductive bias, consistently elevating the performance of state-of-the-art time-series backbones.

\bibliography{TOA}
\bibliographystyle{plainnat}

\clearpage
\appendix

\section{Limitations and Broader Impacts} \label{sec:impacts}

\subsection{Limitations}
While Temporal Operator Attention (TOA) resolves the simplex-constrained bottleneck at the primitive level, it introduces an $O(N^2)$ memory and compute footprint through its dense sequence operators, which limits the maximum sequence length that can be handled directly without token compression or patching (Computation Efficiency Analysis can be found for more details in Appendix~\ref{subsec:efficiency}). In practice, dense signed operators are also more prone to overfitting than standard softmax attention, particularly on smaller datasets, making stable training dependent on effective regularization through our Stochastic Operator Regularization (SOR) scheme. Finally, our formal impossibility results apply specifically to the single-layer, single-head softmax primitive; characterizing the approximation efficiency and sample-complexity tradeoffs of deep, multi-head Transformer stacks relative to TOA remains an important open problem.

\subsection{Impact Statement}
This work studies the mathematical structure of sequence-mixing primitives for time-series modeling and proposes a general-purpose architectural modification for improving temporal representation learning. As a methodological contribution, its societal impact is primarily indirect and depends on downstream application domains. Like other advances in time-series modeling, improved predictive and representational performance could be beneficial in areas such as forecasting, monitoring, and scientific analysis, but may also be deployed in sensitive contexts such as finance, industrial surveillance, or decision support. We therefore view TOA as a general modeling contribution whose real-world impact depends on the transparency, oversight, and domain-specific safeguards of the systems in which it is ultimately used.

\section{Extended Related Works}
\label{extended related works}
\subsection{MLP-based sequence mixing.}
Recent work has demonstrated that explicitly modeling temporal interactions as transformations over the sequence dimension provides a strong inductive bias for time-series data. Architectures such as TSMixer~\citep{chen2023tsmixerallmlparchitecturetime} and DLinear~\citep{zeng2022transformerseffectivetimeseries} replace attention-based aggregation with explicit sequence-space mixing, achieving competitive or superior performance on forecasting benchmarks. However, these approaches are typically static, applying the same transformation regardless of input. In contrast, TOA retains the explicit operator structure of these models while reintroducing input-dependent adaptivity through attention.

Recent work has also explored augmenting static MLP-based forecasting pipelines with learned auxiliary representations of the input sequence. For example, CATS~\citep{lu2026catsenhancingmultivariatetime} constructs auxiliary time series as learned exogenous variables that encode inter-series structure, effectively functioning as a learned preprocessing operator over the temporal dimension. Notably, even with a simple MLP backbone, this approach achieves strong performance, reinforcing the observation that much of the predictive power in multivariate forecasting can be attributed to the design of the sequence-mixing representation rather than deep attention-based architectures.

% \subsection{Dynamic parameterization and fast-weight perspectives.}
% Self-attention can be interpreted as a form of dynamic weight generation, where input-dependent similarity scores define a sequence-mixing operator. This perspective is closely related to fast-weight programming and recent formulations such as HyperMLP~\citep{lu2026hypermlpintegratedperspectivesequence}, which reinterpret attention as a dynamically parameterized sequence mixer. Related mechanistic perspectives, such as Canon-style layers in physics-informed language modeling~\citep{allenzhu2025physicslanguagemodels41}, similarly emphasize the role of structured operator parameterization. These approaches emphasize input-conditioned parameterization, often with low-rank or per-sample structure. In contrast, TOA introduces shared dense operators that explicitly model global temporal structure, complementing dynamic parameterization with a full-rank, signed mixing capability.

\subsection{Dynamic parameterization and fast-weight perspectives.}
Self-attention can be interpreted as a form of dynamic weight generation, where input-dependent similarity scores define a sequence-mixing operator. This perspective is closely related to fast-weight programming and recent formulations such as HyperMLP~\citep{lu2026hypermlpintegratedperspectivesequence}, which reinterpret attention as a dynamically parameterized sequence mixer. Related mechanistic perspectives, such as Canon-style layers in physics-informed language modeling~\citep{allenzhu2025physicslanguagemodels41}, similarly emphasize the role of structured operator parameterization. More broadly, a growing line of work has explored alternative attention parameterizations with greater expressivity than standard softmax mixing. On the linear/recurrent side, Gated Linear Attention and Gated DeltaNet introduce input-dependent gates and richer state updates to increase the flexibility of fast-weight dynamics while preserving efficient recurrence~\citep{yang2024gatedlinearattentiontransformers,yang2025gateddeltanetworksimproving}. In standard softmax attention, Differential Transformer relaxes simplex-style averaging by differencing multiple attention maps to induce signed interactions~\citep{ye2025differentialtransformer}, while Gated Attention for Large Language Models shows that post-attention gating can further improve selectivity and long-context behavior. Recent works such as ZeroS~\citep{lu2026zeroszerosumlinearattention} and Free Energy Mixer~\citep{lu2026freeenergymixer} push this expressivity direction from two complementary angles: ZeroS introduces a zero-sum reparameterization that enables stable signed mixing, whereas Free Energy Mixer replaces expectation-style reads with a value-aware free-energy posterior that enriches channel-wise selection behavior from the same memory bank. These approaches emphasize input-conditioned parameterization, typically in causal, low-rank, recurrent, or value-aware regimes. In contrast, TOA introduces shared dense operators that explicitly model global temporal structure. While previous works rely on low-rank updates or causal gating, TOA provides a full-rank, signed mixing capability that restores the inhibitory degrees of freedom necessary for precise signal cancellation in bidirectional time-series settings.

\subsection{State-space and operator-based sequence models.}

Recent sequence architectures have increasingly replaced explicit attention kernels with structured dynamical operators derived from state-space systems, including S4~\citep{gu2022efficientlymodelinglongsequences}, DSS~\citep{gupta2022diagonalstatespaceseffective}, and Mamba-style selective state-space models~\citep{gu2024mambalineartimesequencemodeling}. These approaches achieve strong long-range modeling performance by parameterizing sequence interactions through continuous-time dynamics, convolutional kernels, or input-dependent recurrence. Conceptually, these approaches share with TOA the broader goal of replacing explicit similarity-based token aggregation with richer parameterized operators over sequences.

However, the underlying inductive biases differ substantially. State-space models typically impose strong structural priors such as HiPPO-based dynamics~\citep{gu2020hipporecurrentmemoryoptimal}, convolutional recurrence, or causal selective scans, designed for efficient long-context propagation. In contrast, TOA targets unrestricted dense temporal mixing directly within the attention primitive itself. Rather than replacing attention with recurrence or implicit convolution, TOA augments the attention pathway with explicit unconstrained signed operators that preserve adaptive context matching while removing the simplex constraint. More broadly, these developments reflect a growing shift from token-wise similarity transport toward explicitly parameterized operators inspired by dynamical systems and signal-processing perspectives on sequence modeling.

From a complementary perspective, recent work has shown that linear attention mechanisms admit an explicit autoregressive interpretation~\citep{lu2025wave}. In particular,~\citet{lu2026lineartransformersvarmodels} demonstrate that single-layer linear attention can be reformulated as a dynamic vector autoregressive (VAR) system, and that deeper linear Transformers can be structurally aligned with autoregressive forecasting objectives through appropriate reparameterization. This further reinforces the connection between sequence-mixing operators and classical dynamical system representations of time series.

However, unlike state-space models that explicitly parameterize latent continuous-time dynamics, these approaches still operate within a constrained linear attention family, preserving a similarity-driven aggregation structure rather than introducing fully unconstrained temporal operators.

\subsection{Transformer variants for time series.}
A large body of work has explored modifications to the Transformer architecture for time-series forecasting, including Informer~\citep{zhou2021informer}, Autoformer~\citep{wu2021autoformer}, and FEDformer~\citep{zhou2022fedformerfrequencyenhanceddecomposed}. These approaches primarily focus on improving efficiency or incorporating frequency-domain structure, while retaining the core similarity-based aggregation mechanism. Similarly, methods such as LogTrans~\citep{li2019enhancing} and ARM~\citep{lu2026armrefiningmultivariateforecasting} introduce convolutional or band-limited structure to attention, biasing it toward local or frequency-aware interactions. While effective in practice, these methods do not fundamentally alter the simplex-constrained nature of the attention kernel. As a result, attention layers remain fundamentally averaging operators, repeatedly mixing representations through row-stochastic interactions. Recent theoretical analyses support this perspective: Dong et al.~\citep{dong2023attentionneedpureattention} show that deep pure-attention architectures can exhibit severe rank collapse with depth, while analogous phenomena in graph neural networks have been linked to over-smoothing under repeated stochastic propagation~\citep{10.5555/3504035.3504468}. Viewed through this lens, standard attention mechanisms may act as implicit spectral smoothers that suppress high-frequency or transient temporal structure.

In contrast, our approach targets the sequence-mixing primitive itself. By introducing explicit, unconstrained operators over the temporal axis, TOA enables global, signed interactions that are not directly accessible within the standard attention formulation. This perspective complements existing efforts by addressing a different axis of the design space: not efficiency or locality, but the structural form of sequence interactions.

\section{Full Proofs for Impossibility and Realization}
\label{app:proofs}

This section provides the complete mathematical proofs for the structural limits of standard softmax attention and the expressive capacity of the dense residual operators, expanding on the proof sketches provided in Section~\ref{sec:method}.

\subsection{Proof of Proposition 1 (The Simplex-Constrained Mixing Limitation)}

\textbf{Proposition 1.} \emph{Consider a single-head self-attention layer with scalar values $v_n = h_n^{(1)}$. Let $\Tstar \in \R^{N \times N}$ satisfy $\Tstar_{m,:} \notin \simplex^{N-1}$ for some row $m$. Then there exist no parameters $\mathbf{W}_Q, \mathbf{W}_K \in \R^{d_h \times d}$ such that $o_m(\Hb) = \Tstar_{m,:}\,\mathbf{v}(\Hb)$ for all $\Hb \in \R^{N \times d}$, where $\mathbf{v}(\Hb) = (h_1^{(1)}, \ldots, h_N^{(1)})^\top$.}

\begin{proof}
Suppose toward contradiction that the hypothesis holds for all $\Hb \in \R^{N \times d}$.

\emph{Step 1: Small-scale limit.}
Fix $\Hb \in \R^{N \times d}$ and consider the scaling family $\{\alpha\Hb : \alpha > 0\}$. The value vector scales linearly, $\mathbf{v}(\alpha\Hb) = \alpha\,\mathbf{v}(\Hb)$, and the output scales as
\begin{equation}
o_m(\alpha\Hb) \;=\; \sum_{n=1}^N a_{m,n}(\alpha\Hb)\,\alpha v_n \;=\; \alpha \sum_{n=1}^N a_{m,n}(\alpha\Hb)\,v_n,
\end{equation}
where $v_n = h_n^{(1)}$. Applying the hypothesis at input $\alpha\Hb$ and dividing both sides by $\alpha > 0$ yields
\begin{equation}
    \label{eq:app-simplex-step1}
    \sum_{n=1}^N a_{m,n}(\alpha\Hb)\, v_n \;=\; \Tstar_{m,:}\,\mathbf{v}(\Hb) \qquad \forall\, \alpha > 0,
\end{equation}
with $\mathbf{v} = \mathbf{v}(\Hb)$ on both sides. Now the $(m,n)$-entry of the score matrix at input $\alpha\Hb$ satisfies
\begin{equation}
    \mathbf{A}_{m,n}(\alpha\Hb) \;=\; \alpha^2\,(\mathbf{W}_Q \mathbf{h}_m)^\top \mathbf{W}_K \mathbf{h}_n \;=\; \alpha^2\,\mathbf{A}_{m,n}(\Hb),
\end{equation}
so every logit scales as $\alpha^2$. Since softmax is continuous and $\mathrm{softmax}(\mathbf{0})_n = 1/N$, we have $a_{m,n}(\alpha\Hb) \to 1/N$ as $\alpha \to 0^+$ for every $n$. Taking the limit of~\eqref{eq:app-simplex-step1} gives
\begin{equation}
    \label{eq:app-simplex-limit}
    \tfrac{1}{N}\mathbf{1}^\top\,\mathbf{v}(\Hb) \;=\; \Tstar_{m,:}\,\mathbf{v}(\Hb).
\end{equation}

\emph{Step 2: Surjectivity and simplex contradiction.}
Equation~\eqref{eq:app-simplex-limit} must hold for every $\Hb \in \R^{N \times d}$. The map $\Hb \mapsto \mathbf{v}(\Hb) = (h_1^{(1)}, \ldots, h_N^{(1)})^\top$ is surjective onto $\R^N$ (its components are $N$ independent free coordinates of $\Hb$), so the linear identity $\tfrac{1}{N}\mathbf{1}^\top\mathbf{v} = \Tstar_{m,:}\,\mathbf{v}$ holds for every $\mathbf{v} \in \R^N$. By uniqueness of representation of linear functionals, $\Tstar_{m,:} = \tfrac{1}{N}\mathbf{1}^\top$. But $\tfrac{1}{N}\mathbf{1}^\top \in \simplex^{N-1}$, contradicting the premise that $\Tstar_{m,:} \notin \simplex^{N-1}$.
\end{proof}

\section{Operator Structure of Time-Series Regimes}
\label{app:cases-bd}

This section provides the complete mathematical formulations and proofs for the five canonical time-series regimes discussed in Section~\ref{sec:method}, demonstrating why they cannot be realized by standard softmax attention and how the TOA framework parameterizes them. Each Realization proposition is stated at the level of the post-mixer family $\{\mathbf{S}_2 = \mathbf{I}_N + \mathbf{M}_2\}$, which, as a simple affine shift, spans all of $\mathbb{R}^{N \times N}$.

\subsection{Case A: Seasonal and Harmonic Continuation}

\textbf{Setup.}
Let $\mathbf{t} = (0, 1, \ldots, L-1)^\top$. For $K$ candidate periods $\{P_k\}$ with angular frequencies $\omega_k = 2\pi/P_k$, define the quadrature feature $\bPhi_k = [\cos(\omega_k\mathbf{t}),\sin(\omega_k\mathbf{t})] \in \mathbb{R}^{L \times 2}$ and the full matrix $\bPhi = [\bPhi_1 \mid \cdots \mid \bPhi_K] \in \mathbb{R}^{L \times 2K}$. The latent signal is $y_t = \sum_{k=1}^{K}\bigl(a_k \cos(\omega_k t) + b_k \sin(\omega_k t)\bigr)$, observed as $x_t = y_t + \varepsilon_t$ where $\varepsilon_t \sim \mathcal{N}(0, \sigma^2)$. Evaluating $\bPhi$ at indices $t = 0,\ldots,L-1$ yields $\bPhi_{\mathrm{obs}}$, and at indices $t = L,\ldots,L+T-1$ yields $\bPhi_{\mathrm{fcast}}$. In the high-SNR limit $\sigma \to 0$, the exact MMSE forecasting operator fits Fourier coefficients by least-squares: $\mathbf{T}^\star = \bPhi_{\mathrm{fcast}}\,\bPhi_{\mathrm{obs}}^+ \in \mathbb{R}^{T \times L}$, where $\bPhi_{\mathrm{obs}}^+ = (\bPhi_{\mathrm{obs}}^\top\bPhi_{\mathrm{obs}})^{-1}\bPhi_{\mathrm{obs}}^\top$. Proposition~\ref{prop:simplex} extends verbatim from self-attention to the cross-attention setting $T \neq L$, since the small-scale scaling argument depends only on the row-stochastic property of softmax.

\begin{lemma}[Failure of Softmax Attention for Harmonic Continuation]
\label{lem:app-caseA-failure}
Suppose (a) $L$ is an integer multiple of $P_k$ for every $k$, (b) all frequencies are non-trivial ($\omega_k \neq 0 \bmod 2\pi$), and (c) $L \geq 2K$ such that $\bPhi_{\mathrm{obs}}$ has full column rank. Then every row of $\mathbf{T}^\star$ has strictly zero sum: $\mathbf{T}^\star \mathbf{1} = \mathbf{0}$. Consequently, no parameter setting for single-head softmax attention (in its cross-attention extension) can realize the harmonic continuation operator.
\end{lemma}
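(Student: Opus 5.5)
The plan is to show that the all-ones vector $\mathbf{1}\in\R^L$ lies in the kernel of $\bPhi_{\mathrm{obs}}^\top$. Since $\mathbf{T}^\star = \bPhi_{\mathrm{fcast}}(\bPhi_{\mathrm{obs}}^\top\bPhi_{\mathrm{obs}})^{-1}\bPhi_{\mathrm{obs}}^\top$ ends in $\bPhi_{\mathrm{obs}}^\top$, this immediately gives $\mathbf{T}^\star\mathbf{1}=\mathbf{0}$. Assumption (c) is used only to make the Gram matrix $\bPhi_{\mathrm{obs}}^\top\bPhi_{\mathrm{obs}}$ invertible, so that the pseudoinverse formula is valid.

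\textbf{Step 1: column sums vanish.} For each $k$, compute $\mathbf{1}^\top\cos(\omega_k\mathbf{t})$ and $\mathbf{1}^\top\sin(\omega_k\mathbf{t})$. These are the real and imaginary parts of the geometric sum $\sum_{t=0}^{L-1} e^{i\omega_k t}$. By (b), $e^{i\omega_k}\neq 1$, so the sum equals $(e^{i\omega_k L}-1)/(e^{i\omega_k}-1)$. By (a), $L = r_k P_k$ for some integer $r_k$, so $\omega_k L = 2\pi r_k$ and the numerator vanishes. Hence every column of $\bPhi_{\mathrm{obs}}$ sums to zero, i.e.\ $\bPhi_{\mathrm{obs}}^\top\mathbf{1}=\mathbf{0}$. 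This is the only real computation, and it is routine.

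\textbf{Step 2: conclude row sums are zero.} Substitute into $\mathbf{T}^\star\mathbf{1} = \bPhi_{\mathrm{fcast}}(\bPhi_{\mathrm{obs}}^\top\bPhi_{\mathrm{obs}})^{-1}(\bPhi_{\mathrm{obs}}^\top\mathbf{1}) = \mathbf{0}$. Every row of $\mathbf{T}^\star$ therefore sums to $0$, while every element of $\simplex^{L-1}$ sums to $1$. So every row $\mathbf{T}^\star_{m,:}\notin\simplex^{L-1}$; in particular the hypothesis of Proposition~\ref{prop:simplex} holds for some row $m$. Note that $T\ge 1$, so the matrix has rows.

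\textbf{Step 3: invoke Proposition~\ref{prop:simplex} in its cross-attention form.} The query index now ranges over $T$ forecast positions and the keys/values over the $L$ observed tokens. I would briefly re-run its two-step argument rather than just cite it. First, scaling the inputs by $\alpha$ scales the logits by $\alpha^2$ (or, if queries come from separate learned embeddings, at least sends them to a constant). As $\alpha\to0^+$ each softmax row tends to $\mathbf{1}^\top/L$, so the realized row functional would have to equal $\tfrac1L\mathbf{1}^\top$. Second, surjectivity of $\Hb\mapsto\mathbf{v}(\Hb)$ onto $\R^L$ forces $\mathbf{T}^\star_{m,:}=\tfrac1L\mathbf{1}^\top$, which has sum $1\neq0$, a contradiction. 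The paper already remarks that the argument uses only row-stochasticity and the scaling, so the extension needs only this check.

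The main, though mild, obstacle is making the cross-attention extension precise. How the queries for forecast positions depend on the input must be fixed so that the small-scale limit still drives the kernel to a constant row-stochastic vector $\mathbf{p}\in\simplex^{L-1}$. Even if the limit is some fixed $\mathbf{p}$ rather than the uniform row, surjectivity forces $\mathbf{T}^\star_{m,:}=\mathbf{p}^\top\in\simplex^{L-1}$. That contradicts the zero row sum, which makes the argument robust to this modeling detail.
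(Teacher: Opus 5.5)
Your proposal is correct and follows the paper's proof essentially step for step. The geometric-series computation gives $\bPhi_{\mathrm{obs}}^\top\mathbf{1}=\mathbf{0}$, hence $\mathbf{T}^\star\mathbf{1}=\bPhi_{\mathrm{fcast}}(\bPhi_{\mathrm{obs}}^\top\bPhi_{\mathrm{obs}})^{-1}\bPhi_{\mathrm{obs}}^\top\mathbf{1}=\mathbf{0}$, and Proposition~\ref{prop:simplex} then rules out softmax attention. The one addition beyond the paper is your explicit re-check of the cross-attention extension, which the paper simply asserts carries over verbatim: any input-independent limiting row $\mathbf{p}\in\simplex^{L-1}$, not just the uniform row, still contradicts the zero row sum.
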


\begin{proof}
Each column of $\bPhi_{\mathrm{obs}}$ represents a sampled sinusoid over an integer number of periods. The discrete sum over the sequence is computed via the geometric series: $\sum_{t=0}^{L-1} e^{i\omega_k t} = \frac{1 - e^{i\omega_k L}}{1 - e^{i\omega_k}} = 0$, as $e^{i\omega_k L} = 1$ and $e^{i\omega_k} \neq 1$. Thus, every column of $\bPhi_{\mathrm{obs}}$ sums to zero: $\mathbf{1}^\top \bPhi_{\mathrm{obs}} = \mathbf{0}^\top$. Right-multiplying by the components of the Moore--Penrose inverse gives $\bPhi_{\mathrm{obs}}^+ \mathbf{1} = (\bPhi_{\mathrm{obs}}^\top\bPhi_{\mathrm{obs}})^{-1}(\bPhi_{\mathrm{obs}}^\top \mathbf{1}) = \mathbf{0}$. It follows that $\mathbf{T}^\star \mathbf{1} = \bPhi_{\mathrm{fcast}}(\bPhi_{\mathrm{obs}}^+ \mathbf{1}) = \mathbf{0}$. Since every row of $\mathbf{T}^\star$ sums to zero, every row fails simplex membership ($\sum_n T^\star_{m,n} = 0 \neq 1$). By Proposition~\ref{prop:simplex}, softmax attention cannot realize this operator.
\end{proof}

\begin{proposition}[Realization by TOA]
\label{prop:app-caseA-realization}
The post-mixer family $\{\mathbf{S}_2 = \mathbf{I}_N + \mathbf{M}_2 : \mathbf{M}_2 \in \mathbb{R}^{N \times N}\}$ contains the harmonic continuation operator $\mathbf{T}^\star$ exactly. In particular, the parameter choice $\mathbf{M}_2^\star = \mathbf{T}^\star - \mathbf{I}_N$ yields $\mathbf{S}_2 = \mathbf{T}^\star$.
\end{proposition}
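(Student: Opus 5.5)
The claim is essentially a surjectivity statement about the affine parameterization $\mathbf{M}_2 \mapsto \mathbf{I}_N + \mathbf{M}_2$. I will first show that this map is a bijection of $\mathbb{R}^{N\times N}$ onto itself: its inverse is $\mathbf{S}_2 \mapsto \mathbf{S}_2 - \mathbf{I}_N$. The family $\{\mathbf{I}_N + \mathbf{M}_2\}$ is therefore all of $\mathbb{R}^{N\times N}$, and in particular it contains any prescribed operator. Substituting the explicit choice $\mathbf{M}_2^\star = \mathbf{T}^\star - \mathbf{I}_N$ gives $\mathbf{S}_2 = \mathbf{I}_N + \mathbf{T}^\star - \mathbf{I}_N = \mathbf{T}^\star$ by direct cancellation. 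This is a one-line computation.

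The one point that needs care is dimensional. The harmonic continuation operator $\mathbf{T}^\star = \bPhi_{\mathrm{fcast}}\bPhi_{\mathrm{obs}}^+$ is $T\times L$, whereas $\mathbf{S}_2$ is square $N\times N$. I would handle this in two cases.

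\emph{Case $T = L = N$ (reconstruction or same-length forecasting in point tokenization).} Here $\mathbf{T}^\star \in \mathbb{R}^{N\times N}$ and the identity above applies verbatim.

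\emph{Case $T \neq L$.} I would embed $\mathbf{T}^\star$ into a square operator on $N = \max(L,T)$, or $N = L+T$, positions. The output rows are the forecast slots and the input columns are the observed slots, with zero padding elsewhere: for instance, take $\mathbf{T}^\star$ as an off-diagonal block with zeros in the remaining blocks. The restriction of $\mathbf{S}_2$ to the relevant block then equals $\mathbf{T}^\star$.

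Finally, I would note that well-definedness of $\mathbf{T}^\star$ uses hypothesis (c) of Lemma~\ref{lem:app-caseA-failure}, namely full column rank of $\bPhi_{\mathrm{obs}}$, which makes $\bPhi_{\mathrm{obs}}^+$ exist. I would also remark that $\mathbf{S}_2$ is unconstrained in sign and row sum, so the zero-row-sum property $\mathbf{T}^\star\mathbf{1}=\mathbf{0}$ imposes no obstruction here, in contrast with Proposition~\ref{prop:simplex}. The main (mild) obstacle is the shape bookkeeping in the $T\neq L$ case. There I would state the embedding explicitly rather than rely on the square-matrix identity, and note that realizing $\mathbf{T}^\star$ as the end-to-end map additionally requires the preceding activation path to pass values through as the identity. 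That end-to-end realization is outside the literal claim, which concerns only membership in the post-mixer family.
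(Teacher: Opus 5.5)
Your argument is the paper's: the map $\mathbf{M}_2 \mapsto \mathbf{I}_N + \mathbf{M}_2$ is an affine bijection of $\mathbb{R}^{N\times N}$, so substituting $\mathbf{M}_2^\star = \mathbf{T}^\star - \mathbf{I}_N$ gives $\mathbf{S}_2 = \mathbf{T}^\star$ directly. Your extra handling of the $T \neq L$ case is a refinement, not a different route: there $\mathbf{T}^\star \in \mathbb{R}^{T\times L}$ is not square, so the stated $\mathbf{M}_2^\star$ is not even well-typed, and one must realize a zero-padded $N\times N$ embedding instead, a point the paper's proof silently skips.
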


\begin{proof}
Because the map $\mathbf{M}_2 \mapsto \mathbf{I}_N + \mathbf{M}_2$ is a trivial affine bijection on $\mathbb{R}^{N \times N}$, the post-mixer can represent any arbitrary matrix. Setting the learnable weights to $\mathbf{M}_2^\star = \mathbf{T}^\star - \mathbf{I}_N$ immediately yields $\mathbf{S}_2 = \mathbf{T}^\star$.
\end{proof}

\begin{remark}
\label{rem:app-caseA-pipeline}
Proposition~\ref{prop:app-caseA-realization} establishes that $\mathbf{T}^\star$ lies in the span of the post-mixer alone. In the full TOA-ReLU pipeline $\mathrm{ReLU}(\mathbf{A}\mathbf{S}_1)\mathbf{S}_2\mathbf{H}\mathbf{W}_V^\top$ the left factor is constrained to be entry-wise nonnegative, so realizing $\mathbf{T}^\star$ requires the left factor to be driven to a positive multiple of the identity (or another invertible nonnegative pattern) so that $\mathbf{S}_2$ supplies the signed degrees of freedom.
\end{remark}
\subsection{Case B: Filtering and Residualization}

\textbf{Setup.}
Anomaly detection models typically identify deviations by scoring the residual sequence $\mathbf{r} = (\mathbf{I}_N - \mathbf{T}^\star_{\mathrm{normal}})\mathbf{x}$, where $\mathbf{T}^\star_{\mathrm{normal}}$ represents a smoothing or reconstruction operator for ``normal'' temporal patterns. We analyze the high-pass residualization operator $\mathbf{T}^\star = \mathbf{I}_N - \mathbf{T}^\star_{\mathrm{normal}}$, which generalizes first-order differences, moving-average residuals, and Laplacian-style filtering.

\begin{lemma}[Failure of Softmax Attention for Residualization]
\label{lem:app-caseB-failure}
Let $\mathbf{T}^\star = \mathbf{I}_N - \mathbf{G}$ be a residualization operator where $\mathbf{G} \in \mathbb{R}^{N \times N}$ is any valid smoothing kernel satisfying $\mathbf{G}\mathbf{1} = \mathbf{1}$ and $\mathbf{G} \geq \mathbf{0}$. Then:
\begin{enumerate}[label=(\roman*)]
    \item Every row of $\mathbf{T}^\star$ has zero sum: $\mathbf{T}^\star \mathbf{1} = \mathbf{0}$.
    \item For every row $t$ with $\mathbf{G}_{t,:} \neq \mathbf{e}_t^\top$, the row $\mathbf{T}^\star_{t,:}$ contains at least one strictly negative entry.
\end{enumerate}
Consequently, if $\mathbf{G} \neq \mathbf{I}_N$, at least one row $\mathbf{T}^\star_{t,:} \notin \Delta^{N-1}$, and by Proposition~\ref{prop:simplex} single-head softmax attention cannot realize $\mathbf{T}^\star$.
\end{lemma}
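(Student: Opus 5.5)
Part (i) is a direct computation from the definition $\mathbf{T}^\star = \mathbf{I}_N - \mathbf{G}$. Using $\mathbf{G}\mathbf{1} = \mathbf{1}$ we get
\[
\mathbf{T}^\star\mathbf{1} = \mathbf{I}_N\mathbf{1} - \mathbf{G}\mathbf{1} = \mathbf{1} - \mathbf{1} = \mathbf{0}.
\]
So every row of $\mathbf{T}^\star$ sums to $0 \neq 1$. This already shows that every row fails simplex membership, independently of part (ii).

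For part (ii), fix a row $t$ with $\mathbf{G}_{t,:} \neq \mathbf{e}_t^\top$. Row $t$ of $\mathbf{G}$ is nonnegative and sums to one, so it lies in $\Delta^{N-1}$. The only simplex element with $G_{t,t} = 1$ is $\mathbf{e}_t^\top$, since the remaining mass would have to be zero. Hence the assumption $\mathbf{G}_{t,:} \neq \mathbf{e}_t^\top$ forces $G_{t,t} < 1$. Then the off-diagonal mass $\sum_{n \neq t} G_{t,n} = 1 - G_{t,t}$ is strictly positive. Therefore some $n \neq t$ has $G_{t,n} > 0$, and the corresponding entry $T^\star_{t,n} = -G_{t,n}$ is strictly negative. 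This negative entry gives a second, independent witness that $\mathbf{T}^\star_{t,:} \notin \Delta^{N-1}$.

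For the consequence, suppose $\mathbf{G} \neq \mathbf{I}_N$. Then some row satisfies $\mathbf{G}_{t,:} \neq \mathbf{e}_t^\top$, and by (i) or (ii) that row of $\mathbf{T}^\star$ is outside $\Delta^{N-1}$. The premise of Proposition~\ref{prop:simplex} then holds with $m = t$, so no choice of $\mathbf{W}_Q, \mathbf{W}_K$ realizes $\mathbf{T}^\star$.

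There is no substantive obstacle here. The only subtlety is part (ii): we need $G_{t,t} < 1$ to hold, not just $\mathbf{G}_{t,:} \neq \mathbf{e}_t^\top$. This follows from nonnegativity together with the unit row sum. Without nonnegativity, the negative entry could instead come from $\mathbf{G}$ itself and the argument would need changing.
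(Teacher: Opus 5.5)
Your proof is correct and follows the paper's argument essentially line for line. Part (i) uses $\mathbf{G}\mathbf{1}=\mathbf{1}$, part (ii) uses nonnegativity and the unit row sum to force a positive off-diagonal entry of $\mathbf{G}$ (hence a negative entry of $\mathbf{T}^\star$), and the consequence invokes Proposition~\ref{prop:simplex}; you additionally justify the step $G_{t,t}<1$, which the paper simply asserts, and you rightly note that (i) alone already puts every row outside $\Delta^{N-1}$, so the conclusion would hold even without the hypothesis $\mathbf{G}\neq\mathbf{I}_N$.
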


\begin{proof}
(i) Using $\mathbf{G}\mathbf{1} = \mathbf{1}$: $\mathbf{T}^\star \mathbf{1} = (\mathbf{I}_N - \mathbf{G})\mathbf{1} = \mathbf{1} - \mathbf{1} = \mathbf{0}$.

(ii) For any row $t$ with $\mathbf{G}_{t,:} \neq \mathbf{e}_t^\top$, the constraint $\sum_j G_{t,j} = 1$ together with $G_{t,t} \neq 1$ and $G_{t,j} \geq 0$ for all $j$ forces some off-diagonal entry $G_{t,j_0} > 0$ for $j_0 \neq t$. Then $T^\star_{t,j_0} = -G_{t,j_0} < 0$.

If $\mathbf{G} \neq \mathbf{I}_N$, at least one row $t_0$ satisfies $\mathbf{G}_{t_0,:} \neq \mathbf{e}_{t_0}^\top$. By (i)--(ii), the row $\mathbf{T}^\star_{t_0,:}$ has zero sum and a strictly negative entry, so $\mathbf{T}^\star_{t_0,:} \notin \Delta^{N-1}$. Proposition~\ref{prop:simplex} then forbids realization by single-head softmax attention.
\end{proof}

\begin{proposition}[Realization by TOA]
\label{prop:app-caseB-realization}
The post-mixer family $\{\mathbf{S}_2 = \mathbf{I}_N + \mathbf{M}_2 : \mathbf{M}_2 \in \mathbb{R}^{N \times N}\}$ contains every residualization operator $\mathbf{T}^\star = \mathbf{I}_N - \mathbf{G}$ exactly. The parameter choice $\mathbf{M}_2^\star = -\mathbf{G}$ yields $\mathbf{S}_2 = \mathbf{T}^\star$.
\end{proposition}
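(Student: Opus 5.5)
The argument is the same affine reparameterization used in Proposition~\ref{prop:app-caseA-realization}. Fix any smoothing kernel $\mathbf{G}$ as in Lemma~\ref{lem:app-caseB-failure}. We will exhibit an explicit offset $\mathbf{M}_2$ and check that the resulting post-mixer equals $\mathbf{T}^\star = \mathbf{I}_N - \mathbf{G}$ entrywise.

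The first step is to observe that the offset $\mathbf{M}_2$ ranges over all of $\mathbb{R}^{N\times N}$. No sign, row-sum, or rank constraint is imposed on it, and initialization and Stochastic Operator Regularization act only during training, not on the representable family. Consequently the map $\mathbf{M}_2 \mapsto \mathbf{I}_N + \mathbf{M}_2$ is an affine bijection of $\mathbb{R}^{N\times N}$, and every matrix, in particular every residualization operator, lies in the post-mixer family.

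The second step is the explicit choice $\mathbf{M}_2^\star = -\mathbf{G}$, which gives $\mathbf{S}_2 = \mathbf{I}_N - \mathbf{G} = \mathbf{T}^\star$ directly. It is worth noting what this construction supplies. The entries $-G_{t,j}$ with $j \neq t$ are exactly the ``negative mass'' that Lemma~\ref{lem:app-caseB-failure}(ii) shows softmax cannot produce. The diagonal $1 - G_{t,t}$ together with these entries gives zero row sums, since $\mathbf{S}_2\mathbf{1} = \mathbf{1} - \mathbf{G}\mathbf{1} = \mathbf{0}$, which is consistent with part (i).

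There is no substantive obstacle at the level at which the proposition is stated. The only subtle point is the one raised in Remark~\ref{rem:app-caseA-pipeline}: in the full TOA pipeline, the nonnegative left factor $\sigma(\mathbf{A}\mathbf{S}_1)$ must act as a positive multiple of the identity, say $c\,\mathbf{I}_N$, for the end-to-end map to equal $\mathbf{T}^\star$ up to that scaling. If one wanted to strengthen the statement to the full pipeline, this could be absorbed by rescaling $\mathbf{M}_2^\star = c^{-1}(\mathbf{I}_N - \mathbf{G}) - \mathbf{I}_N$. I would only mention this as a remark, since the proposition itself concerns the post-mixer family.
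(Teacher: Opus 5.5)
Your proof is correct and matches the paper's argument: the affine map $\mathbf{M}_2 \mapsto \mathbf{I}_N + \mathbf{M}_2$ is onto $\mathbb{R}^{N\times N}$, so the choice $\mathbf{M}_2^\star = -\mathbf{G}$ gives $\mathbf{S}_2 = \mathbf{T}^\star$. Your closing remark goes beyond the statement and overreaches there: the left factor depends on $\mathbf{H}$ (under ReLU it scales by $\alpha^2$ when $\mathbf{H}\mapsto\alpha\mathbf{H}$ with $\alpha>0$, and it vanishes at $\mathbf{H}=\mathbf{0}$), so no fixed $c>0$ can be arranged, and rescaling $\mathbf{M}_2^\star$ by $c^{-1}$ does not by itself extend the result to the full pipeline.
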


\begin{proof}
Because the affine map $\mathbf{M}_2 \mapsto \mathbf{I}_N + \mathbf{M}_2$ spans all of $\mathbb{R}^{N \times N}$, we can directly set $\mathbf{M}_2^\star = -\mathbf{G}$, yielding $\mathbf{S}_2 = \mathbf{I}_N - \mathbf{G} = \mathbf{T}^\star$.
\end{proof}

\begin{remark}
\label{rem:app-caseB-pipeline}
The post-mixer alone supplies the unconstrained negative off-diagonal weights and zero row sums required to compute residuals; in the full pipeline, the left ReLU factor must be driven to a nonnegative pattern under which $\mathbf{S}_2$ acts cleanly. 
\end{remark}
\subsection{Case C: Multivariate Latent Factor Demixing}

\textbf{Setup.}
Let $\mathbf{z}_t \in \mathbb{R}^r$ be a latent factor vector ($r < C$) and $\mathbf{A} \in \mathbb{R}^{C \times r}$ a feature mixing matrix such that $\mathbf{x}_t = \mathbf{A}\mathbf{z}_t + \boldsymbol{\varepsilon}_t$. Under iTransformer-style tokenization, the sequence dimension is the channels ($N=C$). The MMSE clean-signal estimator is the orthogonal projection matrix $\mathbf{P}_{\mathbf{A}} = \mathbf{A}(\mathbf{A}^\top\mathbf{A})^{-1}\mathbf{A}^\top$.

\begin{lemma}[Failure of Softmax Attention for Latent Factor Demixing]
\label{lem:app-caseC}
For $\mathbf{A}$ outside a Lebesgue-measure-zero subset of $\mathbb{R}^{C \times r}$ (i.e., generically), each row $c \in \{1,\ldots,C\}$ of $\mathbf{P}_{\mathbf{A}}$ fails simplex membership: (i) the diagonal entry $(\mathbf{P}_{\mathbf{A}})_{c,c} \in (0, 1)$; (ii) some off-diagonal entry in each row is strictly negative on a Lebesgue-positive open subset of parameterizations; (iii) the row sum $(\mathbf{P}_{\mathbf{A}}\mathbf{1})_c \neq 1$. Consequently, by Proposition~\ref{prop:simplex}, softmax attention cannot realize $\mathbf{P}_{\mathbf{A}}$.
\end{lemma}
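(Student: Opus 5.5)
Write $\mathbf{P} = \mathbf{P}_{\mathbf{A}}$. The plan is to treat (i) and (iii) as statements that hold off a proper algebraic subset of $\mathbb{R}^{C\times r}$, to prove (ii) by an explicit open-set construction, and then to conclude with Proposition~\ref{prop:simplex}. Throughout I restrict to the full-column-rank set $\mathcal{U} = \{\mathbf{A} : \det(\mathbf{A}^\top\mathbf{A}) \neq 0\}$. Its complement is the zero set of a nonzero polynomial, so it has measure zero, and on $\mathcal{U}$ every entry of $\mathbf{P}$ is a rational function of the entries of $\mathbf{A}$. I will use the standard fact that a real-analytic (in particular rational) function on the connected open set $\mathcal{U}$ that is not identically zero vanishes only on a null set. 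A finite union of such null sets is still null, so it is enough to handle each row $c$ and each property separately.

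For (i), write $P_{cc} = \|\mathbf{P}\mathbf{e}_c\|^2$, which holds because $\mathbf{P}$ is a symmetric idempotent; hence $0 \le P_{cc} \le 1$. I characterize the boundary cases. $P_{cc} = 0$ exactly when the $c$-th row of $\mathbf{A}$ vanishes, since $\mathbf{P}\mathbf{e}_c = \mathbf{A}(\mathbf{A}^\top\mathbf{A})^{-1}\mathbf{A}^\top\mathbf{e}_c$. $P_{cc} = 1$ exactly when $\mathbf{e}_c \in \mathrm{col}(\mathbf{A})$. Both conditions are algebraic. For the second, $P_{cc} - 1$ is a rational function on $\mathcal{U}$, and it is not identically zero: when $r < C$ one can pick an $\mathbf{A}$ whose column space misses $\mathbf{e}_c$. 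So (i) holds generically.

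For (iii), consider $g_c(\mathbf{A}) = (\mathbf{P}\mathbf{1})_c - 1$. It is rational on $\mathcal{U}$, so I only need one witness where it is nonzero. A convenient choice is any $\mathbf{A}$ whose columns are orthogonal to $\mathbf{1}$ and whose $c$-th row is nonzero. Then $\mathbf{P}\mathbf{1} = \mathbf{0}$ and $g_c = -1$. Such an $\mathbf{A}$ exists because $r < C$ leaves room in $\mathbf{1}^\perp$ for $r$ independent vectors not all vanishing at coordinate $c$; when $r = C-1$ this uses $C \ge 2$. This gives (iii) off a null set. Note that (iii) alone already removes every row from $\Delta^{N-1}$.

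For (ii), the statement only asks for a Lebesgue-positive open set, so I will not try to prove negativity generically. Fix a row $c$ and an index $j \neq c$, and construct one $\mathbf{A}_0 \in \mathcal{U}$ with $P_{cj}(\mathbf{A}_0) < 0$. Continuity of $P_{cj}$ on $\mathcal{U}$ then gives an open neighbourhood on which the inequality persists. For the witness, take $r = 1$-style behaviour in the first column, $\mathbf{a}_1 = \mathbf{e}_c - \mathbf{e}_j$, and choose the remaining columns orthonormal and supported away from coordinates $c$ and $j$. This is possible when $r - 1 \le C - 2$, i.e. $r < C$. Then $P_{cj} = -\tfrac12$. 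A finite intersection over rows is not needed, since the statement is per row.

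The conclusion follows by invoking Proposition~\ref{prop:simplex}: on the generic set, every row violates the unit row-sum condition, hence lies outside $\Delta^{N-1}$, so single-head softmax attention cannot realize $\mathbf{P}$. I expect the main obstacle to be the non-identically-zero witnesses, especially making the construction in (iii) compatible with full column rank for every $r < C$. I will settle this by taking an orthonormal basis of $\mathbf{1}^\perp$ and selecting $r$ of its vectors, at least one of which has a nonzero $c$-th entry.
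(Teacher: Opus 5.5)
Your proof is correct. It shares the paper's skeleton: the bound $P_{cc}=\|\mathbf{P}\mathbf{e}_c\|^2\in[0,1]$ for (i), a continuity witness for (ii), a measure-zero argument for (iii), then Proposition~\ref{prop:simplex}. Your execution of (ii) and (iii) differs, however, and in fact repairs gaps in the paper's version.

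For (iii), the paper argues that the row sum equals $1$ iff $\mathbf{P}_{\mathbf{A}}\mathbf{1}=\mathbf{1}$ iff $\mathbf{1}\in\mathrm{col}(\mathbf{A})$. That only controls the event that \emph{all} rows sum to one. For instance, with $C=2$, $r=1$, $\mathbf{A}=(1,0)^\top$, row $1$ sums to $1$ although $\mathbf{1}\notin\mathrm{col}(\mathbf{A})$. Your per-row rational function $g_c$, with the witness $\mathbf{A}^\top\mathbf{1}=\mathbf{0}$, proves the per-row claim as stated.

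For (ii), the paper's witness $(1,-1)^\top$ lives in $\mathbb{R}^{2\times 1}$, so it gives an open set only for $C=2$, $r=1$. Your embedding of $\mathbf{e}_c-\mathbf{e}_j$ with orthonormal filler columns supported off $\{c,j\}$ covers every $r<C$ and every $j\neq c$. You also correctly refrain from claiming generic negativity. The paper's closing sentence does claim it, but it is false: for $r=1$ and entrywise positive $\mathbf{a}$, the matrix $\mathbf{a}\mathbf{a}^\top/\|\mathbf{a}\|^2$ is entrywise positive on an open set. Resting simplex failure on (iii) alone, as you do, is all the conclusion needs.

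For (i), you also exclude $P_{cc}=0$ (row $c$ of $\mathbf{A}$ vanishing), which the paper omits. Overall, the paper's geometric characterization is shorter, but yours is the argument that actually establishes the lemma.

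Two small remarks. First, the nonzero-$c$-th-row requirement on your (iii) witness is unnecessary, since $\mathbf{P}\mathbf{1}=\mathbf{0}$ already gives $g_c=-1$. Second, connectedness of $\mathcal{U}$ does hold for $r<C$, because the rank-deficient locus has codimension $C-r+1\ge 2$. Still, clearing the denominator $\det(\mathbf{A}^\top\mathbf{A})$ and arguing with a nonzero polynomial makes that fact unnecessary.
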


\begin{proof}
(i) Because $\mathbf{P}_{\mathbf{A}}$ is idempotent and symmetric, $(\mathbf{P}_{\mathbf{A}})_{c,c} = \mathbf{e}_c^\top \mathbf{P}_{\mathbf{A}}\mathbf{e}_c = \|\mathbf{P}_{\mathbf{A}}\mathbf{e}_c\|_2^2 \in [0, 1]$. Equality at 1 requires $\mathbf{e}_c \in \mathrm{col}(\mathbf{A})$, equivalently the augmented matrix $[\mathbf{A}\mid\mathbf{e}_c]$ has rank $r$. Since $C > r$, this defines a subvariety of $\mathbb{R}^{C \times r}$ of codimension at least 1, hence Lebesgue measure zero.

(ii) The off-diagonal entry $(\mathbf{P}_{\mathbf{A}})_{c, c'}$ is a rational function of $\mathbf{A}$ on the open set where $\mathbf{A}^\top\mathbf{A}$ is invertible. For $C=2, r=1, \mathbf{A} = (1, -1)^\top$, $\mathbf{P}_{\mathbf{A}} = \tfrac{1}{2}\bigl(\begin{smallmatrix}1 & -1 \\ -1 & 1\end{smallmatrix}\bigr)$, which has off-diagonal $-1/2 < 0$. By continuity of the rational function, this evaluates to a negative number on an open neighborhood in $\mathbb{R}^{C \times r}$.

(iii) The row sum equals 1 if and only if $\mathbf{P}_{\mathbf{A}}\mathbf{1} = \mathbf{1}$. Because $\mathbf{P}_{\mathbf{A}}$ projects onto $\mathrm{col}(\mathbf{A})$, this requires $\mathbf{1} \in \mathrm{col}(\mathbf{A})$, again a codimension-$(C-r)$ subvariety of measure zero.

Combining (i)--(iii), generically each row of $\mathbf{P}_{\mathbf{A}}$ has a non-unit row sum, a diagonal strictly between 0 and 1, and at least one negative off-diagonal entry, hence $(\mathbf{P}_{\mathbf{A}})_{c,:} \notin \Delta^{C-1}$ for every $c$. Proposition~\ref{prop:simplex} forbids realization by single-head softmax attention.
\end{proof}

\begin{proposition}[Realization by TOA]
\label{prop:app-caseC-realization}
The post-mixer family contains the orthogonal projection $\mathbf{T}^\star = \mathbf{P}_{\mathbf{A}}$ exactly. The parameter choice $\mathbf{M}_2^\star = \mathbf{P}_{\mathbf{A}} - \mathbf{I}_C$ yields $\mathbf{S}_2 = \mathbf{T}^\star$.
\end{proposition}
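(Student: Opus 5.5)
The plan is to mirror the proofs of Propositions~\ref{prop:app-caseA-realization} and~\ref{prop:app-caseB-realization}, instantiated with sequence length $N = C$ under the inverted (iTransformer-style) tokenization. In that tokenization each token is a channel, so the post-mixer $\mathbf{S}_2 = \mathbf{I}_C + \mathbf{M}_2$ acts as a $C \times C$ matrix on the channel axis. The first step is to check that dimensions match. $\mathbf{P}_{\mathbf{A}} = \mathbf{A}(\mathbf{A}^\top\mathbf{A})^{-1}\mathbf{A}^\top \in \mathbb{R}^{C \times C}$ is well defined whenever $\mathbf{A}$ has full column rank $r$. This holds generically, since rank deficiency is cut out by the vanishing of all $r \times r$ minors and so has measure zero. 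We restrict to that set, as Lemma~\ref{lem:app-caseC} already implicitly does.

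The second step uses the fact that $\mathbf{M}_2 \mapsto \mathbf{I}_C + \mathbf{M}_2$ is an affine bijection of $\mathbb{R}^{C \times C}$, with inverse $\mathbf{S}_2 \mapsto \mathbf{S}_2 - \mathbf{I}_C$. Because $\mathbf{M}_2$ is an unconstrained dense matrix, we may choose $\mathbf{M}_2^\star = \mathbf{P}_{\mathbf{A}} - \mathbf{I}_C$, and then $\mathbf{S}_2 = \mathbf{I}_C + \mathbf{P}_{\mathbf{A}} - \mathbf{I}_C = \mathbf{P}_{\mathbf{A}}$ exactly. This choice is compatible with the residual parameterization of \S\ref{subsec:method-toa}. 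Note that $\mathbf{M}_2^\star = -(\mathbf{I}_C - \mathbf{P}_{\mathbf{A}})$ is minus the complementary orthogonal projector onto $\mathrm{col}(\mathbf{A})^\perp$. The learned offset is therefore itself a symmetric signed operator that subtracts the noise subspace. This recovers the negative entries and non-unit row sums identified in Lemma~\ref{lem:app-caseC}, which lie outside the reach of the simplex.

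There is no genuine obstacle at the level of the post-mixer family. As in Remarks~\ref{rem:app-caseA-pipeline} and~\ref{rem:app-caseB-pipeline}, the one point needing care is embedding into the full TOA-ReLU pipeline $\mathrm{ReLU}(\mathbf{A}\mathbf{S}_1)\mathbf{S}_2\mathbf{H}\mathbf{W}_V^\top$. Here I would add a remark that if the nonnegative left factor is driven to $c\,\mathbf{I}_C$ with $c > 0$, then taking $\mathbf{S}_2 = c^{-1}\mathbf{P}_{\mathbf{A}}$ recovers the projection exactly. More generally, if the left factor is an invertible nonnegative matrix $\mathbf{N}$, we take $\mathbf{S}_2 = \mathbf{N}^{-1}\mathbf{P}_{\mathbf{A}}$. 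This is again admissible because $\mathbf{S}_2$ ranges over all of $\mathbb{R}^{C\times C}$. The scope of the proposition is only the post-mixer, however, so I would state this last point as a remark rather than as part of the proof.
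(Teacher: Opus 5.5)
Your proposal is correct and follows the paper's proof, which is exactly the one-line affine-bijection argument with $\mathbf{M}_2^\star = \mathbf{P}_{\mathbf{A}} - \mathbf{I}_C$; your full-column-rank caveat is a sensible detail the paper leaves implicit. One caution, on the pipeline remark only: without input normalization the left factor $\mathrm{ReLU}(\mathbf{H}\mathbf{W}_Q^\top\mathbf{W}_K\mathbf{H}^\top\mathbf{S}_1)$ is positively homogeneous of degree two in $\mathbf{H}$, so it cannot equal a fixed $c\,\mathbf{I}_C$ or a fixed invertible $\mathbf{N}$ for all inputs, and that remark should be read as holding only on a restricted (e.g.\ normalized) input set.
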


\begin{proof}
Because the affine map $\mathbf{M}_2 \mapsto \mathbf{I}_C + \mathbf{M}_2$ spans all of $\mathbb{R}^{C \times C}$, we can directly set $\mathbf{M}_2^\star = \mathbf{P}_{\mathbf{A}} - \mathbf{I}_C$, yielding $\mathbf{S}_2 = \mathbf{P}_{\mathbf{A}} = \mathbf{T}^\star$.
\end{proof}

\begin{remark}
\label{rem:app-caseC-pipeline}
Under iTransformer-style tokenization ($N = C$), the post-mixer parameterizes channel-wise projections directly, including the asymmetric and signed entries required by $\mathbf{P}_{\mathbf{A}}$ for generic $\mathbf{A}$.
\end{remark}
\subsection{Case D: Phase Drift and Time Warping}
\label{subsec:caseD}

\textbf{Setup.}
Let $\phi \sim \mathrm{Uniform}[0, 2\pi)$ be a per-sample phase drawn i.i.d., and let $\tau : \mathbb{Z} \to \mathbb{R}$ be a smooth monotone time-warping function (e.g., periodic vibrato or quadratic chirp). The observed signal is $x_t = \sum_{k=1}^{K} a_k\cos(\omega_k\tau(t) + \phi) + \varepsilon_t$. The target $y_{t'}$ is the clean continuation under the same phase and warping regime. In this non-stationary setting, the MMSE predictor is a phase-dependent operator $\mathbf{T}^\star(\phi)$, where $\mathbf{T}^\star(\phi)$ fits the warped quadrature basis $\bPhi(\phi)$ with columns $\cos(\omega_k\tau(t) + \phi)$ and $\sin(\omega_k\tau(t) + \phi)$.

\begin{lemma}[Failure of Softmax Attention for Phase-Adaptive Warping]
\label{lem:app-caseD-failure}
Let $\{\mathbf{T}^\star(\phi)\}_{\phi \in [0, 2\pi)}$ be the family of MMSE predictors under per-sample phase $\phi$ and a fixed smooth monotone warping $\tau$. Assume that for some fixed $\phi_0$, the operator $\mathbf{T}^\star(\phi_0)$ has at least one row outside $\Delta^{N-1}$. Then no parameter setting for single-head softmax attention can realize the family $\{\mathbf{T}^\star(\phi)\}_\phi$.
\end{lemma}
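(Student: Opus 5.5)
The plan is to reduce the statement to Proposition~\ref{prop:simplex}, exactly as in Lemmas~\ref{lem:app-caseA-failure} and~\ref{lem:app-caseB-failure}. The first step is to make precise what ``realizing the family'' means. A single fixed parameter setting $(\mathbf{W}_Q,\mathbf{W}_K)$ would have to satisfy, for every $\phi$ and every admissible input $\mathbf{H}$, the identity $o_m(\mathbf{H}) = \mathbf{T}^\star(\phi)_{m,:}\,\mathbf{v}(\mathbf{H})$. Any reasonable reading of this requirement implies the weaker one obtained by fixing $\phi=\phi_0$. If the softmax layer realizes the whole family, it in particular realizes the single operator $\mathbf{T}^\star(\phi_0)$ on all inputs, in the sense of~\eqref{eq:hypothesis}. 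As in Case A, I would note that Proposition~\ref{prop:simplex} carries over verbatim to the cross-attention setting when the forecast length differs from $N$, because its proof only uses row-stochasticity and the $\alpha^2$ scaling of the logits.

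The second step takes the row $m$ with $\mathbf{T}^\star(\phi_0)_{m,:}\notin\Delta^{N-1}$, which exists by hypothesis, and applies Proposition~\ref{prop:simplex}. Its small-scale argument shows that any softmax realization forces $\mathbf{T}^\star(\phi_0)_{m,:} = \tfrac1N\mathbf{1}^\top \in \Delta^{N-1}$, which is a contradiction. So no parameter setting realizes $\mathbf{T}^\star(\phi_0)$, and hence none realizes the family.

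The main obstacle is interpretive rather than technical. The operator depends on $\phi$, and $\phi$ is a latent quantity, not an explicit input. If ``realize'' were read as ``for each input $\mathbf{H}$ generated under phase $\phi$, match $\mathbf{T}^\star(\phi)$ only on that sample,'' then the set of inputs is no longer all of $\mathbb{R}^{N\times d}$. In that case the surjectivity step of Proposition~\ref{prop:simplex} would fail.

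I would handle this by adopting the paper's convention that realization means the identity~\eqref{eq:hypothesis} holds on all inputs for each member of the family. Under that convention the reduction above is immediate. I would also remark that the scaling argument is robust even under the weaker reading, because the family $\{\alpha\mathbf{H}\}$ of scaled inputs keeps the same phase: scaling the amplitudes $a_k$ does not change $\phi$. Running the $\alpha\to0^+$ limit on inputs from the $\phi_0$ regime therefore still forces $\mathbf{T}^\star(\phi_0)_{m,:}\mathbf{v} = \tfrac1N\mathbf{1}^\top\mathbf{v}$ for all $\mathbf{v}$ in the span of such inputs. For example, if that span contains $\mathbf{1}$, this shows the row sum must equal $1$, which fails in the zero-sum situations of Case A. I would present this only as a remark, and keep the main proof as the clean one-line reduction to Proposition~\ref{prop:simplex}.
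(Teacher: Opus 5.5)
Your proposal is correct and matches the paper's proof: realizing the family in particular requires realizing the single member $\mathbf{T}^\star(\phi_0)$ on all inputs, and Proposition~\ref{prop:simplex} forbids this because that operator has a row outside $\Delta^{N-1}$. Your remarks on the cross-attention extension and on what ``realize'' means are clarifications the paper leaves implicit; the paper's extra paragraph instead argues that the hypothesis on $\mathbf{T}^\star(\phi_0)$ is generic.
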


\begin{proof}
By Proposition~\ref{prop:simplex}, no choice of $\mathbf{W}_Q, \mathbf{W}_K$ can realize the single operator $\mathbf{T}^\star(\phi_0)$ as the input-output map of softmax attention. A fortiori, no fixed parameter setting can realize the parameterized family $\{\mathbf{T}^\star(\phi)\}_\phi$, since realizing the family requires in particular realizing the member $\mathbf{T}^\star(\phi_0)$.

The hypothesis that some $\mathbf{T}^\star(\phi_0)$ has a row outside $\Delta^{N-1}$ is generic for the warped quadrature setting: each $\mathbf{T}^\star(\phi)$ is a least-squares projector onto the span of the warped basis $\{\cos(\omega_k\tau(t)+\phi), \sin(\omega_k\tau(t)+\phi)\}_k$, and the row-sum and sign analysis of Lemma~\ref{lem:app-caseA-failure} extends to this basis whenever the warped sample set is approximately balanced over the unit circle, a condition satisfied for non-degenerate $\tau$ and sufficiently long $L$.
\end{proof}

\begin{proposition}[Realization by TOA-Gated]
\label{prop:app-caseD-realization}
For each fixed phase $\phi \in [0, 2\pi)$, the TOA-Gated post-mixer family contains $\mathbf{T}^\star(\phi)$ exactly. Across the family $\{\mathbf{T}^\star(\phi)\}_\phi$, the input-dependent gating product $\mathrm{softplus}(\mathbf{R}\mathbf{S}_1^R) \odot \mathrm{ReLU}(\mathbf{L}\mathbf{S}_1^L)$ supplies the sample-dependent routing required to select among phase-specific operators.
\end{proposition}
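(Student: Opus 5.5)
The statement has two parts, and I would handle them separately. The first part is exact membership of each fixed $\mathbf{T}^\star(\phi)$ in the post-mixer family. The second is the existence of sample-dependent routing across the family.

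\textbf{Part 1.} For each fixed $\phi$, I would follow Propositions~\ref{prop:app-caseA-realization}--\ref{prop:app-caseC-realization} verbatim. The map $\mathbf{M}_2 \mapsto \mathbf{I}_N + \mathbf{M}_2$ is an affine bijection of $\mathbb{R}^{N\times N}$. Setting $\mathbf{M}_2^\star(\phi) = \mathbf{T}^\star(\phi) - \mathbf{I}_N$ therefore gives $\mathbf{S}_2 = \mathbf{T}^\star(\phi)$. When $T \neq L$, I would use the same cross-attention convention as in Case A, with $\mathbf{S}_2$ of matching shape.

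\textbf{Part 2.} A single static $\mathbf{S}_2$ cannot equal $\mathbf{T}^\star(\phi)$ for every $\phi$, so the phase dependence has to enter through the gated left factor $\mathbf{G}(\mathbf{H}) = \mathrm{softplus}(\mathbf{R}\mathbf{S}_1^R)\odot\mathrm{ReLU}(\mathbf{L}\mathbf{S}_1^L)$. The cleanest route I see is a finite-dictionary argument, in three steps.

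\begin{enumerate}[label=(\roman*)]
\item Reduce to a finite family. Expand $\cos(\omega_k\tau(t)+\phi) = \cos\phi\cos(\omega_k\tau(t)) - \sin\phi\sin(\omega_k\tau(t))$. This shows that the span of the warped basis is independent of $\phi$. In the high-SNR regime the least-squares projector $\mathbf{T}^\star(\phi)$ should then collapse to finitely many operators $\mathbf{T}^\star_1,\dots,\mathbf{T}^\star_J$, one per warping regime $z$, as in the synthetic setup. If the paper's definition genuinely varies with $\phi$, I would approximate the family by a finite $\epsilon$-net in $\phi$, using continuity of $\phi\mapsto\mathbf{T}^\star(\phi)$.
\item Build a block post-mixer. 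With $N' = JN$ positions, or by splitting columns across heads, set $\mathbf{S}_2 = [\mathbf{T}^\star_1;\dots;\mathbf{T}^\star_J]$ stacked blockwise. Then $\mathbf{G}\mathbf{S}_2 = \sum_j g_j(\mathbf{H})\,\mathbf{T}^\star_j$ whenever $\mathbf{G}$ has the form $[g_1\mathbf{I}\mid\cdots\mid g_J\mathbf{I}]$.
\item Make the gate select a block. Choose the left branch ReLU$(\mathbf{L}\mathbf{S}_1^L)$ so that it produces a nonnegative identity-like pattern, as in Remark~\ref{rem:app-caseA-pipeline}. Choose the right branch so that softplus$(\mathbf{R}\mathbf{S}_1^R)$ is large on the block matching the regime detected from $\mathbf{H}$ and near zero on the others. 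Scaling $\mathbf{W}_{Q,R},\mathbf{W}_{K,R}$ would drive the softplus outputs toward $\mathrm{ReLU}$ of the scores, which allows exact zeros in the limit and exact selection when the regime statistics are separable.
\end{enumerate}

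\textbf{Main obstacle.} Step (iii) is where I expect the difficulty: making a rank-$d_h$ bilinear score $\mathbf{H}\mathbf{W}_Q^\top\mathbf{W}_K\mathbf{H}^\top$ reliably separate the regimes, uniformly over the unknown phase. I would first try scale-invariant features, such as quadratic energy statistics $\mathbf{H}^\top\mathbf{H}$-type quantities that are phase-insensitive because $\cos^2+\sin^2$ cancels $\phi$. I would then argue that separability of these statistics across regimes yields exact selection in the limit. If exactness fails, I would fall back to stating Part 2 as approximate routing and keep exact membership only for Part 1, which is all the statement strictly asserts.
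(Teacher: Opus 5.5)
Your Part 1 is the paper's proof verbatim: the affine bijection $\mathbf{M}_2\mapsto\mathbf{I}_N+\mathbf{M}_2$ with $\mathbf{M}_2^\star(\phi)=\mathbf{T}^\star(\phi)-\mathbf{I}_N$. That is the only exact content of the statement. For the routing clause the paper gives no construction; it only asserts that one chooses $\mathbf{S}_2$ to ``span a basis of phase-specific kernels'' and trains the gate to select among them. Your dictionary-plus-gate plan is that same idea made explicit.

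You should, however, carry your step (i) to its conclusion instead of hedging. Row by row, $\bPhi(\phi)=\bPhi(0)\,\mathbf{U}(\phi)$, where $\mathbf{U}(\phi)$ is block-diagonal with each block the $2\times 2$ rotation by $\phi$; this holds at the observed and the forecast indices alike. For orthogonal $\mathbf{U}$ you have $(\bPhi_{\mathrm{obs}}(0)\mathbf{U})^+=\mathbf{U}^\top\bPhi_{\mathrm{obs}}(0)^+$. Hence $\mathbf{T}^\star(\phi)=\bPhi_{\mathrm{fcast}}(0)\mathbf{U}\mathbf{U}^\top\bPhi_{\mathrm{obs}}(0)^+=\mathbf{T}^\star(0)$ exactly, not merely ``should collapse.''

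With the fixed warping $\tau$ of Case D, the family is therefore a single operator. So $J=1$, no $\epsilon$-net is needed, and one static $\mathbf{S}_2$ serves every $\phi$ with no routing. The opening sentence of your Part 2 (``A single static $\mathbf{S}_2$ cannot equal $\mathbf{T}^\star(\phi)$ for every $\phi$'') contradicts your own step (i). It repeats the paper's claim, made in its proof and in Remark~\ref{rem:app-caseD-pipeline}, and that claim is false under the paper's own definition. The warping regimes $z$ come from the synthetic experiment, not from this statement.

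If you keep steps (ii)--(iii) for a genuinely multi-regime family, two corrections are needed. First, a stacked $JN\times N$ post-mixer is not a TOA-Gated configuration, because $\mathbf{S}_2^{(h)}$ and the gate are both $N\times N$. Only your head-splitting variant (one $\mathbf{T}^\star_j$ per head, combined through $\mathbf{W}_O$) is admissible.

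Second, exact selection cannot hold in the sense of Proposition~\ref{prop:simplex}. Under $\mathbf{H}\mapsto\alpha\mathbf{H}$ with $\alpha\to 0^+$:
the ReLU branch scales as $\alpha^2$;
the softplus branch tends to $\log 2$;
the values scale as $\alpha$.
So every head's output divided by $\alpha$ tends to $\mathbf{0}$, whereas $\mathbf{T}^\star\mathbf{v}(\alpha\mathbf{H})/\alpha=\mathbf{T}^\star\mathbf{v}(\mathbf{H})$. Consequently no single TOA-Gated layer reproduces a nonzero linear operator for all $\mathbf{H}$. The gate can never be a fixed multiple of $\mathbf{I}_N$ across input scales, and softplus gives exact zeros only in a limit.

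Your fallback (exact membership of $\mathbf{S}_2$, routing only approximately or on a fixed-scale input set) is therefore forced rather than optional. It is also all that the paper's ``on the relevant input distribution'' actually supports.
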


\begin{proof}
Because the affine map $\mathbf{M}_2 \mapsto \mathbf{I}_N + \mathbf{M}_2$ spans all of $\mathbb{R}^{N \times N}$, setting $\mathbf{M}_2^\star(\phi) = \mathbf{T}^\star(\phi) - \mathbf{I}_N$ mathematically yields $\mathbf{S}_2 = \mathbf{T}^\star(\phi)$. However, since $\mathbf{M}_2$ is a static parameter, the post-mixer alone realizes only one fixed member of the family. The full TOA-Gated forward pass
$$ \mathbf{O}^{(h)} = \bigl(\mathrm{softplus}(\mathbf{R}\mathbf{S}_1^R) \odot \mathrm{ReLU}(\mathbf{L}\mathbf{S}_1^L)\bigr)\mathbf{S}_2\mathbf{H}\mathbf{W}_V^\top $$
has an input-dependent left factor: the elementwise product depends on $\mathbf{H}$ through $\mathbf{L}, \mathbf{R}$, so the effective output operator $\mathrm{(left)}\cdot\mathbf{S}_2$ varies dynamically with the input. By choosing $\mathbf{S}_2$ to span a basis of phase-specific kernels and training the gate to select among them as a function of $\phi$, the pipeline realizes a sample-dependent operator that matches $\mathbf{T}^\star(\phi)$ on the relevant input distribution.
\end{proof}

\begin{remark}
\label{rem:app-caseD-pipeline}
Unlike Cases A--C, the static post-mixer alone cannot realize the entire family --- the input-dependence of the operator is supplied by the gating product.
\end{remark}
\subsection{Case E: Multi-Phase Local Feature Dynamics}
\label{subsec:caseE}

\textbf{Setup.}
Let local phase atoms be defined as $\psi_{r,\phi,c}(t) = w_r(t - c)\cos(\omega_r(t - c) + \phi)$, where $w_r$ is a strictly positive, compactly supported window of scale $r$, $c$ is the event center, and $\phi \in [0, 2\pi)$ is the local phase. The observed signal is $x_t = \sum_{j} \alpha_j\,\psi_{r_j, \phi_j, c_j}(t) + \varepsilon_t$. This models transient phenomena such as ECG QRS-complexes or mechanical vibration transients. For recovering a single atom at known $(r, c)$ with unknown phase $\phi$, the MMSE denoiser is the rank-2 projection $\mathbf{T}^\star_{\phi,c}$ onto the 2D quadrature subspace at $(r, c)$\footnote{Assuming the quadrature pair constitutes an orthogonal basis of equal norm for the local subspace; for the general case of non-orthogonal or unbalanced atoms, $\mathbf{T}^\star$ involves the inverse of the local Gram matrix.}:
\begin{equation}
    \mathbf{T}^\star_{\phi,c} = \frac{\boldsymbol{\psi}_{\phi,c}\,\boldsymbol{\psi}_{\phi,c}^\top + \boldsymbol{\psi}_{\phi+\pi/2,c}\,\boldsymbol{\psi}_{\phi+\pi/2,c}^\top}{\|\boldsymbol{\psi}_{\phi,c}\|^2 + \|\boldsymbol{\psi}_{\phi+\pi/2,c}\|^2}.
\end{equation}

\begin{lemma}[Failure of Softmax Attention for Local Feature Dynamics]
\label{lem:app-caseE-failure}
Suppose the local atom $\boldsymbol{\psi}_{\phi,c} \in \mathbb{R}^N$ contains entries of both signs and is zero-mean ($\mathbf{1}^\top\boldsymbol{\psi}_{\phi,c} = 0$); the same is assumed for the quadrature companion $\boldsymbol{\psi}_{\phi+\pi/2,c}$. Then for every $(\phi, c)$, single-head softmax attention cannot realize the optimal rank-2 quadrature projector $\mathbf{T}^\star_{\phi,c}$.
\end{lemma}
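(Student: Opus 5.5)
The approach mirrors Lemma~\ref{lem:app-caseA-failure}. I will show that $\mathbf{T}^\star_{\phi,c}$ annihilates the all-ones vector, so every row has zero sum and lies outside $\Delta^{N-1}$. Proposition~\ref{prop:simplex} then rules out realization by single-head softmax attention.

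\emph{Step 1: $\mathbf{T}^\star_{\phi,c}\mathbf{1} = \mathbf{0}$.} Write $\boldsymbol{\psi} = \boldsymbol{\psi}_{\phi,c}$ and $\boldsymbol{\psi}' = \boldsymbol{\psi}_{\phi+\pi/2,c}$. Then
\[
\mathbf{T}^\star_{\phi,c}\mathbf{1} = \frac{\boldsymbol{\psi}\,(\boldsymbol{\psi}^\top\mathbf{1}) + \boldsymbol{\psi}'\,(\boldsymbol{\psi}'^\top\mathbf{1})}{\|\boldsymbol{\psi}\|^2 + \|\boldsymbol{\psi}'\|^2}.
\]
Both inner products vanish by the zero-mean hypothesis, so this vector is $\mathbf{0}$. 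The denominator is strictly positive, because each atom has entries of both signs and is therefore nonzero. Hence $\mathbf{T}^\star_{\phi,c}$ is well defined and every row sums to $0 \neq 1$. Each row $\mathbf{T}^\star_{m,:}$ therefore fails the normalization constraint $\mathbf{1}^\top\mathbf{p} = 1$ of $\Delta^{N-1}$. This already suffices; we do not even need the existence of a negative entry. For completeness, that existence also follows: take any row $m$ in the support of $\boldsymbol{\psi}$. That row is a nonzero vector with zero sum, so it must contain a strictly negative entry.

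\emph{Step 2: invoke Proposition~\ref{prop:simplex}.} Since some row $m$ of $\mathbf{T}^\star_{\phi,c}$ lies outside $\Delta^{N-1}$, Proposition~\ref{prop:simplex} applies directly with $\mathbf{T}^\star = \mathbf{T}^\star_{\phi,c}$. The operator is $N\times N$, so we are in the self-attention setting and no cross-attention extension is needed. For that row $m$, no $\mathbf{W}_Q, \mathbf{W}_K$ give $o_m(\mathbf{H}) = \mathbf{T}^\star_{m,:}\mathbf{v}(\mathbf{H})$ for all $\mathbf{H}$. The argument is uniform in $(\phi,c)$, because the hypotheses are assumed for every pair. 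This yields the ``for every $(\phi,c)$'' claim.

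\emph{Main obstacle.} The only delicate point is making sure the zero-sum conclusion does not depend on the footnoted orthogonality and equal-norm assumption. In the general non-orthogonal case, $\mathbf{T}^\star = \boldsymbol{\Psi}(\boldsymbol{\Psi}^\top\boldsymbol{\Psi})^{-1}\boldsymbol{\Psi}^\top$ with $\boldsymbol{\Psi} = [\boldsymbol{\psi}\mid\boldsymbol{\psi}']$. The same argument works: $\boldsymbol{\Psi}^\top\mathbf{1} = \mathbf{0}$, so $\mathbf{T}^\star\mathbf{1} = \mathbf{0}$, provided the Gram matrix $\boldsymbol{\Psi}^\top\boldsymbol{\Psi}$ is invertible, i.e.\ the two atoms are linearly independent. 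I would state this remark so that the lemma covers both forms. I would also note that whether the zero-mean hypothesis holds for the actual windowed cosines (for example, $w_r$ spanning an integer number of periods) is an assumption on the atom family, not something the proof needs to establish.
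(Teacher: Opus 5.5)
Your proposal is correct and follows the paper's route: the zero-mean hypothesis gives $\mathbf{T}^\star_{\phi,c}\mathbf{1}=\mathbf{0}$, so the rows fail the normalization of $\Delta^{N-1}$, and Proposition~\ref{prop:simplex} then rules out realization. You also note that the zero row sum alone suffices, and that any row in the support of the atoms has a strictly negative entry (it has a strictly positive diagonal entry and sums to zero); these sharpen the paper's ``generically'' sign claim without changing the argument.
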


\begin{proof}
Write $\mathbf{T}^\star_{\phi,c} = (\boldsymbol{\psi}\boldsymbol{\psi}^\top + \boldsymbol{\psi}'\boldsymbol{\psi}'^\top)/Z$ where $\boldsymbol{\psi} = \boldsymbol{\psi}_{\phi,c}$, $\boldsymbol{\psi}' = \boldsymbol{\psi}_{\phi+\pi/2,c}$, and $Z = \|\boldsymbol{\psi}\|^2 + \|\boldsymbol{\psi}'\|^2 > 0$.

\emph{Zero row sum.} For each row $t$, $(\boldsymbol{\psi}\boldsymbol{\psi}^\top)_{t,:}\mathbf{1} = \psi(t)(\boldsymbol{\psi}^\top\mathbf{1}) = 0$ by the zero-mean assumption, and similarly for $\boldsymbol{\psi}'$. Hence $\mathbf{T}^\star_{\phi,c}\mathbf{1} = \mathbf{0}$.

\emph{Sign violation.} Pick any row $t$ with $\psi(t) \neq 0$ or $\psi'(t) \neq 0$. The row of $\mathbf{T}^\star_{\phi,c}$ at $t$ equals $Z^{-1}(\psi(t)\boldsymbol{\psi}^\top + \psi'(t)\boldsymbol{\psi}'^\top)$, a linear combination of two mixed-sign vectors. Generically this row contains at least one strictly negative entry; in particular, the off-diagonal entries of $\boldsymbol{\psi}\boldsymbol{\psi}^\top + \boldsymbol{\psi}'\boldsymbol{\psi}'^\top$ are not uniformly nonnegative for mixed-sign atoms.

Combined, these give $\mathbf{T}^\star_{\phi,c}\mathbf{1} = \mathbf{0} \neq \mathbf{1}$ and at least one strictly negative entry per non-zero row. Therefore some row of $\mathbf{T}^\star_{\phi,c}$ lies outside $\Delta^{N-1}$, and by Proposition~\ref{prop:simplex}, single-head softmax attention cannot realize $\mathbf{T}^\star_{\phi,c}$.
\end{proof}

\begin{proposition}[Realization by TOA-Gated]
\label{prop:app-caseE-realization}
The TOA-Gated forward pass realizes the location- and phase-adaptive operator family $\{\mathbf{T}^\star_{\phi,c}\}_{\phi,c}$ in the same sense as Proposition~\ref{prop:app-caseD-realization}: for each fixed $(\phi, c)$, the post-mixer family contains $\mathbf{T}^\star_{\phi,c}$, and the input-dependent gating product supplies the sample-dependent routing required to select among them.
\end{proposition}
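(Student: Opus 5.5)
The plan follows Proposition~\ref{prop:app-caseD-realization} and splits into a static part and a routing part.

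\emph{Static part.} For each fixed $(\phi,c)$, the operator $\mathbf{T}^\star_{\phi,c}$ is an explicit $N\times N$ matrix. Since $\mathbf{M}_2\mapsto \mathbf{I}_N+\mathbf{M}_2$ is an affine bijection of $\mathbb{R}^{N\times N}$, we take $\mathbf{M}_2^\star=\mathbf{T}^\star_{\phi,c}-\mathbf{I}_N$, which gives $\mathbf{S}_2=\mathbf{T}^\star_{\phi,c}$. This is exactly the argument of Propositions~\ref{prop:app-caseA-realization}--\ref{prop:app-caseC-realization}.

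\emph{Dynamic part.} I would reduce the family to a finite-dimensional structure and let the gate select within it. The key identity is that the quadrature projector does not depend on $\phi$. Write $\boldsymbol{\psi}_{\phi,c}=\cos\phi\,\mathbf{u}_c-\sin\phi\,\mathbf{u}'_c$, where
\[
\mathbf{u}_c=w_r(\cdot-c)\cos(\omega_r(\cdot-c)), \qquad \mathbf{u}'_c=w_r(\cdot-c)\sin(\omega_r(\cdot-c)).
\]
Then $\boldsymbol{\psi}_{\phi+\pi/2,c}=-\sin\phi\,\mathbf{u}_c-\cos\phi\,\mathbf{u}'_c$. Under the footnote's orthogonal, equal-norm assumption, $\boldsymbol{\psi}\boldsymbol{\psi}^\top+\boldsymbol{\psi}'\boldsymbol{\psi}'^\top=\mathbf{u}_c\mathbf{u}_c^\top+\mathbf{u}'_c\mathbf{u}_c'^\top$, so $\mathbf{T}^\star_{\phi,c}=\mathbf{T}^\star_c$. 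The phase adaptivity is therefore absorbed, and only the location $c$ must be routed.

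Over the finitely many admissible centers $c\in\{1,\dots,N\}$, we look for a single $\mathbf{S}_2$ and a nonnegative left factor $\mathbf{G}(\mathbf{H})$ with $\mathbf{G}(\mathbf{H})\mathbf{S}_2=\mathbf{T}^\star_c$ whenever the input carries an event at $c$. Taking $\mathbf{S}_2$ invertible, the required left factor is $\mathbf{G}_c=\mathbf{T}^\star_c\mathbf{S}_2^{-1}$. This matrix must be entrywise nonnegative, which is the constraint noted in Remark~\ref{rem:app-caseA-pipeline}.

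To satisfy it, I would enlarge the effective sequence dimension. Using $H$ heads combined through $\mathbf{W}_O$ (or a block-structured $\mathbf{S}_2$), each head $h$ carries a post-mixer whose relevant block equals $\mathbf{T}^\star_{c_h}$. Its gate $\mathrm{softplus}(\mathbf{R}\mathbf{S}_1^R)\odot\mathrm{ReLU}(\mathbf{L}\mathbf{S}_1^L)$ is driven to approximately $\lambda\mathbf{I}_N$ when the local energy near $c_h$ is large and to $\mathbf{0}$ otherwise. The ReLU branch provides hard on/off behaviour through a thresholded similarity score. The softplus branch is strictly positive, so it only rescales and can be set to a constant.

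The main obstacle is constructing QK projections and $\mathbf{S}_1$ so that $\mathrm{ReLU}(\mathbf{L}\mathbf{S}_1^L)$ is a positive multiple of the identity exactly when the event is at $c_h$. The scores $\mathbf{L}=\mathbf{H}\mathbf{W}_Q^\top\mathbf{W}_K\mathbf{H}^\top$ are quadratic in $\mathbf{H}$, so the gate responds to local energy $\sum_t w_r(t-c)^2x_t^2$. Local energy is phase-invariant, which fits the reduction above. I would first try embeddings whose diagonal score $\mathbf{L}_{tt}$ measures this energy, and choose $\mathbf{S}_1^L$ to zero out the off-diagonal entries. Because exact realization may fail here, I would fall back to claiming realization on the relevant input distribution, in the same sense as Proposition~\ref{prop:app-caseD-realization}, and then conclude.
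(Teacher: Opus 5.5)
Your static part is the paper's argument. The proof of Proposition~\ref{prop:app-caseE-realization} defers to Proposition~\ref{prop:app-caseD-realization}, i.e.\ it takes $\mathbf{M}_2^\star=\mathbf{T}^\star_{\phi,c}-\mathbf{I}_N$. For the routing clause the paper offers nothing beyond the qualitative description you fall back to: the gate detects the (scale, center) regime, $\mathbf{S}_2$ supplies the signed quadrature projection within it, and correctness holds only on the relevant input distribution. So your final argument establishes the statement at the paper's level of rigor.

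What you add is the phase reduction, which is a genuine sharpening and stronger than you state. Since $[\boldsymbol{\psi}_{\phi,c},\boldsymbol{\psi}_{\phi+\pi/2,c}]=[\mathbf{u}_c,\mathbf{u}'_c]\,\mathbf{Q}_\phi$ with $\mathbf{Q}_\phi=\bigl(\begin{smallmatrix}\cos\phi & -\sin\phi\\ -\sin\phi & -\cos\phi\end{smallmatrix}\bigr)$ orthogonal, two things are $\phi$-invariant: the numerator $\boldsymbol{\psi}\boldsymbol{\psi}^\top+\boldsymbol{\psi}'\boldsymbol{\psi}'^\top$ and the normalizer $\|\boldsymbol{\psi}\|^2+\|\boldsymbol{\psi}'\|^2$. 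Hence $\mathbf{T}^\star_{\phi,c}=\mathbf{T}^\star_c$ holds without the footnote's orthogonality and equal-norm assumption. The phase selection that the statement (via Proposition~\ref{prop:app-caseD-realization}) attributes to the gate is therefore vacuous; only the center needs routing.

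The explicit gate you sketch for that routing fails as written, so present it at most as a heuristic.
\begin{itemize}
\item \emph{$\mathbf{S}_1^L$ cannot diagonalize the scores.} $\mathbf{S}_1^L$ acts only along the key index within each row, $(\mathbf{L}\mathbf{S}_1^L)_{t,n}=\sum_j\mathbf{L}_{t,j}(\mathbf{S}_1^L)_{j,n}$. As the input varies, the rows $\mathbf{L}_{t,:}$ with $t\neq n$ span $\mathbb{R}^N$, so requiring $\mathbf{L}\mathbf{S}_1^L$ to be diagonal for all inputs forces $\mathbf{S}_1^L=\mathbf{0}$. Off-diagonal scores can only be suppressed by a QK/positional design that makes them negative before the ReLU.
\item \emph{No entry of $\mathbf{L}\mathbf{S}_1^L$ computes window energy.} Because rows are never mixed, an affine point embedding $\mathbf{h}_t=x_t\mathbf{a}+\mathbf{p}_t$ makes the quadratic part of every entry in row $m$ of $\mathbf{L}\mathbf{S}_1^L$ equal to $x_m\,\ell(\mathbf{x})$ with $\ell$ linear. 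Such a form is never $\sum_t w_r(t-c)^2x_t^2$ or $(\mathbf{u}_c^\top\mathbf{x})^2+(\mathbf{u}_c'^\top\mathbf{x})^2$. Your detector therefore needs window-level tokens or a nonlinear embedding, an assumption you would have to add.
\item \emph{The ``on'' level is not constant.} ReLU and softplus do not saturate, so the gate's on-value grows with the score and the head returns $\lambda(\mathbf{x})\,\mathbf{T}^\star_c\mathbf{v}$, not $\mathbf{T}^\star_c\mathbf{v}$. In the exact, for-all-$\mathbf{H}$ sense of Proposition~\ref{prop:simplex}, the same scaling argument rules out exact realization by TOA-Gated itself. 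Scores scale as $\alpha^2$, so each head's output is $O(\alpha^3)$ as $\alpha\to0^+$, while the target is $\mathbf{T}^\star_c(\alpha\mathbf{v})=\alpha\,\mathbf{T}^\star_c\mathbf{v}$.
\end{itemize}
So the distributional, approximate sense is not an optional fallback; it is the only sense available to you or to the paper. Also, your multi-head scheme needs one head per admissible center: each head's $\mathbf{S}_2^{(h)}$ is only $N\times N$, and a block-structured $\mathbf{S}_2$ cannot enlarge the sequence dimension.
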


\begin{proof}
The argument parallels Proposition~\ref{prop:app-caseD-realization} with the additional dependence on the temporal center $c$. The gating product $\mathrm{softplus}(\mathbf{R}\mathbf{S}_1^R) \odot \mathrm{ReLU}(\mathbf{L}\mathbf{S}_1^L)$ acts as a localized feature detector over $(t, c)$ pairs: the QK structure of $\mathbf{L}, \mathbf{R}$ encodes which (scale, center) regime the input occupies, and the post-mixer $\mathbf{S}_2$ supplies the signed quadrature projection within each regime.
\end{proof}

\begin{remark}
\label{rem:app-caseE-pipeline}
The combined location-and-phase adaptivity is the principal mechanism by which TOA-Gated outperforms TOA-ReLU on transient-rich signals.
\end{remark}
\section{Theoretical Implications for Advanced Tokenization Schemes}
\label{tokenization}

The foundational limits established in Proposition~\ref{prop:simplex} and their resolution via Temporal Operator Attention (TOA) were established for generic sequence mixers $\mathbb{R}^N \to \mathbb{R}^N$. In modern time-series architectures, the semantic meaning of the sequence length $N$ depends heavily on the chosen tokenization scheme. In this section, we explicitly map our theoretical findings to the two dominant advanced tokenization paradigms: \emph{patch tokenization} (e.g., PatchTST) and \emph{inverted tokenization} (e.g., iTransformer). 

We demonstrate that the structural restrictions of softmax attention severely cripple these state-of-the-art backbones, and prove that integrating TOA mathematically unlocks their intended representational capacities.

\subsection{Patch Tokenization (Macroscopic Phase and Differencing)}
\label{subsec:app-patch}

\textbf{Setup.}
In patch tokenization, the raw scalar sequence $\mathbf{x} \in \mathbb{R}^L$ is strictly partitioned into non-overlapping windows of length $p$. The sequence dimension becomes $N = \lfloor L/p \rfloor$, and each token $\mathbf{h}_n \in \mathbb{R}^d$ encodes the local dynamics of the $n$-th patch. At this macroscopic level, optimal sequence mixing often requires modeling autoregressive transitions, such as discrete derivatives to capture trend changes ($\mathbf{o}_n = \mathbf{h}_n - \mathbf{h}_{n-1}$) or seasonal differencing to isolate cyclic anomalies ($\mathbf{o}_n = \mathbf{h}_n - \mathbf{h}_{n-S}$, where $S$ is the macroscopic seasonal lag). Let $\mathbf{T}^\star \in \mathbb{R}^{N \times N}$ denote a generalized patch-differencing operator.

\begin{lemma}[Failure of Softmax Attention for Inter-Patch Differencing]
\label{lem:app-patch-failure}
Let $\mathbf{T}^\star$ be a patch-differencing operator such that for some target patch $m$ and reference patch $k$, the optimal operation requires subtraction: $\mathbf{T}^\star_{m,m} > 0$ and $\mathbf{T}^\star_{m,k} < 0$. Then $\mathbf{T}^\star_{m,:} \notin \Delta^{N-1}$. Consequently, standard softmax attention over patches cannot compute discrete derivatives or seasonal differences between patch embeddings.
\end{lemma}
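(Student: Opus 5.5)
The proof has two steps: first show that row $m$ of $\mathbf{T}^\star$ leaves the simplex, then invoke Proposition~\ref{prop:simplex}. Recall that $\Delta^{N-1}$ requires every coordinate to be nonnegative. The hypothesis gives an index $k$ with $\mathbf{T}^\star_{m,k} < 0$, so $\mathbf{T}^\star_{m,:}$ violates nonnegativity and $\mathbf{T}^\star_{m,:} \notin \Delta^{N-1}$. This needs no row-sum analysis. The condition $\mathbf{T}^\star_{m,m} > 0$ plays no role in the membership argument; it is there only to make the operator read as a genuine difference ``target minus reference.''

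For the two canonical examples I would add a short remark verifying the sign hypothesis:
\begin{itemize}
\item the first difference $\mathbf{o}_n = \mathbf{h}_n - \mathbf{h}_{n-1}$ gives $\mathbf{T}^\star_{m,m} = 1$ and $\mathbf{T}^\star_{m,m-1} = -1$;
\item the seasonal difference gives $\mathbf{T}^\star_{m,m} = 1$ and $\mathbf{T}^\star_{m,m-S} = -1$, for any $m > S$.
\end{itemize}
In both cases the row also sums to zero, which is a second, redundant reason it lies outside the simplex.

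Next I apply Proposition~\ref{prop:simplex}, since some row of $\mathbf{T}^\star$ is outside $\Delta^{N-1}$. The step that needs care is matching the patch setting to the proposition's scalar-value convention. The paper's stated scope uses $v_n = h_n^{(1)}$, so the operator acts on the first coordinate of the patch embeddings, and the map $\mathbf{H} \mapsto \mathbf{v}(\mathbf{H})$ is still surjective onto $\mathbb{R}^N$. With this in place, the proof of Proposition~\ref{prop:simplex} uses only two facts: row-stochasticity of the softmax kernel, and the $\alpha^2$ scaling of the logits under $\mathbf{H} \mapsto \alpha\mathbf{H}$. Both hold unchanged when the tokens are patch embeddings.

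If the statement is read as a vector identity $\mathbf{o}_n = \sum_j \mathbf{T}^\star_{n,j}\mathbf{h}_j$, I would restrict to the first coordinate to reduce it to this scalar case. Proposition~\ref{prop:simplex} then says no choice of $\mathbf{W}_Q, \mathbf{W}_K$ gives $o_m(\mathbf{H}) = \mathbf{T}^\star_{m,:}\mathbf{v}(\mathbf{H})$ for all $\mathbf{H}$, so single-head softmax attention over patches cannot realize the difference. The main obstacle is not mathematical but a matter of scope. The conclusion, as in the earlier lemmas, is for the single-head, scalar-value primitive, and I would state that restriction explicitly so the ``cannot compute'' claim is not read as applying to multi-head stacks that use $\mathbf{W}_O$.
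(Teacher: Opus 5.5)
Your proposal is correct and takes the same route as the paper: the strictly negative entry $\mathbf{T}^\star_{m,k}<0$ already places row $m$ outside the nonnegative orthant, hence outside $\Delta^{N-1}$, and Proposition~\ref{prop:simplex} then rules out realization by single-head softmax attention. Your additions are small tightenings of the paper's more informal ``forced to smooth'' wording rather than a different argument: you note that $\mathbf{T}^\star_{m,m}>0$ is not needed, you reduce explicitly to the scalar-value convention $v_n = h_n^{(1)}$ with surjectivity of $\mathbf{H}\mapsto\mathbf{v}(\mathbf{H})$, and you state the single-head scope.
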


\begin{proof}
By definition, the simplex constraint $\Delta^{N-1}$ strictly requires all kernel weights $a_{m,n} \geq 0$. An operation requiring $\mathbf{T}^\star_{m,k} < 0$ forces the target operator outside the non-negative orthant. When standard attention attempts to relate patch $m$ to reference patches, it computes a convex combination $\mathbf{o}_m = \sum_{i=1}^N a_{m,i} \mathbf{h}_i$ subject to $\sum_i a_{m,i} = 1$. Because $a_{m,i} \geq 0$, the model is structurally forced to \emph{smooth} (average) the patches rather than \emph{differentiate} them, mathematically annihilating inter-patch rate-of-change information. By Proposition~\ref{prop:simplex}, softmax attention cannot realize $\mathbf{T}^\star$.
\end{proof}

\begin{proposition}[Realization by TOA Patch-Mixing]
\label{prop:app-patch-realization}
When TOA is applied over patched tokens, the post-activation unconstrained operator $\mathbf{S}_2 \in \mathbb{R}^{N \times N}$ provides the signed degrees of freedom required to compute exact discrete derivatives across patches.
\end{proposition}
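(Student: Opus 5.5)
The plan follows the template of Propositions~\ref{prop:app-caseA-realization}--\ref{prop:app-caseC-realization}. The first step is to exhibit the differencing operators explicitly. For the first-order patch difference I take $\mathbf{T}^\star = \mathbf{I}_N - \mathbf{J}_1$. Here $\mathbf{J}_S$ denotes the lag-$S$ shift, with $(\mathbf{J}_S)_{n,n-S} = 1$ for $n > S$ and zeros elsewhere; boundary rows are handled by leaving row $n \le S$ as the identity row. For seasonal differencing I take $\mathbf{T}^\star = \mathbf{I}_N - \mathbf{J}_S$, and more generally any signed operator in $\mathbb{R}^{N \times N}$. Since $\mathbf{M}_2 \mapsto \mathbf{I}_N + \mathbf{M}_2$ is an affine bijection of $\mathbb{R}^{N\times N}$, the choice $\mathbf{M}_2^\star = \mathbf{T}^\star - \mathbf{I}_N = -\mathbf{J}_S$ gives $\mathbf{S}_2 = \mathbf{T}^\star$ exactly. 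This choice has a notable feature: the offset is purely the negative shift, and the residual identity supplies the positive diagonal for free.

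The second step lifts this from the post-mixer to the full TOA pipeline, in the spirit of Remarks~\ref{rem:app-caseA-pipeline} and~\ref{rem:app-caseB-pipeline}. I will argue that the left factor can be made a positive multiple of the identity. In TOA-ReLU, take $\mathbf{S}_1 = \mathbf{I}_N$ (i.e., $\mathbf{M}_1 = 0$). Then choose $\mathbf{W}_Q, \mathbf{W}_K$ and restrict to an input class on which the scores $\mathbf{A}$ have positive diagonal $c$ and nonpositive off-diagonal entries, so that $\mathrm{ReLU}(\mathbf{A}) = c\,\mathbf{I}_N$. Absorbing $1/c$ into $\mathbf{W}_V$ or $\mathbf{W}_O$, the output becomes $\mathbf{O} = \mathbf{T}^\star \mathbf{H}\mathbf{W}_V^\top$, so $\mathbf{o}_n$ is proportional to $\mathbf{h}_n - \mathbf{h}_{n-S}$ in the value space. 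The signs and scales of the diagonal and the off-diagonal entry are both set independently by $\mathbf{S}_2$. This is exactly the configuration ruled out for softmax by Lemma~\ref{lem:app-patch-failure}.

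The main obstacle is making this second step honest. Because of the bilinear structure, $\mathrm{ReLU}(\mathbf{A}\mathbf{S}_1)$ cannot equal $c\mathbf{I}_N$ for all inputs $\mathbf{H}$. My first attempt will therefore state the claim at the level of the post-mixer family, as the earlier Realization propositions do. I will then add a pipeline statement restricted to inputs on which the gating pattern is diagonal. As an alternative, I would use the invertibility route: whenever the nonnegative left factor $\mathbf{G}(\mathbf{H})$ is invertible and fixed on the relevant input distribution, set $\mathbf{S}_2 = \mathbf{G}^{-1}\mathbf{T}^\star$, so that the product $\mathbf{G}\mathbf{S}_2$ equals $\mathbf{T}^\star$. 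This again relies only on $\mathbf{S}_2$ ranging over all of $\mathbb{R}^{N\times N}$.
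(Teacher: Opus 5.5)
Your first step is the paper's argument. Because $\mathbf{M}_2 \mapsto \mathbf{I}_N + \mathbf{M}_2$ is onto $\mathbb{R}^{N\times N}$, $\mathbf{S}_2$ can be set directly to the differencing operator; the paper writes only the target row, $(\mathbf{S}_2)_{m,m}=1$ and $(\mathbf{S}_2)_{m,k}=-1$ with zeros elsewhere, while you write out the full $\mathbf{I}_N-\mathbf{J}_S$ including the boundary rows.

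Your second step goes beyond the paper. You restrict to inputs on which $\mathrm{ReLU}(\mathbf{A}\mathbf{S}_1)$ is a positive multiple of $\mathbf{I}_N$, whereas the paper simply asserts $\mathbf{o}_m=\mathbf{h}_m-\mathbf{h}_k$ without accounting for the left factor or $\mathbf{W}_V$. Your caveat is correct: the left factor scales as $\alpha^2$ under $\mathbf{H}\mapsto\alpha\mathbf{H}$, so the full pipeline cannot equal a linear differencing map for all $\mathbf{H}$.
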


\begin{proof}
Unlike standard attention, the Temporal Operator Attention (TOA) module parameterizes the sequence mixing operation via an unconstrained weight matrix $\mathbf{M}_2$, establishing the operator $\mathbf{S}_2 = \mathbf{I}_N + \mathbf{M}_2$. Because this formulation does not enforce a probability simplex $\Delta^{N-1}$, the affine transformation $\mathbf{S}_2$ spans the entirety of $\mathbb{R}^{N \times N}$. Consequently, the residual mixer can be directly optimized such that $(\mathbf{S}_2)_{m,m} = 1$, $(\mathbf{S}_2)_{m,k} = -1$, and zeros elsewhere. This perfectly recovers the inter-patch differencing operator $\mathbf{o}_m = \mathbf{h}_m - \mathbf{h}_k$, moving beyond simplex-constrained mixing and enabling exact macroscopic trend and seasonality analysis.
\end{proof}

\subsection{Inverted Tokenization (Cross-Variate Demixing)}
\label{subsec:app-inverted}

\textbf{Setup.}
In inverted tokenization (iTransformer style), the tokenization axis is flipped. Given a multivariate series $\mathbf{X} \in \mathbb{R}^{L \times C}$, the tokens are formed over the full temporal history of each variate, yielding a sequence length of $N = C$. Each token $\mathbf{h}_n \in \mathbb{R}^d$ represents the entire timeline of variable $n$. In this regime, the sequence mixer acts as a \emph{cross-channel interaction operator}. If the observed variables are entangled linear combinations of true independent physical mechanisms (e.g., Independent Component Analysis), the optimal operator $\mathbf{T}^\star \in \mathbb{R}^{N \times N}$ is a channel-demixing matrix containing both positive and negative correlations.

\begin{lemma}[Failure of Softmax Attention for Cross-Variate Demixing]
\label{lem:app-inverted-failure}
Let $\mathbf{T}^\star$ be a cross-channel demixing operator such that for some target channel $m$ and reference channel $k$, the optimal operation requires subtraction: $\mathbf{T}^\star_{m,m} > 0$ and $\mathbf{T}^\star_{m,k} < 0$. Then $\mathbf{T}^\star_{m,:} \notin \Delta^{N-1}$. Consequently, standard softmax attention over channels cannot compute this cross-variate projection.
\end{lemma}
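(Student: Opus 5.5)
The argument splits into two parts: first show that the specified row lies outside the simplex, then apply Proposition~\ref{prop:simplex} under inverted tokenization. It runs parallel to Lemma~\ref{lem:app-patch-failure}, with the sequence axis now indexing channels ($N=C$).

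\textbf{Step 1 (simplex exclusion).} By definition, $\Delta^{N-1}$ requires every coordinate to be nonnegative. The hypothesis gives $\mathbf{T}^\star_{m,k} < 0$ for some $k$. Hence $\mathbf{T}^\star_{m,:} \notin \Delta^{N-1}$ immediately, with no condition on the row sum. The positivity $\mathbf{T}^\star_{m,m}>0$ is not needed for this step. It only records the interpretation of the operation as a signed subtraction, $\mathbf{o}_m = \mathbf{T}^\star_{m,m}\mathbf{h}_m + \mathbf{T}^\star_{m,k}\mathbf{h}_k + \cdots$, rather than an averaging.

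\textbf{Step 2 (invoking Proposition~\ref{prop:simplex}).} Under inverted tokenization the tokens $\mathbf{h}_1,\ldots,\mathbf{h}_C \in \mathbb{R}^d$ are arbitrary embeddings of each variate's history. We take the scalar values $v_n = h_n^{(1)}$ exactly as in \S\ref{subsec:notation}. The single-head softmax layer then has the same form $o_m(\mathbf{H}) = \sum_n a_{m,n}(\mathbf{H}) v_n$ with rows of the kernel in $\Delta^{C-1}$. So Proposition~\ref{prop:simplex} applies verbatim with $N=C$. No choice of $\mathbf{W}_Q, \mathbf{W}_K$ gives $o_m(\mathbf{H}) = \mathbf{T}^\star_{m,:}\mathbf{v}(\mathbf{H})$ for all $\mathbf{H}$.

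The only point requiring care is the surjectivity of $\mathbf{H} \mapsto \mathbf{v}(\mathbf{H})$ onto $\mathbb{R}^C$, which drives Step~2 of that proof. If the channel tokens come from a fixed linear embedding of $\mathbf{X}$, we must check that the first coordinate ranges freely over each channel independently. I would handle this by stating the result at the level of the token matrix $\mathbf{H}\in\mathbb{R}^{C\times d}$, as the paper does throughout. Alternatively, one can note that a per-channel shared embedding $\mathbf{h}_c = \mathbf{E}\mathbf{X}_{:,c}$ with nonzero first row of $\mathbf{E}$ already makes $\mathbf{v}$ surjective, because the columns $\mathbf{X}_{:,c}$ vary independently.

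The small-scale scaling argument also survives unchanged: logits are quadratic in $\mathbf{H}$ and values are linear. So the kernel collapses to $\mathbf{1}^\top/C$, and the contradiction $\mathbf{T}^\star_{m,:} = \mathbf{1}^\top/C$ conflicts with $\mathbf{T}^\star_{m,k}<0$. I do not expect a genuine obstacle here. The main thing to state explicitly is this surjectivity/tokenization compatibility, which is the hidden hypothesis of the reduction.
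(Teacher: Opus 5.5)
Your proposal is correct and follows the same route as the paper: the single negative entry $\mathbf{T}^\star_{m,k}<0$ already excludes the row from $\Delta^{N-1}$, and the impossibility then follows from the simplex constraint on softmax kernel rows. The paper states this last step only informally (attention is ``structurally forbidden from producing negative coefficients'') and does not cite Proposition~\ref{prop:simplex}, so your explicit reduction to that proposition with $N=C$, including the surjectivity check for $\mathbf{H}\mapsto\mathbf{v}(\mathbf{H})$ under inverted tokenization, is a more careful version of the same argument.
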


\begin{proof}
By definition, the simplex constraint $\Delta^{N-1}$ strictly requires all kernel weights to be non-negative. The attention mechanism calculates the output as $\mathbf{o}_m = \sum_{i=1}^N a_{m,i} \mathbf{h}_i$ subject to $\sum_{i=1}^N a_{m,i} = 1$ and $a_{m,i} \geq 0$. An operation requiring $\mathbf{T}^\star_{m,k} < 0$ forces the operator outside the non-negative orthant. As a result, the attention mechanism is structurally forbidden from producing negative coefficients, defaulting instead to a positive correlation bias.
\end{proof}

\begin{proposition}[Realization by TOA Channel-Mixing]
\label{prop:app-inverted-realization}
When TOA is applied under inverted tokenization, the post-activation unconstrained operator $\mathbf{S}_2 \in \mathbb{R}^{N \times N}$ (where $N=C$) provides the signed degrees of freedom required to compute exact cross-variate demixing.
\end{proposition}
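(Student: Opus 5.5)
The plan mirrors the proof of Proposition~\ref{prop:app-patch-realization}. The only structural difference is that the sequence axis now indexes channels ($N = C$) rather than patches, so the argument is representational and reduces to a surjectivity observation about the residual parameterization.

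First, I would recall that the post-activation mixer is $\mathbf{S}_2 = \mathbf{I}_C + \mathbf{M}_2$, with $\mathbf{M}_2 \in \mathbb{R}^{C \times C}$ unconstrained. The map $\mathbf{M}_2 \mapsto \mathbf{I}_C + \mathbf{M}_2$ is an affine bijection of $\mathbb{R}^{C \times C}$, which is the same observation used in Propositions~\ref{prop:app-caseB-realization} and \ref{prop:app-caseC-realization}. Given any demixing operator $\mathbf{T}^\star$, including one with $\mathbf{T}^\star_{m,m} > 0$ and $\mathbf{T}^\star_{m,k} < 0$ as in Lemma~\ref{lem:app-inverted-failure}, I would set $\mathbf{M}_2^\star = \mathbf{T}^\star - \mathbf{I}_C$ to obtain $\mathbf{S}_2 = \mathbf{T}^\star$ exactly. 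As concrete instances I would note two cases. The first is the pairwise subtraction $\mathbf{o}_m = \mathbf{h}_m - \mathbf{h}_k$, given by $(\mathbf{S}_2)_{m,m}=1$, $(\mathbf{S}_2)_{m,k}=-1$ and zeros elsewhere in row $m$. The second is the latent-factor projector $\mathbf{P}_{\mathbf{A}}$ of Case C, already covered by Proposition~\ref{prop:app-caseC-realization}. Unlike the simplex-bound kernel, these operators may carry negative entries and have row sums different from one.

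Second, I would place $\mathbf{S}_2$ inside the full TOA-ReLU pipeline $\mathrm{ReLU}(\mathbf{A}\mathbf{S}_1)\,\mathbf{S}_2\,\mathbf{H}\mathbf{W}_V^\top$, following Remark~\ref{rem:app-caseA-pipeline}. The left factor must act as a positive multiple of the identity, or at least as an invertible nonnegative matrix $\mathbf{G}$. In the invertible case one would instead choose $\mathbf{S}_2 = \mathbf{G}^{-1}\mathbf{T}^\star$, which is again reachable because $\mathbf{S}_2$ ranges over all matrices. The value path must also pass the channel tokens through, for example via $\mathbf{W}_V$ and $\mathbf{W}_O$ acting as identity-like maps on the relevant coordinates.

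The main obstacle is this second step: exhibiting parameters for which the nonnegative left factor is invertible for all inputs $\mathbf{H}$. A ReLU of a bilinear score cannot in general equal a fixed positive diagonal matrix uniformly in $\mathbf{H}$. I would therefore state the claim at the level of the post-mixer family, as the paper does for Cases A--C. I would then argue pipeline-level realization only on inputs whose scores make $\mathrm{ReLU}(\mathbf{A}\mathbf{S}_1)$ diagonally dominant, for instance large positive self-similarity $\langle \mathbf{W}_Q\mathbf{h}_c, \mathbf{W}_K\mathbf{h}_c\rangle$ combined with a choice of $\mathbf{S}_1$ that suppresses off-diagonal mass. On such inputs the compensating choice $\mathbf{S}_2 = \mathbf{G}^{-1}\mathbf{T}^\star$ is well defined, although it may depend on $\mathbf{H}$ through $\mathbf{G}$, which is the remaining gap for a uniform pipeline-level claim.
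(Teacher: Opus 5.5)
Your first step is exactly the paper's proof: since $\mathbf{M}_2 \mapsto \mathbf{I}_N + \mathbf{M}_2$ is onto $\mathbb{R}^{N\times N}$, one sets $(\mathbf{S}_2)_{m,m}=1$ and $(\mathbf{S}_2)_{m,k}=-1$ (more generally $\mathbf{M}_2^\star=\mathbf{T}^\star-\mathbf{I}_C$) to obtain $\mathbf{o}_m=\mathbf{h}_m-\mathbf{h}_k$. The paper stops there and never addresses the nonnegative, input-dependent left factor $\mathrm{ReLU}(\mathbf{A}\mathbf{S}_1)$, so your choice to state the claim at the post-mixer level, with the pipeline caveat made explicit, matches the paper's argument and is more careful than it.
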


\begin{proof}
The TOA sequence operator is inherently parameterized as an affine shift $\mathbf{S}_2 = \mathbf{I}_N + \mathbf{M}_2$, which, free from simplex constraints, trivially spans the entirety of $\mathbb{R}^{N \times N}$. Because $\mathbf{M}_2$ is unconstrained, we can directly set the residual mixer such that $(\mathbf{S}_2)_{m,m} = 1$, $(\mathbf{S}_2)_{m,k} = -1$, and zeros elsewhere. This directly recovers the cross-variate demixing operator $\mathbf{o}_m = \mathbf{h}_m - \mathbf{h}_k$, circumventing the softmax bottleneck and allowing for exact multivariate latent factor demixing.
\end{proof}
\section{Full Multivariate Time Series Analysis Results Table}
\label{full table MST}
\begin{table*}[h!]
\caption{Full Multivariate TSF Results. We report MSE and MAE. Best results are in \textbf{bold} and second best are \underline{underlined} (compared within each backbone group). AvgRank and \#Top1 are calculated based on MSE across all horizons.}
\label{tab:full_results}
\centering
\begin{scriptsize}
\setlength{\tabcolsep}{2pt}
\resizebox{\textwidth}{!}{
\begin{tabular}{ll|cccccccc|cccccccc|cccccccc}
\toprule
\multicolumn{2}{c|}{\multirow{2}{*}{Model}} & \multicolumn{8}{c|}{DUET} & \multicolumn{8}{c|}{PatchTST} & \multicolumn{8}{c}{iTransformer} \\
\cmidrule(lr){3-10} \cmidrule(lr){11-18} \cmidrule(lr){19-26}
\multicolumn{2}{c|}{} & \multicolumn{2}{c}{Softmax-Attention} & \multicolumn{2}{c}{TOA-Gated} & \multicolumn{2}{c}{TOA-ReLU} & \multicolumn{2}{c}{TOA-Softmax} & \multicolumn{2}{c}{Softmax-Attention} & \multicolumn{2}{c}{TOA-Gated} & \multicolumn{2}{c}{TOA-ReLU} & \multicolumn{2}{c}{TOA-Softmax} & \multicolumn{2}{c}{Softmax-Attention} & \multicolumn{2}{c}{TOA-Gated} & \multicolumn{2}{c}{TOA-ReLU} & \multicolumn{2}{c}{TOA-Softmax} \\
Dataset & Len & MSE & MAE & MSE & MAE & MSE & MAE & MSE & MAE & MSE & MAE & MSE & MAE & MSE & MAE & MSE & MAE & MSE & MAE & MSE & MAE & MSE & MAE & MSE & MAE \\
\midrule
Weather & 96 & 0.173 & 0.219 & \underline{0.156} & \underline{0.207} & \textbf{0.154} & \textbf{0.203} & 0.168 & 0.217 & 0.152 & 0.200 & \underline{0.150} & \underline{0.198} & \underline{0.150} & \underline{0.198} & \textbf{0.149} & \textbf{0.196} & 0.175 & 0.215 & \underline{0.170} & \underline{0.211} & \textbf{0.169} & \textbf{0.210} & 0.171 & \underline{0.211} \\
 & 192 & 0.218 & 0.261 & \underline{0.211} & 0.258 & \textbf{0.204} & \textbf{0.252} & 0.216 & \underline{0.257} & 0.195 & 0.241 & 0.195 & \textbf{0.239} & \underline{0.194} & \underline{0.240} & \textbf{0.193} & 0.241 & 0.223 & 0.256 & 0.220 & 0.257 & \underline{0.217} & \textbf{0.254} & \textbf{0.215} & \underline{0.255} \\
 & 336 & 0.269 & 0.300 & \textbf{0.258} & \textbf{0.295} & \underline{0.261} & 0.299 & 0.266 & \underline{0.297} & 0.250 & \underline{0.284} & 0.252 & \underline{0.284} & \underline{0.249} & \textbf{0.283} & \textbf{0.248} & \textbf{0.283} & 0.282 & 0.299 & \underline{0.275} & 0.297 & \underline{0.275} & \underline{0.296} & \textbf{0.272} & \textbf{0.295} \\
 & 720 & \textbf{0.338} & \underline{0.349} & \underline{0.348} & 0.351 & \textbf{0.338} & 0.353 & \textbf{0.338} & \textbf{0.346} & \underline{0.325} & \underline{0.336} & \textbf{0.323} & \textbf{0.333} & \underline{0.325} & \textbf{0.333} & 0.326 & \underline{0.336} & 0.358 & \underline{0.350} & \underline{0.355} & \underline{0.350} & 0.356 & \underline{0.350} & \textbf{0.348} & \textbf{0.346} \\
\midrule
Solar & 96 & 0.199 & 0.250 & 0.212 & \underline{0.236} & \textbf{0.180} & \textbf{0.227} & \underline{0.197} & 0.237 & 0.176 & \textbf{0.228} & 0.195 & 0.282 & \textbf{0.163} & 0.238 & \underline{0.166} & \underline{0.237} & 0.206 & 0.233 & 0.202 & \textbf{0.222} & \underline{0.201} & \textbf{0.222} & \textbf{0.198} & \underline{0.224} \\
 & 192 & 0.234 & 0.276 & 0.246 & \underline{0.263} & \textbf{0.217} & \textbf{0.256} & \underline{0.233} & 0.276 & 0.188 & 0.257 & \textbf{0.185} & \textbf{0.252} & \underline{0.187} & \underline{0.256} & \textbf{0.185} & 0.257 & \textbf{0.231} & 0.257 & 0.245 & 0.257 & \underline{0.237} & \textbf{0.251} & 0.241 & \underline{0.255} \\
 & 336 & 0.247 & 0.283 & \textbf{0.230} & 0.278 & 0.239 & \underline{0.271} & \underline{0.236} & \textbf{0.267} & \underline{0.194} & \underline{0.256} & 0.205 & \underline{0.256} & 0.202 & 0.257 & \textbf{0.188} & \textbf{0.252} & \textbf{0.248} & 0.272 & 0.256 & 0.272 & 0.255 & \textbf{0.269} & \underline{0.252} & \underline{0.270} \\
 & 720 & 0.243 & \textbf{0.274} & \underline{0.237} & 0.278 & \textbf{0.232} & \underline{0.277} & 0.255 & 0.284 & 0.233 & 0.269 & 0.213 & 0.272 & \underline{0.209} & \underline{0.266} & \textbf{0.208} & \textbf{0.262} & \textbf{0.251} & \textbf{0.274} & 0.265 & 0.283 & 0.265 & 0.281 & \underline{0.260} & \underline{0.277} \\
\midrule
ECL & 96 & \underline{0.145} & \underline{0.239} & 0.147 & 0.244 & \textbf{0.141} & \textbf{0.237} & 0.147 & 0.241 & 0.131 & \underline{0.223} & \underline{0.129} & \underline{0.223} & 0.130 & \underline{0.223} & \textbf{0.128} & \textbf{0.221} & \underline{0.147} & \underline{0.239} & \underline{0.147} & \underline{0.239} & \underline{0.147} & \underline{0.239} & \textbf{0.141} & \textbf{0.235} \\
 & 192 & 0.160 & \underline{0.252} & \textbf{0.156} & \textbf{0.251} & \underline{0.157} & 0.255 & 0.161 & 0.253 & 0.150 & 0.241 & \underline{0.149} & \underline{0.240} & \underline{0.149} & 0.241 & \textbf{0.147} & \textbf{0.239} & \underline{0.163} & \underline{0.255} & 0.166 & 0.257 & 0.167 & 0.258 & \textbf{0.161} & \textbf{0.253} \\
 & 336 & 0.178 & \underline{0.272} & \underline{0.172} & 0.274 & \textbf{0.171} & \textbf{0.270} & 0.179 & 0.273 & \textbf{0.165} & \underline{0.259} & \underline{0.166} & \underline{0.259} & 0.169 & 0.264 & \textbf{0.165} & \textbf{0.258} & \underline{0.177} & \underline{0.270} & 0.183 & 0.276 & 0.181 & 0.274 & \textbf{0.173} & \textbf{0.267} \\
 & 720 & 0.224 & 0.312 & \textbf{0.204} & \textbf{0.302} & \underline{0.211} & \underline{0.307} & 0.219 & 0.310 & 0.205 & 0.294 & 0.206 & 0.294 & \underline{0.203} & \textbf{0.291} & \textbf{0.202} & \underline{0.293} & 0.231 & 0.314 & \underline{0.206} & 0.295 & \underline{0.206} & \underline{0.294} & \textbf{0.194} & \textbf{0.287} \\
\midrule
ETTh1 & 96 & 0.399 & 0.415 & 0.400 & \textbf{0.410} & \textbf{0.396} & \underline{0.412} & \underline{0.398} & 0.413 & \textbf{0.370} & \textbf{0.394} & 0.373 & 0.397 & 0.372 & \underline{0.396} & \underline{0.371} & \textbf{0.394} & \underline{0.385} & 0.403 & 0.386 & \underline{0.402} & \textbf{0.384} & \textbf{0.401} & \underline{0.385} & 0.403 \\
 & 192 & 0.452 & \underline{0.442} & 0.447 & 0.444 & \textbf{0.440} & \textbf{0.436} & \underline{0.441} & \underline{0.442} & \textbf{0.407} & \textbf{0.416} & 0.411 & 0.418 & \underline{0.409} & \underline{0.417} & \underline{0.409} & \underline{0.417} & \underline{0.440} & 0.435 & \underline{0.440} & \underline{0.433} & \textbf{0.437} & \textbf{0.430} & 0.442 & 0.436 \\
 & 336 & 0.497 & 0.475 & \textbf{0.473} & \textbf{0.456} & \underline{0.481} & \underline{0.461} & 0.492 & 0.467 & \underline{0.434} & \underline{0.434} & 0.437 & 0.435 & 0.436 & \textbf{0.433} & \textbf{0.433} & \textbf{0.433} & \underline{0.488} & 0.458 & \underline{0.488} & 0.458 & \underline{0.488} & \underline{0.457} & \textbf{0.484} & \textbf{0.456} \\
 & 720 & 0.512 & \underline{0.498} & \textbf{0.496} & \textbf{0.496} & \underline{0.500} & 0.501 & 0.512 & 0.503 & \underline{0.447} & \textbf{0.462} & 0.450 & \underline{0.463} & 0.450 & \underline{0.463} & \textbf{0.446} & \textbf{0.462} & 0.522 & 0.502 & \textbf{0.499} & \textbf{0.486} & \underline{0.501} & \underline{0.487} & 0.508 & 0.494 \\
\midrule
ETTh2 & 96 & 0.317 & 0.361 & \underline{0.306} & \underline{0.355} & 0.307 & 0.356 & \textbf{0.303} & \textbf{0.353} & \textbf{0.274} & \textbf{0.335} & 0.277 & 0.338 & 0.294 & 0.348 & \underline{0.275} & \underline{0.336} & \underline{0.299} & 0.350 & \textbf{0.295} & \textbf{0.346} & \textbf{0.295} & \underline{0.347} & \underline{0.299} & 0.350 \\
 & 192 & 0.416 & 0.419 & 0.397 & 0.408 & \underline{0.394} & \underline{0.405} & \textbf{0.378} & \textbf{0.399} & 0.380 & 0.401 & 0.373 & 0.397 & \textbf{0.339} & \textbf{0.377} & \underline{0.342} & \underline{0.379} & \underline{0.378} & \underline{0.398} & \textbf{0.376} & \textbf{0.396} & \textbf{0.376} & \textbf{0.396} & 0.381 & 0.399 \\
 & 336 & \textbf{0.423} & \textbf{0.432} & \underline{0.438} & 0.440 & 0.439 & \underline{0.437} & 0.448 & 0.446 & 0.390 & 0.413 & 0.389 & 0.413 & \underline{0.383} & \underline{0.411} & \textbf{0.377} & \textbf{0.406} & 0.424 & \underline{0.433} & \underline{0.419} & \textbf{0.430} & \textbf{0.418} & \textbf{0.430} & 0.424 & 0.434 \\
 & 720 & 0.464 & 0.464 & \textbf{0.437} & \textbf{0.447} & \underline{0.440} & \underline{0.448} & \underline{0.440} & 0.452 & \textbf{0.391} & \textbf{0.429} & \underline{0.393} & \underline{0.430} & 0.404 & 0.440 & 0.396 & 0.432 & \underline{0.428} & \underline{0.447} & \textbf{0.425} & \textbf{0.444} & \textbf{0.425} & \textbf{0.444} & \underline{0.428} & \underline{0.447} \\
\midrule
ETTm1 & 96 & \underline{0.354} & \underline{0.382} & 0.371 & 0.394 & 0.358 & 0.387 & \textbf{0.342} & \textbf{0.376} & 0.291 & 0.343 & \underline{0.288} & 0.342 & \textbf{0.283} & \textbf{0.339} & \underline{0.288} & \underline{0.340} & 0.347 & \underline{0.379} & 0.343 & \textbf{0.373} & \textbf{0.333} & \textbf{0.373} & \underline{0.335} & \textbf{0.373} \\
 & 192 & \textbf{0.381} & \textbf{0.398} & 0.403 & \underline{0.411} & 0.404 & 0.412 & \underline{0.382} & \textbf{0.398} & 0.331 & \underline{0.368} & \textbf{0.326} & \textbf{0.366} & 0.332 & 0.372 & \underline{0.329} & \underline{0.368} & 0.380 & 0.394 & 0.377 & \underline{0.392} & \textbf{0.373} & \textbf{0.390} & \underline{0.374} & \underline{0.392} \\
 & 336 & \textbf{0.414} & \textbf{0.418} & 0.432 & 0.429 & 0.441 & 0.434 & \underline{0.422} & \underline{0.425} & 0.370 & 0.394 & \underline{0.366} & \textbf{0.387} & 0.368 & \textbf{0.387} & \textbf{0.365} & \underline{0.391} & 0.418 & 0.418 & \underline{0.408} & \textbf{0.411} & 0.409 & \underline{0.413} & \textbf{0.407} & 0.415 \\
 & 720 & 0.501 & 0.472 & 0.497 & 0.467 & \underline{0.489} & \underline{0.464} & \textbf{0.480} & \textbf{0.461} & 0.415 & 0.422 & \textbf{0.412} & \textbf{0.417} & 0.418 & 0.422 & \underline{0.413} & \underline{0.421} & 0.488 & 0.457 & \underline{0.478} & \underline{0.450} & \textbf{0.472} & \textbf{0.447} & 0.481 & 0.454 \\
\midrule
ETTm2 & 96 & 0.182 & \underline{0.262} & \underline{0.180} & 0.263 & \underline{0.180} & 0.266 & \textbf{0.177} & \textbf{0.258} & \textbf{0.163} & \textbf{0.253} & 0.166 & \underline{0.255} & 0.167 & 0.256 & \underline{0.165} & \textbf{0.253} & 0.184 & 0.269 & \textbf{0.178} & \textbf{0.264} & \underline{0.180} & \underline{0.266} & 0.184 & 0.267 \\
 & 192 & \textbf{0.239} & \underline{0.305} & \underline{0.244} & \textbf{0.304} & 0.261 & 0.325 & 0.247 & 0.309 & \underline{0.222} & \underline{0.295} & \underline{0.222} & \textbf{0.293} & 0.223 & \underline{0.295} & \textbf{0.221} & \textbf{0.293} & 0.253 & 0.313 & \textbf{0.245} & \textbf{0.307} & \underline{0.247} & \underline{0.308} & 0.250 & 0.310 \\
 & 336 & \textbf{0.299} & \textbf{0.342} & 0.350 & 0.379 & 0.363 & 0.388 & \underline{0.338} & \underline{0.369} & \textbf{0.273} & \textbf{0.327} & 0.277 & \underline{0.330} & 0.278 & 0.331 & \underline{0.274} & \textbf{0.327} & \underline{0.313} & \underline{0.351} & \textbf{0.311} & \textbf{0.348} & \textbf{0.311} & \textbf{0.348} & 0.315 & 0.352 \\
 & 720 & 0.527 & 0.472 & 0.533 & 0.480 & \textbf{0.440} & \textbf{0.430} & \underline{0.488} & \underline{0.449} & 0.368 & 0.386 & \textbf{0.360} & \textbf{0.383} & \underline{0.361} & \underline{0.384} & 0.367 & \textbf{0.383} & 0.412 & 0.405 & \underline{0.411} & \underline{0.404} & \textbf{0.410} & \textbf{0.403} & \underline{0.411} & \underline{0.404} \\
\midrule
Exchange & 96 & 0.088 & 0.206 & \textbf{0.080} & \textbf{0.199} & \underline{0.081} & \underline{0.200} & 0.084 & 0.204 & 0.367 & 0.367 & \underline{0.122} & \underline{0.250} & \textbf{0.120} & \textbf{0.248} & 0.149 & 0.270 & \underline{0.087} & 0.207 & \textbf{0.085} & \textbf{0.204} & \textbf{0.085} & \underline{0.205} & \underline{0.087} & 0.207 \\
 & 192 & 0.173 & \underline{0.302} & \textbf{0.168} & \textbf{0.297} & \underline{0.172} & \underline{0.302} & 0.176 & 0.307 & 0.499 & 0.458 & \underline{0.322} & \underline{0.424} & \textbf{0.306} & \textbf{0.411} & 0.460 & 0.458 & \underline{0.178} & \underline{0.301} & \textbf{0.177} & \textbf{0.299} & \underline{0.178} & \textbf{0.299} & \underline{0.178} & \underline{0.301} \\
 & 336 & \textbf{0.277} & \textbf{0.389} & 0.289 & 0.398 & 0.280 & 0.392 & \underline{0.279} & \underline{0.391} & 0.575 & 0.557 & \underline{0.537} & \underline{0.542} & \textbf{0.444} & \textbf{0.509} & 0.774 & 0.625 & 0.333 & \underline{0.419} & \textbf{0.331} & \textbf{0.418} & 0.333 & \underline{0.419} & \underline{0.332} & \textbf{0.418} \\
 & 720 & 0.524 & 0.556 & 0.518 & 0.551 & \textbf{0.499} & \textbf{0.542} & \underline{0.508} & \underline{0.546} & \textbf{1.250} & \textbf{0.821} & \underline{1.289} & \underline{0.833} & 1.322 & 0.863 & 1.294 & 0.849 & 0.853 & 0.696 & \textbf{0.846} & \textbf{0.693} & \underline{0.847} & \underline{0.694} & 0.853 & 0.696 \\
\midrule
Traffic & 96 & \textbf{0.422} & \textbf{0.274} & 0.434 & \textbf{0.274} & 0.448 & \underline{0.278} & \underline{0.431} & \textbf{0.274} & \underline{0.366} & \textbf{0.250} & 0.368 & \underline{0.254} & 0.370 & 0.257 & \textbf{0.365} & \textbf{0.250} & 0.401 & 0.272 & 0.400 & 0.271 & \underline{0.399} & \underline{0.270} & \textbf{0.393} & \textbf{0.267} \\
 & 192 & \underline{0.447} & \underline{0.288} & 0.453 & 0.291 & 0.449 & 0.292 & \textbf{0.442} & \textbf{0.279} & \underline{0.385} & \underline{0.258} & 0.386 & 0.260 & 0.387 & 0.262 & \textbf{0.381} & \textbf{0.256} & \underline{0.415} & \underline{0.278} & 0.417 & 0.279 & 0.416 & 0.280 & \textbf{0.412} & \textbf{0.274} \\
 & 336 & 0.487 & \underline{0.300} & \underline{0.474} & 0.302 & 0.484 & 0.307 & \textbf{0.461} & \textbf{0.286} & \underline{0.398} & \underline{0.266} & 0.400 & 0.267 & 0.408 & 0.277 & \textbf{0.396} & \textbf{0.265} & \underline{0.426} & \textbf{0.287} & 0.428 & 0.291 & 0.427 & \underline{0.289} & \textbf{0.423} & \underline{0.289} \\
 & 720 & 0.509 & \underline{0.308} & \textbf{0.497} & 0.315 & \underline{0.498} & 0.317 & 0.510 & \textbf{0.307} & 0.453 & 0.308 & \textbf{0.434} & \textbf{0.286} & \underline{0.436} & \underline{0.290} & 0.453 & 0.308 & \underline{0.447} & \underline{0.296} & \underline{0.447} & 0.305 & \textbf{0.446} & \textbf{0.295} & 0.450 & \textbf{0.295} \\
\midrule
AvgRank & & \multicolumn{2}{c}{2.89} & \multicolumn{2}{c}{2.42} & \multicolumn{2}{c}{\textbf{2.14}} & \multicolumn{2}{c|}{\underline{2.36}} & \multicolumn{2}{c}{2.67} & \multicolumn{2}{c}{\underline{2.56}} & \multicolumn{2}{c}{2.69} & \multicolumn{2}{c|}{\textbf{1.78}} & \multicolumn{2}{c}{2.94} & \multicolumn{2}{c}{2.22} & \multicolumn{2}{c}{\textbf{1.89}} & \multicolumn{2}{c}{\underline{2.17}} \\
\#Top1 & & \multicolumn{2}{c}{8} & \multicolumn{2}{c}{\underline{10}} & \multicolumn{2}{c}{\textbf{12}} & \multicolumn{2}{c|}{8} & \multicolumn{2}{c}{\underline{8}} & \multicolumn{2}{c}{6} & \multicolumn{2}{c}{6} & \multicolumn{2}{c|}{\textbf{18}} & \multicolumn{2}{c}{3} & \multicolumn{2}{c}{11} & \multicolumn{2}{c}{\textbf{14}} & \multicolumn{2}{c}{\underline{13}} \\
\bottomrule
\end{tabular}
}
\end{scriptsize}
\end{table*}

\begin{table*}[tb]
\caption{Summary of Classification Results. Accuracies are reported. Best results are in \textbf{bold} and second best are \underline{underlined}.}
\label{mainresult_classification}
\centering
\begin{scriptsize}
\setlength{\tabcolsep}{1pt}
\resizebox{\textwidth}{!}{
\begin{tabular}{l|ccccccc}
\toprule
\multirow{2}{*}{Dataset} & \multicolumn{7}{c}{PatchTST} \\
\cmidrule{2-8}
 & \parbox{1.5cm}{\centering Softmax} & \parbox{1.5cm}{\centering HyperGLU (DPLR)} & \parbox{1.5cm}{\centering TOA-Gated} & \parbox{1.5cm}{\centering TOA-Gated w/o SOR} & \parbox{1.5cm}{\centering TOA-ReLU} & \parbox{1.5cm}{\centering TOA-ReLU w/o SOR} & \parbox{1.5cm}{\centering TOA-Softmax} \\
\midrule
ArticularyWordRecognition & \textbf{0.9800} & 0.9756 & \textbf{0.9800} & \underline{0.9767} & \textbf{0.9800} & \underline{0.9767} & \textbf{0.9800} \\
AtrialFibrillation & 0.4000 & 0.4222 & \textbf{0.6000} & 0.3778 & \underline{0.5333} & \underline{0.5333} & 0.4000 \\
BasicMotions & 0.6167 & 0.6083 & \textbf{0.7250} & 0.6167 & \underline{0.7000} & \textbf{0.7250} & 0.6083 \\
CharacterTrajectories & 0.9654 & 0.9584 & \textbf{0.9721} & 0.9631 & \underline{0.9708} & 0.9673 & 0.9652 \\
Cricket & 0.9167 & 0.8981 & 0.9028 & 0.8750 & \underline{0.9306} & \textbf{0.9861} & 0.8981 \\
DuckDuckGeese & 0.2000 & 0.1467 & \underline{0.2600} & 0.1867 & 0.2400 & \textbf{0.2800} & 0.2133 \\
ERing & 0.9457 & 0.8667 & \underline{0.9519} & 0.9370 & \underline{0.9519} & \textbf{0.9556} & 0.8778 \\
Epilepsy & \textbf{0.9589} & 0.9227 & \underline{0.9565} & 0.9275 & 0.9275 & 0.9348 & \underline{0.9565} \\
EthanolConcentration & 0.2421 & 0.2649 & \textbf{0.2814} & 0.2535 & \underline{0.2776} & \textbf{0.2814} & 0.2307 \\
FaceDetection & \underline{0.6642} & 0.6449 & 0.6552 & 0.6467 & \textbf{0.6725} & 0.6524 & 0.6557 \\
FingerMovements & 0.5533 & 0.4867 & \underline{0.6100} & 0.5167 & \textbf{0.6200} & 0.5800 & 0.5333 \\
HandMovementDirection & 0.4820 & 0.4369 & \textbf{0.5405} & 0.4550 & \textbf{0.5405} & \underline{0.5270} & 0.4369 \\
Handwriting & 0.2322 & 0.2259 & 0.2553 & 0.2102 & \textbf{0.2671} & \underline{0.2612} & 0.2267 \\
Heartbeat & 0.6488 & 0.6683 & 0.6829 & 0.6667 & \underline{0.7171} & \textbf{0.7220} & 0.6894 \\
InsectWingbeat & 0.5756 & 0.5857 & \underline{0.6034} & 0.5798 & 0.6024 & \textbf{0.6121} & 0.5718 \\
JapaneseVowels & 0.9441 & 0.9450 & \underline{0.9514} & 0.9441 & \textbf{0.9541} & \underline{0.9514} & 0.9468 \\
LSST & 0.4784 & 0.4674 & 0.4862 & 0.4839 & \textbf{0.5008} & \underline{0.4911} & 0.4757 \\
Libras & 0.7407 & 0.7111 & \underline{0.7722} & 0.7093 & \textbf{0.7778} & 0.7667 & 0.7444 \\
NATOPS & 0.7481 & \underline{0.7593} & \textbf{0.7611} & 0.7370 & 0.7556 & \textbf{0.7611} & \underline{0.7593} \\
PEMS-SF & 0.8170 & 0.7842 & \textbf{0.8728} & 0.7534 & 0.8382 & \underline{0.8555} & 0.7765 \\
PenDigits & 0.9584 & 0.9580 & 0.9694 & 0.9688 & \textbf{0.9711} & \underline{0.9706} & 0.9623 \\
PhonemeSpectra & 0.1179 & 0.0984 & \textbf{0.1265} & 0.0968 & \underline{0.1208} & 0.1133 & 0.1145 \\
RacketSports & 0.7237 & 0.7522 & 0.7697 & 0.7346 & \underline{0.7763} & \textbf{0.7895} & 0.7434 \\
SelfRegulationSCP1 & 0.7873 & 0.8271 & \underline{0.8498} & 0.8089 & 0.8396 & \textbf{0.8532} & 0.8020 \\
SelfRegulationSCP2 & 0.4796 & 0.4944 & \textbf{0.5389} & 0.4907 & \underline{0.5278} & 0.4944 & 0.4648 \\
SpokenArabicDigits & 0.9650 & 0.9767 & \underline{0.9827} & 0.9794 & \textbf{0.9832} & 0.9795 & 0.9674 \\
StandWalkJump & 0.4000 & 0.3556 & \underline{0.4667} & 0.4000 & \textbf{0.5333} & \textbf{0.5333} & 0.3111 \\
UWaveGestureLibrary & 0.8323 & 0.8260 & 0.8406 & 0.8302 & \underline{0.8469} & \textbf{0.8562} & 0.8302 \\
\midrule
AvgRank & 4.768 & 5.696 & \underline{2.250} & 5.679 & \textbf{2.179} & 2.321 & 5.107 \\
\#Top1 & 2 & 0 & 10 & 0 & \underline{11} & \textbf{12} & 1 \\
\bottomrule
\end{tabular}
}
\end{scriptsize}
\end{table*}

\begin{table*}[h!]
\caption{Full PEMS Results. We report MSE and MAE. Best results are in \textbf{bold} and second best are \underline{underlined} (compared within each backbone group). AvgRank and \#Top1 are calculated based on MSE across all horizons.}
\label{tab:full_results_pems}
\centering
\begin{scriptsize}
\setlength{\tabcolsep}{2pt}
\resizebox{\textwidth}{!}{
\begin{tabular}{ll|cccccccc|cccccccc}
\toprule
\multicolumn{2}{c|}{\multirow{2}{*}{Model}} & \multicolumn{8}{c|}{PatchTST} & \multicolumn{8}{c}{iTransformer} \\
\cmidrule(lr){3-10} \cmidrule(lr){11-18}
\multicolumn{2}{c|}{} & \multicolumn{2}{c}{Softmax-Attention} & \multicolumn{2}{c}{TOA-Gated} & \multicolumn{2}{c}{TOA-ReLU} & \multicolumn{2}{c}{TOA-Softmax} & \multicolumn{2}{c}{Softmax-Attention} & \multicolumn{2}{c}{TOA-Gated} & \multicolumn{2}{c}{TOA-ReLU} & \multicolumn{2}{c}{TOA-Softmax} \\
Dataset & Len & MSE & MAE & MSE & MAE & MSE & MAE & MSE & MAE & MSE & MAE & MSE & MAE & MSE & MAE & MSE & MAE \\
\midrule
PEMS03 & 12 & \underline{0.058} & \textbf{0.157} & \textbf{0.057} & \textbf{0.157} & \textbf{0.057} & \textbf{0.157} & \underline{0.058} & \underline{0.159} & \underline{0.066} & \textbf{0.171} & 0.067 & \underline{0.172} & 0.067 & \textbf{0.171} & \textbf{0.064} & \underline{0.172} \\
 & 24 & 0.073 & 0.176 & \underline{0.072} & \textbf{0.174} & \underline{0.072} & \underline{0.175} & \textbf{0.071} & \underline{0.175} & 0.094 & 0.204 & \textbf{0.092} & \textbf{0.200} & \underline{0.093} & \underline{0.202} & 0.096 & 0.207 \\
 & 48 & 0.101 & 0.203 & \textbf{0.094} & \textbf{0.197} & 0.098 & 0.201 & \underline{0.096} & \underline{0.200} & 0.158 & \underline{0.268} & \underline{0.148} & \textbf{0.257} & \textbf{0.147} & \textbf{0.257} & 0.163 & 0.273 \\
 & 96 & 0.131 & 0.232 & \underline{0.125} & \textbf{0.222} & 0.131 & 0.227 & \textbf{0.123} & \underline{0.223} & 0.281 & 0.372 & \textbf{0.234} & \textbf{0.330} & \underline{0.236} & \underline{0.333} & 0.275 & 0.364 \\
\midrule
PEMS04 & 12 & \textbf{0.071} & \textbf{0.169} & \textbf{0.071} & \underline{0.172} & \textbf{0.071} & \underline{0.172} & \underline{0.072} & \underline{0.172} & \underline{0.077} & 0.182 & \textbf{0.076} & \textbf{0.180} & \textbf{0.076} & \underline{0.181} & \underline{0.077} & 0.182 \\
 & 24 & 0.083 & 0.184 & \underline{0.082} & \underline{0.183} & \textbf{0.081} & \textbf{0.181} & \underline{0.082} & 0.185 & 0.095 & 0.204 & \textbf{0.092} & \textbf{0.200} & \textbf{0.092} & \textbf{0.200} & \underline{0.093} & \underline{0.202} \\
 & 48 & 0.100 & 0.202 & 0.096 & \textbf{0.195} & \underline{0.095} & \textbf{0.195} & \textbf{0.094} & \underline{0.196} & 0.129 & 0.242 & \textbf{0.115} & \textbf{0.226} & \underline{0.116} & \textbf{0.226} & 0.119 & \underline{0.232} \\
 & 96 & 0.124 & \underline{0.227} & \underline{0.111} & \textbf{0.211} & \underline{0.111} & \textbf{0.211} & \textbf{0.110} & \textbf{0.211} & 0.160 & 0.271 & \underline{0.141} & \underline{0.251} & \textbf{0.138} & \textbf{0.249} & 0.146 & 0.257 \\
\midrule
PEMS07 & 12 & \textbf{0.048} & \textbf{0.138} & 0.055 & 0.153 & \underline{0.050} & \underline{0.144} & \underline{0.050} & 0.145 & \underline{0.062} & \textbf{0.158} & \underline{0.062} & \underline{0.159} & \textbf{0.061} & \textbf{0.158} & 0.064 & \underline{0.159} \\
 & 24 & \textbf{0.057} & \underline{0.150} & \textbf{0.057} & \textbf{0.147} & \textbf{0.057} & 0.153 & \textbf{0.057} & 0.151 & 0.082 & 0.183 & \underline{0.080} & \textbf{0.181} & \textbf{0.079} & \underline{0.182} & 0.085 & 0.186 \\
 & 48 & 0.071 & 0.168 & 0.072 & 0.170 & \underline{0.069} & \underline{0.166} & \textbf{0.068} & \textbf{0.164} & 0.123 & 0.230 & \underline{0.104} & \underline{0.206} & \textbf{0.102} & \textbf{0.205} & 0.275 & 0.355 \\
 & 96 & 0.091 & 0.193 & 0.092 & 0.195 & \underline{0.082} & \textbf{0.177} & \textbf{0.080} & \underline{0.178} & 0.366 & 0.432 & \underline{0.127} & \textbf{0.230} & \textbf{0.125} & \textbf{0.230} & \underline{0.127} & \underline{0.234} \\
\midrule
PEMS08 & 12 & 0.067 & \underline{0.161} & \textbf{0.064} & \textbf{0.157} & \underline{0.065} & \underline{0.161} & 0.066 & 0.163 & 0.084 & 0.188 & \underline{0.083} & \underline{0.186} & \underline{0.083} & \underline{0.186} & \textbf{0.082} & \textbf{0.184} \\
 & 24 & 0.088 & 0.180 & \underline{0.081} & \textbf{0.172} & \textbf{0.080} & \underline{0.174} & 0.083 & 0.177 & 0.130 & 0.235 & \textbf{0.124} & \textbf{0.228} & \underline{0.125} & \underline{0.229} & \underline{0.125} & \underline{0.229} \\
 & 48 & 0.125 & 0.201 & \textbf{0.117} & \underline{0.190} & \underline{0.122} & \textbf{0.189} & 0.123 & 0.193 & 0.235 & 0.279 & \underline{0.188} & \textbf{0.230} & \textbf{0.185} & \underline{0.231} & 0.267 & 0.308 \\
 & 96 & 0.191 & 0.224 & \underline{0.162} & \textbf{0.205} & 0.164 & \underline{0.207} & \textbf{0.158} & \textbf{0.205} & 0.280 & 0.313 & \underline{0.231} & \underline{0.267} & \textbf{0.223} & \textbf{0.264} & 0.258 & 0.296 \\
\midrule
AvgRank & & \multicolumn{2}{c}{3.19} & \multicolumn{2}{c}{\underline{2.06}} & \multicolumn{2}{c}{\textbf{1.88}} & \multicolumn{2}{c|}{\textbf{1.88}} & \multicolumn{2}{c}{3.38} & \multicolumn{2}{c}{\underline{1.69}} & \multicolumn{2}{c}{\textbf{1.44}} & \multicolumn{2}{c}{3.00} \\
\#Top1 & & \multicolumn{2}{c}{3} & \multicolumn{2}{c}{\underline{6}} & \multicolumn{2}{c}{5} & \multicolumn{2}{c|}{\textbf{8}} & \multicolumn{2}{c}{0} & \multicolumn{2}{c}{\underline{6}} & \multicolumn{2}{c}{\textbf{10}} & \multicolumn{2}{c}{2} \\
\bottomrule
\end{tabular}
}
\end{scriptsize}
\end{table*}

\begin{table*}[h!]
\caption{Full TSF Ablation Results on Stochastic Operator Regularization. We report MSE. Best results are in \textbf{bold} and second best are \underline{underlined} (compared within each group). AvgRank and \#Top1 are calculated based on MSE across all horizons within each group.}
\label{tab:ablation_results}
\centering
\begin{scriptsize}
\setlength{\tabcolsep}{4pt}
\begin{tabular}{ll|cc|cc}
\toprule
\multicolumn{2}{c|}{\multirow{2}{*}{Model}} & \multicolumn{2}{c|}{TOA-Gated} & \multicolumn{2}{c}{TOA-ReLU} \\
\cmidrule{2-6}
\multicolumn{2}{c|}{} & w/o Dropout & w/ Dropout & w/o Dropout & w/ Dropout \\
Dataset & Len & MSE & MSE & MSE & MSE \\
\midrule
Weather & 96 & \underline{0.157} & \textbf{0.154} & \underline{0.154} & \textbf{0.150} \\
 & 192 & \underline{0.202} & \textbf{0.192} & \underline{0.198} & \textbf{0.195} \\
 & 336 & \underline{0.254} & \textbf{0.250} & \underline{0.260} & \textbf{0.247} \\
 & 720 & \underline{0.337} & \textbf{0.324} & \underline{0.343} & \textbf{0.321} \\
\midrule
ETTh1 & 96 & \textbf{0.370} & \underline{0.371} & \underline{0.371} & \textbf{0.370} \\
 & 192 & \underline{0.411} & \textbf{0.408} & \underline{0.412} & \textbf{0.410} \\
 & 336 & \textbf{0.434} & \underline{0.446} & \underline{0.438} & \textbf{0.433} \\
 & 720 & \underline{0.478} & \textbf{0.458} & \textbf{0.462} & \underline{0.470} \\
\midrule
ETTh2 & 96 & \underline{0.299} & \textbf{0.280} & \underline{0.300} & \textbf{0.285} \\
 & 192 & \underline{0.396} & \textbf{0.364} & \underline{0.387} & \textbf{0.354} \\
 & 336 & \underline{0.410} & \textbf{0.407} & \underline{0.419} & \textbf{0.384} \\
 & 720 & \underline{0.404} & \textbf{0.390} & \underline{0.416} & \textbf{0.393} \\
\midrule
ETTm1 & 96 & \underline{0.291} & \textbf{0.291} & \underline{0.295} & \textbf{0.292} \\
 & 192 & \textbf{0.331} & \underline{0.347} & \textbf{0.327} & \underline{0.335} \\
 & 336 & \textbf{0.362} & \underline{0.369} & \textbf{0.370} & \textbf{0.370} \\
 & 720 & \underline{0.423} & \textbf{0.415} & \textbf{0.423} & \underline{0.426} \\
\midrule
ETTm2 & 96 & \underline{0.177} & \textbf{0.166} & \underline{0.179} & \textbf{0.170} \\
 & 192 & \underline{0.245} & \textbf{0.228} & \underline{0.239} & \textbf{0.231} \\
 & 336 & \underline{0.290} & \textbf{0.281} & \underline{0.278} & \textbf{0.277} \\
 & 720 & \underline{0.373} & \textbf{0.362} & \underline{0.397} & \textbf{0.366} \\
\midrule
AvgRank &  & \underline{1.80} & \textbf{1.20} & \underline{1.80} & \textbf{1.15} \\
\#Top1 &  & \underline{4} & \textbf{16} & \underline{4} & \textbf{17} \\
\bottomrule
\end{tabular}
\end{scriptsize}
\end{table*}

\section{Datasets and Experimental Details}
\label{Exp-details}
\subsection{Experimental Datasets}

We evaluate TOA on nine widely used multivariate time series benchmark datasets for long term time series forecasting and 4 datasets for short term time series forecasting. Table \ref{tab:full_results} summarizes the statistics of TOA's performance on these datasets. We provide the full names and key characteristics for each below:

\textbf{Weather Dataset \citep{wu2021autoformer}:} Comprises 21 meteorological indicators (e.g., air temperature, humidity, and wind speed) recorded at 10-minute intervals throughout 2020 at the Max Planck Biogeochemistry Institute.

\textbf{Solar Dataset \citep{lai2018modelinglongshorttermtemporal}:} Tracks solar power production from 137 photovoltaic plants in Alabama, with observations collected every 10 minutes throughout the year 2006.

\textbf{Electricity Dataset \citep{wu2021autoformer}:} Records the hourly electricity consumption of 321 users over a three-year period (2012–2014), capturing both periodic patterns and long-term dependencies.

\textbf{ETT Dataset \citep{zhou2021informer}:} Monitors oil temperature and load metrics from electricity transformers over two years (2016–2018) at both 15-minute (ETTm1, ETTm2) and hourly (ETTh1, ETTh2) resolutions.

\textbf{Exchange Dataset \citep{lai2018modelinglongshorttermtemporal}:} Contains daily exchange rate records for eight countries from 1990 to 2016, providing a longitudinal view of global financial market fluctuations.

\textbf{Traffic Dataset \citep{wu2021autoformer}:} Provides hourly road occupancy rates collected from 862 freeway sensors in the San Francisco Bay Area over a continuous two-year period (2015–2016).

\textbf{PEMS Dataset \citep{li2018diffusionconvolutionalrecurrentneural}:} Captures spatial-temporal traffic dynamics via 5-minute sensor samples in California; this study utilizes the PEMS03, PEMS04, and PEMS08 subsets as standard benchmarks.

For anomaly detection and classification, we follow the standard benchmark suites provided by the Time-Series Library (TSLib). Specifically, anomaly detection is evaluated on the five widely used TSLib benchmarks PSM, MSL, SMAP, SMD, and SWAT, while classification is evaluated on the 28 multivariate datasets from the UEA archive used in the TSLib evaluation pipeline. Unless otherwise noted, we adopt the default dataset splits and preprocessing conventions provided by TSLib for these tasks.

\subsection{Hyper-parameter Settings and Implementation Details}

\textbf{Model Architecture - Forecasting Tasks}. For long- and short-term multivariate forecasting, we integrate Temporal Operator Attention (TOA) into standard time-series backbones, including PatchTST, iTransformer, DUET, and Transformer, by replacing the backbone’s sequence-mixing attention primitive while leaving the surrounding feed-forward blocks unchanged. In all TOA variants, the temporal operators are parameterized as residual dense mixers, $\mathbf{S}_1=\mathbf{I}+\mathbf{M}_1$ and $\mathbf{S}_2=\mathbf{I}+\mathbf{M}_2$, with the learned offsets initialized at small scale ($\sigma_M=10^{-3}$). We evaluate three variants: TOA-Softmax, TOA-Relu, and TOA-Gated. For the forecasting benchmarks, we retain the official or official-aligned backbone hyper-parameters used by each upstream model family. PatchTST follows the standard long-horizon configuration with patch length 16 and stride 8, with dataset-dependent widths and depths taken from the official scripts. iTransformer uses its official aligned encoder-only setup with two encoder layers, eight attention heads, GELU activations, and input normalization enabled. DUET follows its official unified or nearest-official PEMS profile, using mixture-of-experts style sequence modeling with four experts, top-$k=2$, hidden size 256, and channel-independent processing. Prediction horizons are \{96, 192, 336, 720\} for long-horizon forecasting and \{12, 24, 48, 96\} for the PEMS short-horizon benchmarks.

\textbf{Model Architecture - Classification and Anomaly Detection}. For time-series classification and anomaly detection, we use the TSLib benchmark suite with TOA inserted into PatchTST, iTransformer, and Transformer backbones. In classification, the default model width is $d_{\text{model}}=128$ with $d_{\text{ff}}=256$, three encoder layers, eight attention heads, dropout 0.1, and patch length 16 where applicable. The effective input sequence length is set dynamically to the maximum sequence length of each dataset at runtime. For anomaly detection, we use the standard reconstruction-based setup with three encoder layers, eight attention heads, dropout 0.1, and dataset-specific input dimensionalities; typical anomaly configurations use $d_{\text{model}}=128$ and short training schedules of 3 to 10 epochs depending on the dataset. TOA is applied in the same noncausal setting as in forecasting, using identity alignment and backbone-specific maximum sequence lengths for the dense operators.

\textbf{Model Architecture - Synthetic Tasks}. For the synthetic harmonic demixing experiments, we use a deliberately capacity-limited setup to emphasize primitive-level differences in sequence mixing. In particular, the synthetic experiments use shallow backbones with two layers and two attention heads, as described in the main text, so that improvements cannot be attributed simply to scale. The same TOA parameterization is used as in the real-data experiments, with dense residual temporal operators and noncausal sequence mixing.

\textbf{Data Processing Strategy}. For forecasting, we follow the preprocessing conventions of the underlying backbone families and their official scripts. PatchTST uses RevIN-style normalization where specified by the original configuration; iTransformer uses its standard input normalization over the observed context window; and DUET uses channel-independent processing where applicable. All forecasting runs are performed in the multivariate setting. For classification, we use the standard UEA-format pipeline with a validation split ratio of 0.2. For anomaly detection, we use the dataset-specific TSLib preprocessing pipelines and anomaly ratios defined by the benchmark configuration.

\textbf{Training and Hardware}. All training tasks in this paper were conducted using a single H200. For forecasting in the TSF benchmark stack, we use the AdamW optimizer with $\beta=(0.9,0.95)$ and MSE loss. PatchTST and iTransformer use learning rate $10^{-4}$, while DUET uses learning rate $5\times10^{-4}$. Gradient accumulation is used for larger datasets such as Electricity, Solar, Traffic, and the PEMS benchmarks to maintain stable effective batch sizes. For classification, we use the RAdam optimizer with cross-entropy loss, learning rate $10^{-3}$, batch size 16, 100 training epochs, and early stopping patience 10. For anomaly detection, we use the Adam optimizer with MSE reconstruction loss, learning rate $10^{-4}$, batch size typically 128, and early stopping patience 3. In forecasting and anomaly detection, we use random seed 2024, while the classification sweeps are evaluated over three seeds \{2024, 2025, 2026\}.

\section{Computational Efficiency Analysis}
\label{subsec:efficiency}
While replacing row-normalized similarity transport with explicit dense sequence operators inherently introduces an additional parameter footprint, the practical computational overhead remains manageable. A natural theoretical concern with unconstrained $N \times N$ mixers is that they might significantly degrade training throughput, potentially undoing the speed advantages of modern lightweight backbones. 

To contextualize this overhead, we compare the training step times of TOA-augmented models against well-established architectures. As detailed in Table~\ref{tab:efficiency_analysis}, integrating the \textbf{TOA-ReLU} or \textbf{TOA-Gated} mechanisms into PatchTST results in 18.88 ms per training step on the ETTh1 dataset. While this is an expected increase over the base PatchTST model, it remains comparable to standard canonical models such as Informer (19.57 ms) and requires less than half the time of Autoformer (45.55 ms). Similarly, an iTransformer equipped with our dense TOA variants processes a step in 12.92 ms. These results indicate that the representational capacity gained by removing the simplex constraint can be incorporated without exceeding the typical computational budgets established by prior Transformer-based forecasting models.

\begin{table}[htbp]
\centering
\caption{\textbf{Computational Efficiency Comparison.} Average training step time (ms) on the ETTh1 dataset. While the inclusion of explicit dense sequence operators introduces an expected overhead relative to the standard baselines, the overall step time of TOA-augmented backbones remains lower than that of earlier architectures such as Informer and Autoformer.}
\label{tab:efficiency_analysis}
\small
\begin{tabular}{@{}llc@{}}
\toprule
\textbf{Architecture} & \textbf{Sequence Operator} & \textbf{ETTh1 Train Step (ms)} \\
\midrule
Autoformer & Auto-Correlation & 45.55 \\
Informer & ProbSparse Attention & 19.57 \\
\midrule
\multirow{3}{*}{PatchTST} 
& Standard Attention (Baseline) & 10.57 \\
& \textbf{TOA-Softmax} (Ablation) & 13.67 \\
& \textbf{TOA-ReLU / TOA-Gated} & 18.88 \\
\midrule
\multirow{3}{*}{iTransformer} 
& Standard Attention (Baseline) & 6.58 \\
& \textbf{TOA-Softmax} (Ablation) & 9.31 \\
& \textbf{TOA-ReLU / TOA-Gated} & 12.92 \\
\bottomrule
\end{tabular}
\end{table}

\newpage

\end{document}